%
%
%
%
%
\RequirePackage{fix-cm}
\documentclass[smallextended]{svjour3}       
\smartqed  
\usepackage{graphicx}
\usepackage{subcaption}
\usepackage{url}
\usepackage{booktabs}
%
%
%
%
%
\usepackage{lipsum, color}
\usepackage{comment}
\usepackage{all_math_commands}
\usepackage{soul}

\usepackage{amsmath,amssymb,amsfonts}
\newtheorem{assumption}[theorem]{Assumption}%

\DeclareMathOperator{\mean}{mean}
\DeclareMathOperator{\var}{var}

\DeclareMathOperator{\ConvOneD}{Conv1d}
\DeclareMathOperator{\EqConv}{EqConv}


\newcommand{\TimeStep}{{\Delta t}}

\newcommand{\RadiusMin}{{R_{\min}}}

\interfootnotelinepenalty=10000

\begin{document}
\newcommand{\gExpertName}[1][\FlockAgentIndex]{{\ensuremath{\va_{#1}}}}
\newcommand{\gExpertInput}[1][]{{\FlockPosMatrix#1, \FlockVelMatrix#1}}
\newcommand{\gGeneralizationGap}{\cR_{\GeneralizationSampleName,\LossFunctionName}}
\newcommand{\gConvInput}[1][\PupilLayerIndex]{{\PupilHistoryMatrix_{\FlockAgentIndex,#1}}}
\newcommand{\gConvInputAsMLP}[1][\PupilLayerIndex]{{\PupilHistoryMatrix_{\FlockAgentIndex,#1}^{\mathrm{MLP}}}}
\newcommand{\gConvInputWithBias}[1][\PupilLayerIndex]{{\widetilde{\gConvInput[#1]}}}
\newcommand{\gDAggerDatasetDatum}[1][]{{\gExpertInput[#1], \set{\PupilHistoryMatrix_\FlockAgentIndex#1}_{\FlockAgentIndex = 1}^\FlockAgentCount}}
\newcommand{\gConvChannelsIn}[1][\PupilLayerIndex]{{C_{#1}}}
\newcommand{\gConvFeaturesIn}[1][\PupilLayerIndex]{{F_{#1}}}

\title{
Learning Decentralized Swarms Using\\Rotation Equivariant Graph Neural Networks
}


\author{
  Taos Transue
  \and
  Bao Wang
}


\institute{
    Taos Transue \and Bao Wang
    \at
    Department of Mathematics, University of Utah, 155 South 1400 East, Salt Lake City, Utah 84112, USA.
    \and
    Bao Wang
    \at
    Scientific Computing and Imaging Institute, University of Utah, 72 S Central Campus Drive, Salt Lake City, Utah 84112, USA.\\
    Corresponding author \email{taos.j.transue@gmail.com} \\
    \email{wangbaonj@gmail.com}
}

\date{Received: date / Accepted: date}

\maketitle

\begin{abstract}
The orchestration of agents to optimize a collective objective without centralized control is challenging yet crucial for applications such as controlling autonomous fleets, and surveillance and reconnaissance using sensor networks. Decentralized controller design has been inspired by self-organization found in nature, with a prominent source of inspiration being flocking; however, decentralized controllers struggle to maintain flock cohesion. The graph neural network (GNN) architecture has emerged as an indispensable machine learning tool for developing decentralized controllers capable of maintaining flock cohesion, but they fail to exploit the symmetries present in flocking dynamics, hindering their generalizability. We enforce rotation equivariance and translation invariance symmetries in decentralized flocking GNN controllers and achieve comparable flocking control with 70\% less training data and 75\% fewer trainable weights than existing GNN controllers without these symmetries enforced. We also show that our symmetry-aware controller generalizes better than existing GNN controllers. Code and animations are available at \url{github.com/Utah-Math-Data-Science/Equivariant-Decentralized-Controllers}. 
\keywords{machine learning \and graph neural networks \and flocking \and equivariance \and computational learning theory}
\subclass{MSC 68T05 \and MSC 68Q32 \and MSC 68T42}
\end{abstract}

\section{Introduction}
Orchestrating a system of agents to optimize a collective objective is crucial in autonomous vehicle control \cite{perea2009extension}, surveillance and reconnaissance with sensor networks \cite{akyildiz2002survey}, and beyond \cite{hua2024towards}. Designing controllers for these systems is often inspired by self-organization in nature. A prominent source of inspiration is flocking: myxobacteria travel in clusters to increase the concentration of an enzyme they use to stun their prey, making predation more effective for each individual \cite{camazineSelforganizationBiologicalSystems2003}; pelicans follow a leader pelican in a vee formation, taking advantage of special aerodynamics to reduce the energy expenditure of flight \cite{weimerskirchEnergySavingFlight2001}; and schools of fish coordinate their movement in several formations, minimizing each individual's risk of predation \cite{parrishSelfOrganizedFishSchools2002}.

When designing a controller to optimize a collective objective, there are three pertinent questions: (1) What objective should an agent optimize individually? (2) What information about other agents or the environment does an agent need to act? (3) How do the actions of each agent culminate in optimizing the collective objective?
The first two questions were studied for flocking controllers in \cite{reynoldsFlocksHerdsSchools1987}, where Reynolds sought to reproduce bird flocking and created one of the earliest simulations of flock-like motion. His simulation had each agent (so-called ``bird-oid'' or ``boid'') balance the following three objectives:
\begin{itemize}
    \item \textbf{Alignment:} Each agent should steer toward its neighbors' average direction.
    \item \textbf{Cohesion:} Each agent should move toward the average position of its neighbors.
    \item \textbf{Separation:} Each agent should move away from its neighbors if they are too close.
\end{itemize}
To optimize these objectives, each agent used the position and velocity of all other agents. With qualitative answers to questions (1) and (2), Reynolds programmed agents that ``participate in an acceptable approximation of flock-like motion.''
Since flock-like motion was not yet quantified, question (3) remained unanswered.
Later, researchers quantified flock-like motion with
\textit{asymptotic flocking} and \textit{separation} (Definitions~\ref{def:asymptotic-flocking} and \ref{def:agent-separation}).
Now, question (3) is answered by arguing that a given controller induces asymptotic flocking.

Notable flocking controllers include those designed in \cite{tannerStableFlockingMobile2003a,cuckerEmergentBehaviorFlocks2007}.
In \cite{tannerStableFlockingMobile2003a}, Tanner et al. set the agents' accelerations equal to a term that encourages alignment plus the negative gradient of a potential function for cohesion and separation. If the graph representing what agents can exchange information is fully connected, asymptotic flocking and separation are guaranteed. 
In \cite{cuckerEmergentBehaviorFlocks2007}, Cucker and Smale define the agents' accelerations in a way that aligns their velocities while preserving the momentum of the flock. They achieve this with an ``influence function,'' which gauges an agent's influence on another as a function of their distance. The benefit of momentum conservation is that the limiting velocity of the flock can be computed from the flock's initial conditions. Cucker and Smale's controller guarantees asymptotic flocking, and guarantees separation with singular influence functions \cite{cuckerAvoidingCollisionsFlocks2010,minakowskiSingularCuckerSmaleDynamics2019}.
%
%
Further developments in flocking controllers can be viewed as adding constraints when optimizing a collective flocking objective. 
Developments that add constraints to Tanner et al.'s controller include leader following -- requiring the flock to follow a leader agent \cite{guLeaderFollowerFlocking2009}, and segregation -- flocking while having agents cluster into preassigned groups \cite{santosSegregationMultipleHeterogeneous2014}. Developments that add constraints to the Cucker-Smale controller include encouraging the agents to be a fixed distance $R>0$ from each other using a PID controller \cite{parkCuckerSmaleFlockingInterParticle2010}, and pattern formation via control terms added to the agents' acceleration \cite{ohFlockingBehaviorStochastic2021,choiCollisionlessSingularCuckerSmale2019,choiControlledPatternFormation2022}.

All these controllers require agents to exchange information with all others agents, making them {\bf centralized controllers}. 
As the number of agents 
grows, an increasingly prominent alternative is {\bf decentralized controllers} where agent communication is represented by a sparse graph.
In a flock of $\FlockAgentCount$ agents, a centralized controller requires each agent to 
communicate with 
$\FlockAgentCount-1$ agents, resulting in quadratic ($\cO\pn{\FlockAgentCount^2}$) communication overhead growth. 
Moreover, a centralized controller imposes a bound on $\FlockAgentCount$ when separation is required, and each agent has a maximum communication distance. Maximum communication distance constraints arise in wireless communications where the power required to send information over a distance $r$ is proportional to $r^p$ for some $p \in \hlpn{2, 4}$ \cite{akyildiz2002survey}. A maximum communication distance $\FlockAgentCommRadius$ and a minimum separation of $\FlockAgentMinDist$ bound $\FlockAgentCount$ because only a finite number of disjoint balls with diameter $2\FlockAgentMinDist$ can be packed into a ball whose diameter is the maximum diameter of the flock ($2\FlockAgentCommRadius$).
{\bf Decentralization is crucial for building 
scalable flocking controllers}, but compared to centralization, decentralization requires additional assumptions to guarantee asymptotic flocking and may not be compatible with other guarantees (e.g., separation).

Examples of decentralized controllers include those proposed by Tanner et al. \cite{tannerStableFlockingMobile2003a,tannerStableFlockingMobile2003}.
The controller proposed in \cite{tannerStableFlockingMobile2003a} uses a fixed connected communication graph to guarantee asymptotic flocking,\footnote{In practice, the communication graph can be constucted using the idea of virtual nodes; see e.g., \cite{9850408,doi:10.1137/21M1465081}.} but it does not guarantee separation because nonadjacent agents cannot communicate. Though flock cohesion is guaranteed, adjacent agents may need to communicate over arbitrarily large distances while the controller is active. Instead of a fixed agent neighborhood, the neighborhood of an agent in \cite{tannerStableFlockingMobile2003} changes over time by only containing other agents currently within the agent's communication radius $\FlockAgentCommRadius$. In this case, the authors guarantee asymptotic flocking and separation assuming that the communication graph is connected for all time; \figref{fig:tanner:decentralized:dynamic:failure} shows that this assumption is invalid for some initial conditions. In \cite{guLeaderFollowerFlocking2009}, the authors propose a decentralized flocking controller with time-dependent agent neighbors and a leader following constraint, but they only test the controller with a fully connected communication graph.
Without a fixed communication graph, decentralized flocking controllers struggle to maintain communication graph connectivity and flock cohesion -- a requirement for effectively optimizing the collective objective.

As suggested by question (2), a plausible approach to improve decentralized controllers is providing more information to each agent.
All the controllers we described, centralized or decentralized, only use information from the current time. 
Tolstaya et al. \cite{tolstayaLearningDecentralizedControllers2017b} apply machine learning (ML) to leverage current and past information.
\figref{fig:tanner:decentralized:dynamic:failure} shows that their ML controller can achieve asymptotic flocking for flock initial conditions where the controller in \cite{tannerStableFlockingMobile2003} fails (see \figref{fig:tanner:decentralized:dynamic:failure}).
The ML controller works by having each agent retain a summary of the information it receives during the current time step and, during the next time step, send that summary to its neighbors.
In addition, each agent retains a summary of the summaries it receives to send to its neighbors later, and this recursion continues up to $\PupilHistoryCount - 1$ time steps in the past. This process gives each agent access to information $\PupilHistoryCount$ hops away in the communication graph.
In training, the ML controller learns to utilize patterns in how the information exchanged in the network changes over time.
Though Tolstaya et al. \cite{tolstayaLearningDecentralizedControllers2017b} have found a way to provide additional information to each agent, their ML controller processes the information suboptimally. The flocking controllers in \cite{tannerStableFlockingMobile2003a,cuckerEmergentBehaviorFlocks2007} exhibit crucial symmetries, e.g., rotation equivariance, that have been ignored. 
Symmetry-aware ML models have demonstrated remarkable advantages in physical congruence and data efficiency 
\cite{cohen2017steerable,fuchs2020se,wang2024rethinking,baker2024an}. 
We aim to bridge this gap in this paper. 

\section{Our contributions}

We improve decentralized ML flocking controllers by enforcing rotation equivariance into time-delayed aggregation GNNs (\PupilTDAGNN{}s) \cite{tolstayaLearningDecentralizedControllers2017b}, restoring the symmetries 
of non-ML flocking controllers. 
We summarize our key contributions as follows:
\begin{itemize}
\item We present a simple yet efficient rotation equivariant convolutional neural network (CNN) and integrate it into \PupilTDAGNN{} for learning decentralized flocking.

\item We justify the theoretical advantages of rotation equivariance in learning decentralized flocking by demonstrating better generalization compared to non-equivariant controllers.

\item We demonstrate the advantages of our new ML controller in decentralized flocking, leader following, and obstacle avoidance. 
\end{itemize}

\subsection{Organization}

We organize this paper as follows. In \secref{sec:bg}, we review some related works on flocking controllers. In \secref{sec:model-proposed}, we present our rotation equivariant GNN for decentralized flocking. In \secref{sec:main:theory} and \secref{sec:experiments}, we analyze the generalization advantages of the rotation equivariant GNN for learning decentralized flocking and verify its effectiveness. Technical proofs and additional details are provided in the appendix. 

\section{Background}\label{sec:bg}
\subsection{Flock representation and definitions}
\label{sec:flocking-control-description}
Let $\FlockPosMatrix=[\FlockAgentPos_1,\;\ldots,\;\FlockAgentPos_\FlockAgentCount]\in\bR^{2\times \FlockAgentCount}$ be the position vectors of \FlockAgentCount{} agents, $\FlockVelMatrix$ be their velocities, and $\FlockAgentRelativePos = \FlockAgentPos_\FlockAgentIndex - \FlockAgentPos_\FlockAgentIndexTwo$ be the relative position. 
We formulate the alignment and cohesion rules for asymptotic flocking as follows:

\begin{definition}[Asymptotic flocking \cite{choiEmergentDynamicsCuckerSmale2016}]
\label{def:asymptotic-flocking}
A flock of \FlockAgentCount{} agents flocks asymptotically if and only if the following two conditions are satisfied: (1)  {\bf Alignment:} $\lim\limits_{t \to \infty} \max\limits_{\FlockAgentIndex,\FlockAgentIndexTwo} \norm{\FlockAgentRelativeVel\pn{t}} = 0$, and (2) {\bf Cohesion:} $\sup\limits_{0 \leq t < \infty,\FlockAgentIndex,\FlockAgentIndexTwo} \norm{\FlockAgentRelativePos\pn{t}} < \infty$.
\end{definition}

Asymptotic flocking is usually the primary objective of flocking controllers.
Controllers can also make guarantees concerning how their objectives are met, such as agent separation, which is defined as follows:

\begin{definition}[Agent $\FlockAgentMinDist$-separation]\label{def:agent-separation}
Let $\FlockAgentMinDist \geq 0$. Agents are separated if for all time $t$, $\norm{\FlockAgentRelativePos\pn{t}} \geq \FlockAgentMinDist$.
\end{definition}

Other objectives or guarantees beyond those mentioned include segregation \cite{santosSegregationMultipleHeterogeneous2014}, where agents cluster into preassigned groups, and pattern formation \cite{ohFlockingBehaviorStochastic2021,choiCollisionlessSingularCuckerSmale2019,choiControlledPatternFormation2022}.

The acceleration of the $\FlockAgentIndex$-th agent is $\FlockAgentAccel_\FlockAgentIndex = \gExpertName\pn{\gExpertInput[\pn{t}]}$ where $\gExpertName$ is the flocking controller.
The design of $\gExpertName$ ensures the flocking controller's objectives are met while maintaining its guarantees.
The two main classes of flocking controllers -- centralized and decentralized -- are separated by whether they use the entire flock state $\pn{\gExpertInput}$ or only a subset of the columns of $\FlockPosMatrix$ and $\FlockVelMatrix$ based on what agents can communicate.

\subsection{Centralized flocking}\label{sec:bg:centralized:math}
We recap on the centralized flocking controller from \cite{tannerStableFlockingMobile2003a}, which defines an agent's acceleration as
\begin{equation}\label{eq:tanner:centralized}
\begin{aligned}
\FlockAgentAccel_\FlockAgentIndex = \gExpertName\pn{\gExpertInput} := -\sum_{\FlockAgentIndexTwo = 1}^\FlockAgentCount \FlockAgentRelativeVel - \sum_{\FlockAgentIndexTwo = 1}^\FlockAgentCount \nabla \TannerEtAlPotential\pn{\norm{\FlockAgentRelativePos}},
\end{aligned}
\end{equation}
where $\TannerEtAlPotential:\pn{0, \infty} \to \bR$ is a potential function such that $\lim_{r \to \infty} \TannerEtAlPotential\pn{r} = \infty$ and $r_* > 0$ is $\TannerEtAlPotential$'s unique minimizer representing the desired distance between two agents.
The first summation aligns the agents' velocities, and the second summation uses $\TannerEtAlPotential$ to control flock cohesion and agent separation.
The potential function for $\TannerEtAlPotential$ used in \cite{tannerStableFlockingMobile2003} is
$\TannerEtAlPotential\pn{r} = 1/r^2 + \ln\pn{r^2}$,
which encourages agents to be a distance $r_* = 1$  from each other for cohesion and is singular at $r = 0$ for separation. The function $\TannerEtAlPotential$ can be changed to induce a variety of flock behaviors. For example, organizing leader-following \cite{guLeaderFollowerFlocking2009} and segregating the agents into clusters \cite{santosSegregationMultipleHeterogeneous2014}.

\subsection{Decentralized flocking}\label{sec:bg:decentralized:math}
We briefly review the decentralized flocking controllers 
from \cite{tannerStableFlockingMobile2003a,tannerStableFlockingMobile2003}.
The first decentralized controller \cite{tannerStableFlockingMobile2003a} operates on a fixed sparse communication graph and defines an agent's acceleration as
$$
\begin{aligned}
\FlockAgentAccel_\FlockAgentIndex = \gExpertName\pn{\gExpertInput} := -\sum_{\FlockAgentIndexTwo \in \FlockAgentNeighborhood_\FlockAgentIndex} \FlockAgentRelativeVel - \sum_{\FlockAgentIndexTwo \in \FlockAgentNeighborhood_\FlockAgentIndex} \nabla \TannerEtAlPotential\pn{\norm{\FlockAgentRelativePos}},
\end{aligned}
$$
where $\FlockAgentNeighborhood_\FlockAgentIndex$ is the neighborhood of agent $\FlockAgentIndex$. This controller guarantees asymptotic flocking but does not guarantee separation because nonadjacent agents cannot communicate to avoid collision. Another drawback of this controller is that any adjacent agents must be able to communicate regardless of their spatial distance.

The second decentralized controller \cite{tannerStableFlockingMobile2003} defines the neighborhood of an agent to only include agents within 
radius $\FlockAgentCommRadius$, i.e., $\FlockAgentNeighborhood_\FlockAgentIndex\pn{t} = \set{\FlockAgentIndexTwo : \FlockAgentIndexTwo \neq \FlockAgentIndex,\ \norm{\FlockAgentRelativePos} \leq \FlockAgentCommRadius}$. The agent's acceleration is defined as
$$
\begin{aligned}
\FlockAgentAccel_\FlockAgentIndex = \gExpertName\pn{\gExpertInput} := -\sum_{\FlockAgentIndexTwo \in \FlockAgentNeighborhood_\FlockAgentIndex\pn{t}} \FlockAgentRelativeVel - \sum_{\FlockAgentIndexTwo \in \FlockAgentNeighborhood_\FlockAgentIndex\pn{t}} \nabla \TannerEtAlPotential\pn{\norm{\FlockAgentRelativePos}}.
\end{aligned}
$$
This controller guarantees asymptotic flocking and separation.
However, it assumes the communication graph is connected for all time, but \Figref{fig:tanner:decentralized:dynamic:failure}(a,b,c) show that this assumption may be invalid.

\begin{figure}[!ht]
\centering
\begin{subfigure}[t]{.22\textwidth}
\centering
\includegraphics[width=\textwidth]{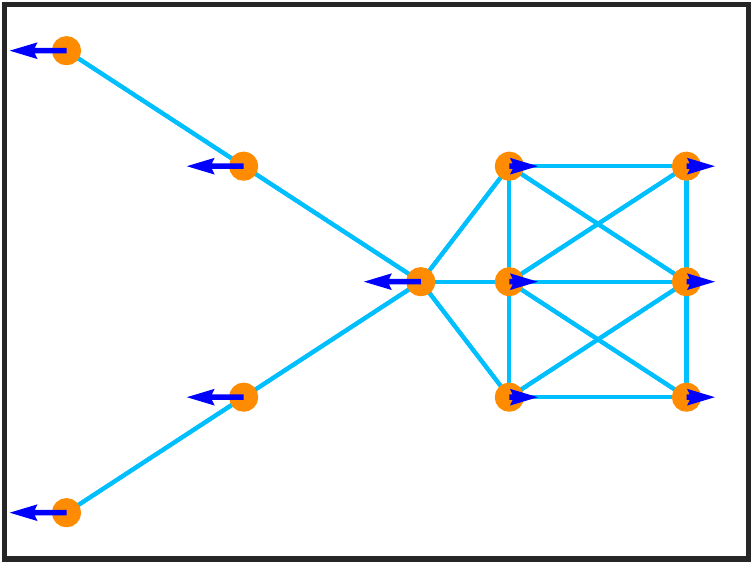}
\caption{\footnotesize $t_0$}
\label{fig:tanner:decentralized:dynamic:failure:time0}
\end{subfigure}
\hspace{1em}
\begin{subfigure}[t]{.22\textwidth}
\centering
\includegraphics[width=\textwidth]{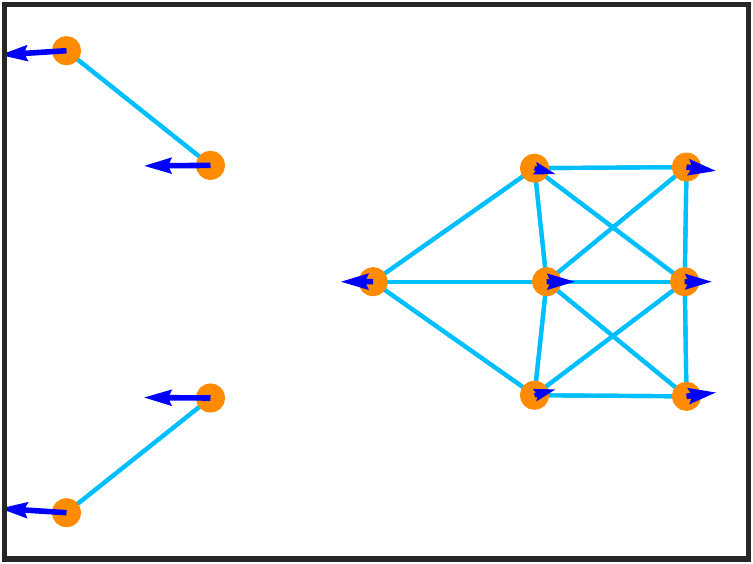}
\caption{
\footnotesize $t_{27}$
}
\label{fig:tanner:decentralized:dynamic:failure:time27}
\end{subfigure}
\hspace{1em}
\begin{subfigure}[t]{.22\textwidth}
\centering
\includegraphics[width=\textwidth]{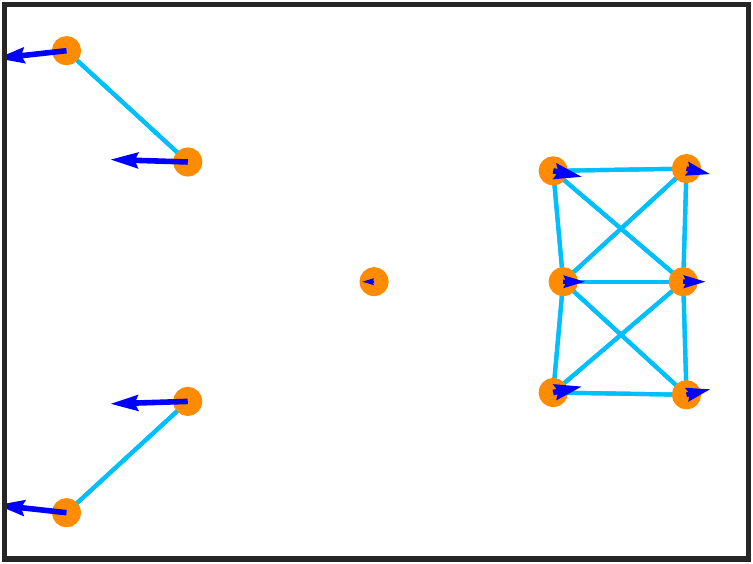}
\caption{\footnotesize $t_{53}$}
\label{fig:tanner:decentralized:dynamic:failure:time53}
\end{subfigure} \\
\begin{subfigure}[t]{.22\textwidth}
\centering
\includegraphics[width=\textwidth]{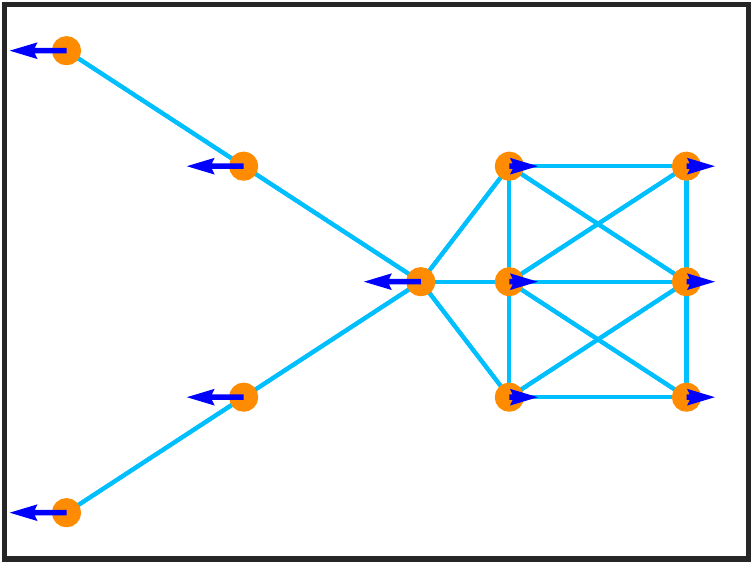}
\caption{\footnotesize $t_0$}\label{fig:tolstaya:dynamic:success:time0}
\end{subfigure}
\hspace{1em}
\begin{subfigure}[t]{.22\textwidth}
\centering
\includegraphics[width=\textwidth]{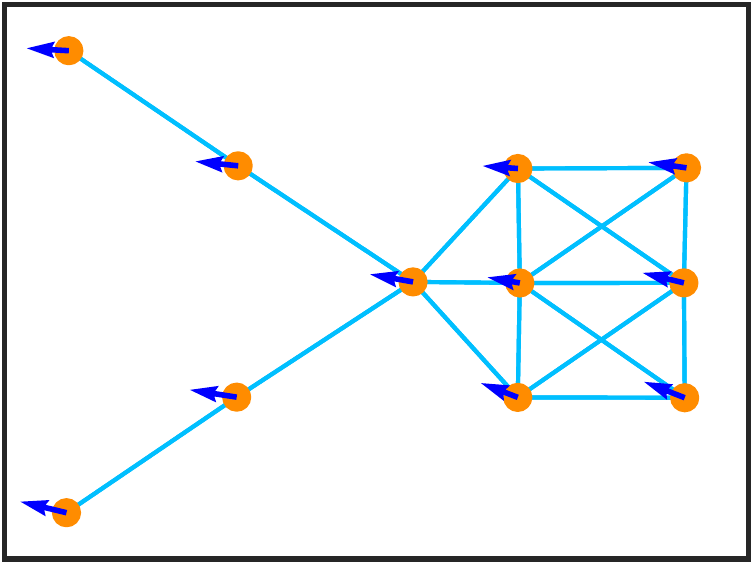}
\caption{\footnotesize $t_{27}$}
\label{fig:tolstaya:dynamic:success:time27}
\end{subfigure}
\hspace{1em}
\begin{subfigure}[t]{.22\textwidth}
\centering
\includegraphics[width=\textwidth]{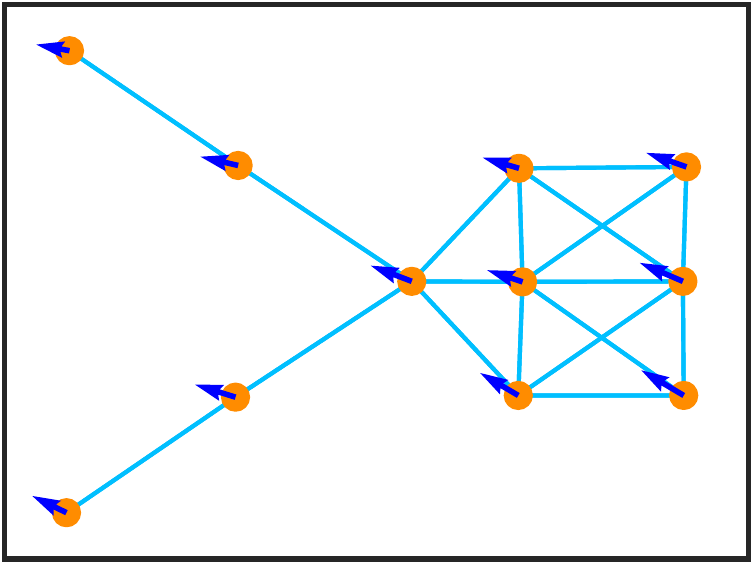}
\caption{\footnotesize $t_{53}$}
\label{fig:tolstaya:dynamic:success:time53}
\end{subfigure}
\caption{
Snapshots at times $t_\DAggerSimulationTimeStepIndex$ of a flock of agents (orange dots) with velocities (blue arrows) and the flock's communication graph (light blue edges). When controlled by the time-dependent neighborhood decentralized flocking controller from Tanner et al. \cite{tannerStableFlockingMobile2003} (top row), the communication graph loses connectivity from time $t_{27}$ onward. In contrast, the ML-based decentralized flocking controller from Tolstaya et al. \cite{tolstayaLearningDecentralizedControllers2017b} (bottom row), not trained on these agent initial conditions, successfully maintains communication graph connectivity and achieves asymptotic flocking.
}
\label{fig:tanner:decentralized:dynamic:failure}
\end{figure}

\subsection{ML-based decentralized flocking}\label{sec:bg:decentralized:ml}
\subsubsection{Time-delayed aggregation graph neural networks}
We review \PupilTDAGNN{} from \cite{tolstayaLearningDecentralizedControllers2017b}. 
Let $\FlockAgentNeighborhood_\FlockAgentIndex\pn{t} = \set{\FlockAgentIndexTwo: \FlockAgentIndexTwo \neq \FlockAgentIndex,\ \norm{\FlockAgentRelativePos} \leq \FlockAgentCommRadius}$ be the neighborhood of agent $\FlockAgentIndex$ at time $t$.
Define a function $\PupilFeatureFunc: \bR^2 \times \bR^2 \to \bR^{\gConvChannelsIn[1]}$ and the recurrence relation at time $t_\DAggerSimulationTimeStepIndex$ as
\begin{equation}
\begin{aligned}
 \PupilHistoryVector_\FlockAgentIndex^1\pn{t_\DAggerSimulationTimeStepIndex} & = \sum_{\FlockAgentIndexTwo \in \FlockAgentNeighborhood_\FlockAgentIndex\pn{t_\DAggerSimulationTimeStepIndex}} \psi\pn{\FlockAgentRelativePos\pn{t_\DAggerSimulationTimeStepIndex}, \FlockAgentRelativeVel\pn{t_\DAggerSimulationTimeStepIndex}}, 
\end{aligned}
\label{eq:tolstaya:node-features:one-hop}
\end{equation}
\begin{equation}
\begin{aligned}
\PupilHistoryVector_\FlockAgentIndex^\PupilHistoryIndex\pn{t_\DAggerSimulationTimeStepIndex} & = \frac{1}{\abs{\FlockAgentNeighborhood_\FlockAgentIndex\pn{t_\DAggerSimulationTimeStepIndex}}}\sum_{\FlockAgentIndexTwo \in \FlockAgentNeighborhood_\FlockAgentIndex\pn{t_\DAggerSimulationTimeStepIndex}} \PupilHistoryVector_\FlockAgentIndexTwo^{\PupilHistoryIndex - 1}\pn{t_{\DAggerSimulationTimeStepIndex - 1}}.
\end{aligned}
\label{eq:tolstaya:node-features:k-hop}
\end{equation}
With $\PupilFeatureFunc\pn{\FlockAgentRelativePos\pn{t_\DAggerSimulationTimeStepIndex}, \FlockAgentRelativeVel\pn{t_\DAggerSimulationTimeStepIndex}}$ representing the message from agent $\FlockAgentIndexTwo$ to agent $\FlockAgentIndex$ at time $t_\DAggerSimulationTimeStepIndex$, $\PupilHistoryVector_\FlockAgentIndex^1\pn{t_\DAggerSimulationTimeStepIndex}$ summarizes the messages within a one-hop neighborhood of agent $\FlockAgentIndex$, and $\PupilHistoryVector_\FlockAgentIndex^2\pn{t_\DAggerSimulationTimeStepIndex}$ summarizes the one-hop summaries from the previous time step within a one-hop neighborhood. Let $\PupilHistoryMatrix_\FlockAgentIndex\pn{t_\DAggerSimulationTimeStepIndex} = [ \PupilHistoryVector_\FlockAgentIndex^1\pn{t_\DAggerSimulationTimeStepIndex},\ldots,\PupilHistoryVector_\FlockAgentIndex^\PupilHistoryCount\pn{t_\DAggerSimulationTimeStepIndex}]$.
Finally, \PupilTDAGNN{} outputs the vector $\PupilName\pn{\PupilHistoryMatrix_\FlockAgentIndex\pn{t_\DAggerSimulationTimeStepIndex}}$ for agent $\FlockAgentIndex$'s acceleration,
where $\PupilName:\bR^{\gConvChannelsIn[1] \times \PupilHistoryCount} \to \bR^{\gConvChannelsIn[\PupilLayerCount + 1]}$ is a 1D CNN with element-wise $\tanh$ activation after each convolutional layer except the last layer $\PupilLayerCount$.
For agents in $\bR^2$, $\gConvChannelsIn[\PupilLayerCount + 1] = 2$.
$\PupilHistoryCount$ is the depth that the recurrence is computed to, and the larger $\PupilHistoryCount$ is, the more message summaries each agent needs to store; i.e., each agent $\FlockAgentIndex$ stores $\PupilHistoryMatrix_{\FlockAgentIndex} \pn{t_{\DAggerSimulationTimeStepIndex - 1}}$ to be passed on at time $t_{\DAggerSimulationTimeStepIndex}$.

\PupilTDAGNN{} is trained using imitation learning (IL) reviewed in \secref{sec:bg:imitation-learning}.
In our context of flocking control, the expert controller for IL is the centralized controller in \eqref{eq:tanner:centralized} with the potential function
\begin{equation}
\label{eq:tanner:centralized:tolstaya}
\begin{aligned}
    U\pn{r} = \begin{cases}
        1/r^2 + \ln\pn{r^2} & r \leq \FlockAgentCommRadius, \\
        U\pn{\FlockAgentCommRadius} & \text{else}.
    \end{cases}
\end{aligned}
\end{equation}
The output of the expert controller $\gExpertName$ is clamped so that $\norm{\gExpertName\pn{\gExpertInput}}_\infty \leq \ExpertAccelerationClampConstant$.
In \cite{tolstayaLearningDecentralizedControllers2017b}, $\PupilFeatureFunc: \bR^2 \times \bR^2 \to \bR^6$ computes the terms that enter linearly into the centralized controller's acceleration:
\begin{equation}
\label{eq:tolstaya:node-features}
\begin{aligned}
    \PupilFeatureFunc\pn{\FlockAgentRelativePos, \FlockAgentRelativeVel} = \spn{
        \FlockAgentRelativeVel[_{\FlockAgentIndex\FlockAgentIndexTwo}^\top] ,\; \norm{\FlockAgentRelativePos}^{-4}\FlockAgentRelativePos[_{\FlockAgentIndex\FlockAgentIndexTwo}^\top] ,\; \norm{\FlockAgentRelativePos}^{-2}\FlockAgentRelativePos[_{\FlockAgentIndex\FlockAgentIndexTwo}^\top]
    }^\top.
\end{aligned}
\end{equation}
\PupilTDAGNN{} trained using this $\TannerEtAlPotential$ and $\PupilFeatureFunc$ learns flocking control.
Using the $\TannerEtAlPotential$ from \cite{santosSegregationMultipleHeterogeneous2014} and the $\PupilFeatureFunc$ from \cite{omotuyiLearningDecentralizedControllers2022}, \PupilTDAGNN{} learns flocking control with segregation.

\subsubsection{Imitaiton learning}\label{sec:bg:imitation-learning}
Flocking controllers are evaluated by whether they achieve their collective objective 
and by what intermediate steps they take to do so. 
For example, controllers that guarantee separation are devalued if they take intermediate steps that cause agent collisions. While non-ML controllers can be designed with these evaluation criteria in mind, ML controllers must be trained to comply with the constraints imposed by these criteria. IL is a framework that trains ML controllers to produce the same output as a compliant ``expert'' controller. Let $\gExpertName$ be the expert controller in \eqref{eq:tanner:centralized:tolstaya} and \PupilName{} be \PupilTDAGNN{} -- a pupil controller.
The IL loss function used in \cite{tolstayaLearningDecentralizedControllers2017b} is
\begin{equation}
\label{eq:imitation-learning:loss-function}
\begin{aligned}
    & \LossFunctionName\pn{\gDAggerDatasetDatum} = \frac{1}{\FlockAgentCount}\sum_{\FlockAgentIndex = 1}^\FlockAgentCount \ell\pn{\gExpertName\pn{\gExpertInput}, \PupilName\pn{\PupilHistoryMatrix_\FlockAgentIndex}},
\end{aligned}
\end{equation}
where $\ell$ is another loss function (e.g., the squared error loss), and $\PupilHistoryMatrix_\FlockAgentIndex$ is the input to \PupilTDAGNN{} for agent $\FlockAgentIndex$. 
Ideally, we would compute $\LossFunctionName$ for every tuple $\pn{\gDAggerDatasetDatum}$ 
but this is intractable. Instead, we sample a training set from the space of these tuples using Dataset Aggregation (DAgger) reinforcement learning \cite{rossReductionImitationLearning2011}. 
DAgger is employed in \cite{tolstayaLearningDecentralizedControllers2017b} to train \PupilTDAGNN{} as follows: 
Let $E$ and $T$ be the number of training epochs and time steps in a flocking simulation, respectively.
At the beginning of each epoch $\DAggerEpochIndex \in \set{0, \dots, \DAggerEpochCount - 1}$, an initial flock state $\pn{\FlockPosMatrix\pn{t_0}, \FlockVelMatrix\pn{t_0}}$ is chosen from a dataset of initial conditions. Next, flocking is simulated for $T$ time steps. For each $\DAggerSimulationTimeStepIndex \in \set{0,\dots,\DAggerSimulationTimeStepsCount-1}$, $\gExpertName\pn{\gExpertInput[\pn{t_\DAggerSimulationTimeStepIndex}]}$ and $\PupilHistoryMatrix_\FlockAgentIndex\pn{t_\DAggerSimulationTimeStepIndex}$ are computed. With probability $\DAggerActionPropability_\DAggerEpochIndex$, the acceleration $\gExpertName\pn{\gExpertInput[\pn{t_\DAggerSimulationTimeStepIndex}]}$ is used to update the flock state, computing $\FlockPosMatrix\pn{t_{\DAggerSimulationTimeStepIndex + 1}}$ and $\FlockVelMatrix\pn{t_{\DAggerSimulationTimeStepIndex + 1}}$; otherwise, the acceleration $\PupilName\pn{\PupilHistoryMatrix_\FlockAgentIndex\pn{t_\DAggerSimulationTimeStepIndex}}$ is used.
Finally, the tuple $\pn{\gDAggerDatasetDatum[\pn{t_\DAggerSimulationTimeStepIndex}]}$ is added to the training set. Once $T$ time steps of flocking are complete, a batch $\GeneralizationSampleName$ of tuples is uniformly randomly sampled from the training set.
The average value of $\LossFunctionName$ over the batch is computed and \PupilTDAGNN{}'s weights are updated.
We can update the weights more than once per epoch by sampling more batches.

\subsection{Rotation equivariance and translation invariance of flocking controllers
}
\label{sec:flocking-symmetry}
Symmetry is a fundamental inductive bias for designing reliable and efficient neural networks \cite{pmlr-v80-kondor18a,pmlr-v162-lawrence22a,van2020mdp,yarotsky2022universal,wang2024rethinking}.
The symmetry of a 
function $f: X \to Y$ is often described by equivariance: $f$ is equivariant if $f(T_g(x)) = T'_g(f(x))$, where $T_g, T'_g$ are transformations representing the group element $g$ on $X$ and $Y$, respectively.
If $T_g'$ is the identity, $f$ is invariant.
A crucial property of GNNs is that their node feature aggregations are permutation invariant \cite{garg2020generalization}, and GNNs have also been extended to be roto-translation equivariant \cite{satorras2021n,brandstetter2022geometric}.
For flocking, prominent controllers $\gExpertName$ (e.g., \cite{tannerStableFlockingMobile2003a,cuckerEmergentBehaviorFlocks2007}) have the following symmetries:
\begin{itemize}
\item \textbf{Translation invariance:} For any $\vt_1, \vt_2 \in \bR^2$,  
considering the controller in \eqref{eq:tanner:centralized}, we have 
$$
\begin{aligned}
&\ \ \ \ \ \gExpertName\pn{\FlockPosMatrix + \vt_1\vone_\FlockAgentCount^\top, \FlockVelMatrix + \vt_2\vone_\FlockAgentCount^\top}\\ & = -\sum_{\FlockAgentIndexTwo = 1}^{\FlockAgentCount} \pn{\FlockAgentVel_\FlockAgentIndex + \vt_2} - \pn{\FlockAgentVel_\FlockAgentIndexTwo + \vt_2} - \sum_{\FlockAgentIndexTwo = 1}^{\FlockAgentCount} \nabla\TannerEtAlPotential\pn{\norm{\pn{\FlockAgentPos_\FlockAgentIndex + \vt_1} - \pn{\FlockAgentPos_\FlockAgentIndexTwo + \vt_1}}} \\
& = -\sum_{\FlockAgentIndexTwo = 1}^{\FlockAgentCount} \FlockAgentRelativeVel - \sum_{\FlockAgentIndexTwo = 1}^{\FlockAgentCount} \nabla\TannerEtAlPotential\pn{\norm{\FlockAgentRelativePos}} \\
& = \gExpertName\pn{\gExpertInput}
\end{aligned}
$$

\item \textbf{Rotation and reflection equivariance:} For any orthogonal matrix $\UtilOrthogonalMatrix \in \OrthogonalGroup{2}$, we have
$$
\begin{aligned}
\gExpertName\pn{\UtilOrthogonalMatrix\FlockPosMatrix, \UtilOrthogonalMatrix\FlockVelMatrix} & = -\sum_{\FlockAgentIndexTwo = 1}^{\FlockAgentCount} \UtilOrthogonalMatrix\FlockAgentVel_\FlockAgentIndex - \UtilOrthogonalMatrix\FlockAgentVel_\FlockAgentIndexTwo - \sum_{\FlockAgentIndexTwo = 1}^{\FlockAgentCount} \nabla\TannerEtAlPotential\pn{\norm{\UtilOrthogonalMatrix\FlockAgentPos_\FlockAgentIndex - \UtilOrthogonalMatrix\FlockAgentPos_\FlockAgentIndexTwo}} \\
& = -\sum_{\FlockAgentIndexTwo = 1}^{\FlockAgentCount} \UtilOrthogonalMatrix\FlockAgentRelativeVel - \sum_{\FlockAgentIndexTwo = 1}^{\FlockAgentCount} \TannerEtAlPotential'\pn{\norm{\UtilOrthogonalMatrix\FlockAgentRelativePos}}\frac{\UtilOrthogonalMatrix\FlockAgentRelativePos}{\norm{\UtilOrthogonalMatrix\FlockAgentRelativePos}} \\
& = \UtilOrthogonalMatrix\pn*{-\sum_{\FlockAgentIndexTwo = 1}^{\FlockAgentCount} \FlockAgentRelativeVel - \sum_{\FlockAgentIndexTwo = 1}^{\FlockAgentCount} \nabla\TannerEtAlPotential\pn{\norm{\FlockAgentRelativePos}}} \\
& = \UtilOrthogonalMatrix\gExpertName\pn{\gExpertInput}.
\end{aligned}
$$
\end{itemize}
Existing ML models, like TDAGNN, do not satisfy these symmetries.

\section{Equivariant controllers for learning decentralized flocking}\label{sec:model-proposed}
In this section, we present a rotation equivariant ML controller for decentralized flocking. Our approach replaces the CNN in \PupilTDAGNN{} with an \OrthogonalGroup{2} equivariant CNN, ensuring rotation and reflection equivariance of the resulting ML controller.


\subsection{
Rotation equivariant convolution layers
}
\label{sec:methods:equiv-model}
\PupilTDAGNN{} is translation invariant, but not rotation equivariant because its CNN component \PupilName{} is not. 
There has been significant effort put toward developing roto-translation equivariant CNNs (e.g., \SpecialOrthogonalGroup{2}-steerable CNNs \cite{weiler2019general}); however, these cannot be directly integrated into \PupilTDAGNN{}. As such, we aim to replace \PupilName{} with an \OrthogonalGroup{2} equivariant CNN \PupilEquivariantName{} equipped with rotation equivariant convolutional layers and activations. To ease our presentation, we make the following two assumptions:


\begin{assumption}\label{assumption:methods:no-padding}
The 1D convolutional layers 
$$\set{\ConvOneD_\PupilLayerIndex: \bR^{\gConvChannelsIn \times \gConvFeaturesIn} \to \bR^{\gConvChannelsIn[\PupilLayerIndex+1] \times \gConvFeaturesIn[\PupilLayerIndex+1]}}_{\PupilLayerIndex = 1}^{\PupilLayerCount}
$$ 
of \PupilName{} have no padding.
\end{assumption}

\begin{assumption}\label{assumption:methods:same-size-input}
The input of \PupilName{} always has the same size.
\end{assumption}

We represent the input $\gConvInput$ of $\ConvOneD_\PupilLayerIndex$ as a block matrix composing vectors in $\bR^2$ that are \OrthogonalGroup{2} equivariant with respect to the agents' positions and velocities:
\begin{align*}
\gConvInput = \spn{
\EqConvChannel_{\ConvChannelInIndex,\ConvFeatureIndex}
}_{\ConvChannelInIndex=1,\ConvFeatureIndex=1}^{\pn{\gConvChannelsIn/2},\gConvFeaturesIn} \subset \bR^{\gConvChannelsIn \times \gConvFeaturesIn}
\end{align*}
If $\PupilLayerIndex = 1$ and we 
use $\PupilFeatureFunc$ from \eqref{eq:tolstaya:node-features}, then $\gConvInput = \PupilHistoryMatrix_\FlockAgentIndex$ from section~\ref{sec:bg:decentralized:ml}, 
$\gConvChannelsIn = 6$, and $\gConvFeaturesIn = \PupilHistoryCount$.
Moreover, the last convolutional layer 
in $\PupilName$ outputs acceleration so $\gConvChannelsIn[\PupilLayerCount+1] = 2$ and $\gConvFeaturesIn[\PupilLayerCount+1] = 1$.
By Assumption~\ref{assumption:methods:no-padding}, $\ConvOneD_\PupilLayerIndex$ can be represented by a collection of Toeplitz matrices $\PupilWeightMatrix: \set{1,\dots,\gConvChannelsIn} \times \set{1,\dots,\gConvChannelsIn[\PupilLayerIndex+1]} \to \bR^{\gConvFeaturesIn \times \gConvFeaturesIn[\PupilLayerIndex+1]}$ and bias terms $\ConvBias: \set{1,\dots,\gConvChannelsIn[\PupilLayerIndex+1]} \to \bR$.
For $\ConvChannelOutIndex \in \set{1,\dots,\gConvChannelsIn[\PupilLayerIndex+1]}$, the output channel $\ConvOneD_\PupilLayerIndex\pn{\gConvInput}[\ConvChannelOutIndex]$ can be represented as
$$
\begin{aligned}
    &\ \ \ConvOneD_\PupilLayerIndex\pn{\gConvInput}[\ConvChannelOutIndex]\\ 
    & = \ConvBias\pn{\ConvChannelOutIndex}\vone_{\gConvFeaturesIn[\PupilLayerIndex+1]}^\top + \sum_{\ConvChannelInIndex = 1}^{\gConvChannelsIn} \spn{
        \pn{\gConvInput}_{\ConvChannelInIndex,1}
        ,\; \dots
        ,\; \pn{\gConvInput}_{\ConvChannelInIndex,\gConvFeaturesIn}
    }\PupilWeightMatrix\pn{\ConvChannelInIndex,\ConvChannelOutIndex} \\
    & = \ConvBias\pn{\ConvChannelOutIndex}\vone_{\gConvFeaturesIn[\PupilLayerIndex+1]}^\top + \sum_{\ConvChannelInIndex = 1}^{\gConvChannelsIn/2} \sum_{c = 1}^2 \spn{
        \pn{\EqConvChannel_{\ConvChannelInIndex, 1}}_c
        ,\; \dots
        ,\; \pn{\EqConvChannel_{\ConvChannelInIndex, \gConvFeaturesIn}}_c
    }\PupilWeightMatrix\pn{2\pn{\ConvChannelInIndex - 1} + c, \ConvChannelOutIndex} \\
    & = \ConvBias\pn{\ConvChannelOutIndex}\vone_{\gConvFeaturesIn[\PupilLayerIndex+1]}^\top + \sum_{\ConvChannelInIndex = 1}^{\gConvChannelsIn/2} \vone_2^\top\begin{bmatrix}
        \spn{
            \pn{\EqConvChannel_{\ConvChannelInIndex, 1}}_1
            ,\; \dots
            ,\; \pn{\EqConvChannel_{\ConvChannelInIndex, \gConvFeaturesIn}}_1
        }\PupilWeightMatrix\pn{2\pn{\ConvChannelInIndex - 1} + 1, \ConvChannelOutIndex} \\[.1em]
        \spn{
            \pn{\EqConvChannel_{\ConvChannelInIndex, 1}}_2
            ,\; \dots
            ,\; \pn{\EqConvChannel_{\ConvChannelInIndex, \gConvFeaturesIn}}_2
        }\PupilWeightMatrix\pn{2\pn{\ConvChannelInIndex - 1} + 2, \ConvChannelOutIndex}
    \end{bmatrix}.
\end{aligned}
$$
Applying an orthogonal matrix $\UtilOrthogonalMatrix \in \OrthogonalGroup{2}$ to the agents' position and velocity, we have
$$
\begin{aligned}
    &\ConvOneD_\PupilLayerIndex\pn*{\spn{
        \UtilOrthogonalMatrix\EqConvChannel_{\ConvChannelInIndex,\ConvFeatureIndex}
    }_{\ConvChannelInIndex=1,\ConvFeatureIndex=1}^{\pn{\gConvChannelsIn/2},\gConvFeaturesIn}}[\ConvChannelOutIndex]\\
     =\; & \ConvBias\pn{\ConvChannelOutIndex}\vone_{\gConvFeaturesIn[\PupilLayerIndex+1]}^\top 
     + \sum_{\ConvChannelInIndex = 1}^{\gConvChannelsIn/2} \vone_2^\top\begin{bmatrix}
        \spn{
            \pn{\UtilOrthogonalMatrix\EqConvChannel_{\ConvChannelInIndex, 1}}_1
            ,\; \dots
            ,\; \pn{\UtilOrthogonalMatrix\EqConvChannel_{\ConvChannelInIndex, \gConvFeaturesIn}}_1
        }\PupilWeightMatrix\pn{2\pn{\ConvChannelInIndex - 1} + 1, \ConvChannelOutIndex} \\[.1em]
        \spn{
            \pn{\UtilOrthogonalMatrix\EqConvChannel_{\ConvChannelInIndex, 1}}_2
            ,\; \dots
            ,\; \pn{\UtilOrthogonalMatrix\EqConvChannel_{\ConvChannelInIndex, \gConvFeaturesIn}}_2
        }\PupilWeightMatrix\pn{2\pn{\ConvChannelInIndex - 1} + 2, \ConvChannelOutIndex}
    \end{bmatrix} \\
    &
    \neq \UtilOrthogonalMatrix\ConvOneD_\PupilLayerIndex\pn*{\spn{
        \EqConvChannel_{\ConvChannelInIndex,\ConvFeatureIndex}
    }_{\ConvChannelInIndex=1,\ConvFeatureIndex=1}^{\pn{\gConvChannelsIn/2},\gConvFeaturesIn}}.
\end{aligned}
$$
From the above analysis, we see that {\bf $\ConvOneD_\PupilLayerIndex$ is not \OrthogonalGroup{2} equivariant} for three reasons: (1) the bias term, (2) vector element indexing is not \OrthogonalGroup{2} equivariant, and (3) $\UtilOrthogonalMatrix$ does not commute with $\vone_2^\top$. 
We need to address these issues to develop an \OrthogonalGroup{2} equivariant convolution. For the first issue, we eliminate the bias term.
For the second issue, we eliminate the vector element indexing through weight sharing.
By setting $\PupilWeightMatrix\pn{2\pn{\ConvChannelInIndex - 1} + 1, \ConvChannelOutIndex} = \PupilWeightMatrix\pn{2\pn{\ConvChannelInIndex - 1} + 2, \ConvChannelOutIndex}$ for all $\ConvChannelInIndex$, then for an output channel $\ConvChannelOutIndex$, we have
$$
\begin{aligned}
    &\ \  \sum_{\ConvChannelInIndex = 1}^{\gConvChannelsIn/2} \vone_2^\top\begin{bmatrix}
        \spn{
            \pn{\EqConvChannel_{\ConvChannelInIndex, 1}}_1
            ,\; \dots
            ,\; \pn{\EqConvChannel_{\ConvChannelInIndex, \gConvFeaturesIn}}_1
        }\PupilWeightMatrix\pn{2\pn{\ConvChannelInIndex - 1} + 1, \ConvChannelOutIndex} \\[.1em]
        \spn{
            \pn{\EqConvChannel_{\ConvChannelInIndex, 1}}_2
            ,\; \dots
            ,\; \pn{\EqConvChannel_{\ConvChannelInIndex, \gConvFeaturesIn}}_2
        }\PupilWeightMatrix\pn{2\pn{\ConvChannelInIndex - 1} + 2, \ConvChannelOutIndex}
    \end{bmatrix} \\
 &    =  \sum_{\ConvChannelInIndex = 1}^{\gConvChannelsIn/2} \vone_2^\top\spn{
        \EqConvChannel_{\ConvChannelInIndex, 1}
        ,\; \dots
        ,\; \EqConvChannel_{\ConvChannelInIndex, \gConvFeaturesIn}
    }\PupilWeightMatrix\pn{2\pn{\ConvChannelInIndex - 1} + 1, \ConvChannelOutIndex}.
\end{aligned}
$$
For the third issue, we drop $\vone_2^\top$, which changes the number of rows per output channel from one to two. To retain the original number of rows in the output of $\ConvOneD$, we halve the number of output channels but set each output channel to have two rows (or ``subchannels'').
With these changes, we define a new \OrthogonalGroup{2} equivariant convolution $\EqConv_\PupilLayerIndex$: For output channels $\ConvChannelOutIndex \in \set{1,\dots,\gConvChannelsIn[\PupilLayerIndex+1]/2}$,
$$
\begin{aligned}
    \EqConv_\PupilLayerIndex\pn{\gConvInput}[\ConvChannelOutIndex] & = \sum_{\ConvChannelInIndex = 1}^{\gConvChannelsIn/2} \spn{
        \EqConvChannel_{\ConvChannelInIndex, 1}
        ,\; \dots
        ,\; \EqConvChannel_{\ConvChannelInIndex, \gConvFeaturesIn}
    }\PupilWeightMatrix\pn{2\pn{\ConvChannelInIndex - 1} + 1, \ConvChannelOutIndex}.
\end{aligned}
$$
$\EqConv_\PupilLayerIndex$ is \OrthogonalGroup{2} equivariant since
$$
\begin{aligned}
  &\ \   \EqConv_\PupilLayerIndex\pn*{\spn{
        \UtilOrthogonalMatrix\EqConvChannel_{\ConvChannelInIndex,\ConvFeatureIndex}
    }_{\ConvChannelInIndex=1,\ConvFeatureIndex=1}^{\pn{\gConvChannelsIn/2},\gConvFeaturesIn}}[\ConvChannelOutIndex] \\
    & = \sum_{\ConvChannelInIndex = 1}^{\gConvChannelsIn/2} \spn{
        \UtilOrthogonalMatrix\EqConvChannel_{\ConvChannelInIndex, 1}
        ,\; \dots
        ,\; \UtilOrthogonalMatrix\EqConvChannel_{\ConvChannelInIndex, \gConvFeaturesIn}
    }\PupilWeightMatrix\pn{2\pn{\ConvChannelInIndex - 1} + 1, \ConvChannelOutIndex} \\
    & = \UtilOrthogonalMatrix\EqConv_\PupilLayerIndex\pn*{\spn{
        \EqConvChannel_{\ConvChannelInIndex,\ConvFeatureIndex}
    }_{\ConvChannelInIndex=1,\ConvFeatureIndex=1}^{\pn{\gConvChannelsIn/2},\gConvFeaturesIn}}[\ConvChannelOutIndex].
\end{aligned}
$$
$\EqConv_\PupilLayerIndex$ can be implemented using a standard 2D convolutional layer whose kernels have one row and a vertical stride of one.
%

\subsection{
Rotation equivariant activations}
\label{sec:methods:equivariant-activations}
To build an \OrthogonalGroup{2} equivariant CNN, we need \OrthogonalGroup{2} equivariant activations.
Notice that the output of $\EqConv_\PupilLayerIndex$ is also a block matrix composing 2D vectors that are \OrthogonalGroup{2} equivariant:
$$
\begin{aligned}
    \gConvInput[\PupilLayerIndex+1] = \EqConv_\PupilLayerIndex\pn{\gConvInput} = \spn{
        \EqConvChannel_{\ConvChannelOutIndex,\ConvFeatureIndex}
    }_{\ConvChannelOutIndex=1,\ConvFeatureIndex=1}^{\pn{\gConvChannelsIn[\PupilLayerIndex+1]/2},\gConvFeaturesIn[\PupilLayerIndex+1]} \subset \bR^{\gConvChannelsIn[\PupilLayerIndex+1] \times \gConvFeaturesIn[\PupilLayerIndex+1]}.
\end{aligned}
$$
We construct an \OrthogonalGroup{2} equivariant activation for $\gConvInput[\PupilLayerIndex+1]$ by applying an \OrthogonalGroup{2} equivariant activation to each $\EqConvChannel_{\ConvChannelOutIndex, \ConvFeatureIndex}$ separately. 
By Proposition~1 of \cite{maWhySelfattentionNatural2022}, any O(2) equivariant function  can be expressed as $\PupilActivationEquivariantName\pn{\FlockAgentPos} = \FlockAgentPos\PupilActivationInvariantName\pn{\norm{\FlockAgentPos}}$ for some function $\PupilActivationInvariantName: \bR \to \bR$.
Therefore, the output channel $\ConvChannelOutIndex \in \set{1,\dots,\gConvChannelsIn[\PupilLayerIndex+1]/2}$ with an equivariant activation applied is given by
$$
\begin{aligned}
    \PupilActivationEquivariantName\pn{\gConvInput[\PupilLayerIndex+1]} [\ConvChannelOutIndex] & = \spn{
        \PupilActivationEquivariantName\pn{\EqConvChannel_{\ConvChannelOutIndex, 1}}
        ,\; \dots
        ,\; \PupilActivationEquivariantName\pn{\EqConvChannel_{\ConvChannelOutIndex, \gConvFeaturesIn[\PupilLayerIndex+1]}}
    }\\
    &= \gConvInput[\PupilLayerIndex+1] [\ConvChannelOutIndex] \odot \vone_2 \spn{
        \PupilActivationInvariantName\pn{\norm{\EqConvChannel_{\ConvChannelOutIndex, 1}}}
        ,\; \dots
        ,\; \PupilActivationInvariantName\pn{\norm{\EqConvChannel_{\ConvChannelOutIndex, \gConvFeaturesIn[\PupilLayerIndex+1]}}}
    },
\end{aligned}
$$
where $\odot$ denotes the Hadamard product. The \OrthogonalGroup{2} equivariant convolution layer and \OrthogonalGroup{2} equivariant activations are combined to create an \OrthogonalGroup{2} equivariant convolution neural network \PupilEquivariantName{}.

We construct the following tailored activations for our application:
$$
\begin{aligned}
\PupilETDAGNNActivationScaleLogInvariant\pn{x} = \begin{cases}
        1 & x = 0 \\
        \frac{\ln\pn{1 + x}}{x} & x \neq 0
    \end{cases}, \quad \text{and } \PupilETDAGNNActivationScaleTanhInvariant\pn{x} = \tanh\pn{x},
\end{aligned}
$$
$$
\begin{aligned}
    \, & \PupilETDAGNNActivationScaleLog\pn{\gConvInput{}} [\ConvChannelOutIndex] = \gConvInput{} [\ConvChannelOutIndex] \odot \vone_2 \spn{
        \PupilETDAGNNActivationScaleLogInvariant\pn{\norm{\EqConvChannel_{\ConvChannelOutIndex, 1}}}
        ,\; \dots
        ,\; \PupilETDAGNNActivationScaleLogInvariant\pn{\norm{\EqConvChannel_{\ConvChannelOutIndex, \gConvFeaturesIn[\PupilLayerIndex+1]}}}
    }, \\
    \, & \PupilETDAGNNActivationScaleTanh\pn{\gConvInput{}} [\ConvChannelOutIndex] = \gConvInput{} [\ConvChannelOutIndex] \odot \vone_2 \spn{
        \PupilETDAGNNActivationScaleTanhInvariant\pn{\norm{\EqConvChannel_{\ConvChannelOutIndex, 1}}}
        ,\; \dots
        ,\; \PupilETDAGNNActivationScaleTanhInvariant\pn{\norm{\EqConvChannel_{\ConvChannelOutIndex, \gConvFeaturesIn[\PupilLayerIndex+1]}}}
    }.
\end{aligned}
$$
The activation \PupilETDAGNNActivationScaleLog{} provides an important normalization effect similar to $\tanh$ in the non-equivariant ML controller \PupilTDAGNN{}.
Large convolutional layer inputs can occur because the input features in \eqref{eq:tolstaya:node-features} are unbounded as $\norm{\FlockAgentRelativePos} \to 0$, potentially causing instabilities during training. Applying $\tanh$ elementwise bounds the features passed inside the CNN to $\pn{-1, 1}$. However, in the non-equivariant controllers, $\tanh$ is not applied before the first convolutional layer because it reduces all large-magnitude features to approximately the same magnitude (i.e., it is saturated); doing so would remove important information about the proximity of neighboring agents. Unfortunately, the first convolutional layer is still subject to large inputs. In contrast, for \PupilEquivariantName{} we can reduce the size of the inputs to the first convolutional layer and avoid saturation by using $\PupilETDAGNNActivationScaleLog$.
Next, the activation \PupilETDAGNNActivationScaleTanh{} is simply a nonlinear scaling of the feature vectors. 

\subsection{
Further improving \PupilTDAGNN{}}\label{sec:methods:improving-tdagnn}
We present two strategies to improve both equivariant and non-equivariant ML controllers further. 

{\bf Activate once:} The $\tanh$ activation of the CNN \PupilName{}  normalizes the output of each convolutional layer 
into $\pn{-1, 1}$.
Since the activation is applied after all but the last convolutional layer, the last convolutional layer must have large weights to ensure the controller can output a large acceleration vector. A large acceleration is needed whenever the controller needs to react quickly to maintain a connected communication graph or avoid collision. However, large weights concentrated in the last convolutional layer may cause the controller to output large accelerations too often, leading to over-corrections of the agents' velocities. When training \PupilTDAGNN{}, the CNN \PupilName{} must balance the need for large weights in the last convolutional layer with their risk. This challenge is lessened by applying the activation only after the first convolutional layer, allowing the controller to learn large weights across its convolutional layers.

{\bf Mean one-hop aggregation:} 
In \eqref{eq:tolstaya:node-features:one-hop}, \PupilTDAGNN{} uses a sum aggregation of the messages in the one-hop neighborhood of agent $\FlockAgentIndex$. Sum aggregation can cause over-corrections when agent $\FlockAgentIndex$'s neighbors send similar messages. Consider the scenario where all neighbors of agent $\FlockAgentIndex$ have the same velocity $\vv_{\FlockAgentNeighborhood_\FlockAgentIndex}$ and agent $\FlockAgentIndex$ has velocity $\FlockAgentVel_\FlockAgentIndex \neq \vv_{\FlockAgentNeighborhood_\FlockAgentIndex}$. Each neighbor sends a message to agent $\FlockAgentIndex$ requesting that it corrects its velocity by $-\pn{\FlockAgentVel_\FlockAgentIndex - \vv_{\FlockAgentNeighborhood_\FlockAgentIndex}}$. With sum aggregation, agent $\FlockAgentIndex$ overcorrects its velocity by $-\abs{\FlockAgentNeighborhood_\FlockAgentIndex}\pn{\FlockAgentVel_\FlockAgentIndex - \vv_{\FlockAgentNeighborhood_\FlockAgentIndex}}$, but with mean aggregation, agent $\FlockAgentIndex$ corrects its velocity by $-\pn{\FlockAgentVel_\FlockAgentIndex - \vv_{\FlockAgentNeighborhood_\FlockAgentIndex}}$, exactly what was requested by all its neighbors.

\section{Generalization analysis 
}\label{sec:main:theory}
In this section, we 
analyze the generalizability
of \PupilTDAGNN{}-related models when trained for flocking following the generalization analysis 
framework in \cite{pmlr-v235-karczewski24a}. 

\subsection{Generalization gap}


\begin{definition}[Generalization gap]\label{def:generalization-gap}
Let $f: \GeneralizationDomain \to \GeneralizationRange$ and define the loss function $\LossFunctionName: \GeneralizationRange \times \GeneralizationRange \to \hlpn{0, \infty}$. Let $\GeneralizationSampleName = \set{\pn{x_\GeneralizationSampleIndex, y_\GeneralizationSampleIndex}}_{\GeneralizationSampleIndex = 1}^\GeneralizationSampleSize$ be a set of i.i.d. samples from a probability distribution $\GeneralizationDatasetDistribution$ over $\GeneralizationDomain \times \GeneralizationRange$. The \textit{generalization gap} $\gGeneralizationGap$ of $f$ is the difference between the expected risk and empirical risk, i.e.,
$$
\begin{aligned}
\gGeneralizationGap\pn{f} = \underbrace{\bE_{\pn{x, y} \sim \GeneralizationDatasetDistribution}\spn{\LossFunctionName\pn{f\pn{x}, y}}}_{\mathrm{expected\ risk}} - \underbrace{\frac{1}{\GeneralizationSampleSize}\sum_{\GeneralizationSampleIndex = 1}^\GeneralizationSampleSize \LossFunctionName\pn{f\pn{x_\GeneralizationSampleIndex}, y_\GeneralizationSampleIndex}}_{\mathrm{empirical\ risk}}.
\end{aligned}
$$
\end{definition}

The generalization gap can be bounded by the expressiveness of the class of ML models. 
Empirical Rademacher complexity (ERC) 
measures how well the family of functions can fit random noise.

\begin{definition}[Empirical Rademacher complexity]\label{def:empirical-rademacher-complexity}
Let $\GeneralizationFunctionsLearners = \set{f: \GeneralizationDomain \to \bR}$ be a family of bounded functions and $\GeneralizationSampleName = \set{x_\GeneralizationSampleIndex}_{\GeneralizationSampleIndex = 1}^\GeneralizationSampleSize \subset \GeneralizationDomain$. The \textit{empirical Rademacher complexity} (ERC) of $\GeneralizationFunctionsLearners$ is
$$
\begin{aligned}
\GeneralizationEmpiricalRademacherComplexity\pn{\GeneralizationFunctionsLearners} = \bE_\GeneralizationRademacherRandomVector\spn*{\sup_{f \in \GeneralizationFunctionsLearners} \frac{1}{\GeneralizationSampleSize}\sum_{\GeneralizationSampleIndex = 1}^\GeneralizationSampleSize \pn{\GeneralizationRademacherRandomVector}_\GeneralizationSampleIndex f\pn{x_\GeneralizationSampleIndex}},
\end{aligned}
$$
where the entries of $\vsigma \in \set{-1, 1}^\GeneralizationSampleSize$ are distributed such that $P\pn{\pn{\vsigma}_\GeneralizationSampleIndex = -1} = 1/2$ and $P\pn{\pn{\vsigma}_\GeneralizationSampleIndex = 1} = 1/2$.
\end{definition}

Using the ERC, a probabilistic bound for the generalization gap can be derived.

\begin{theorem}[ERC bounds generalization gap \cite{mohriFoundationsMachineLearning2012}]
\label{theorem:erc-bounds-generalization-gap}
Define the loss funciton $\LossFunctionName: \GeneralizationRange \times \GeneralizationRange \to \spn{0, 1}$, let $\GeneralizationFunctionsLearners = \set{f: \GeneralizationDomain \to \GeneralizationRange}$, and let $\GeneralizationDatasetDistribution$ be a probability distribution over $\GeneralizationDomain \times \GeneralizationRange$. Let $\GeneralizationSampleName = \set{\pn{x_\GeneralizationSampleIndex, y_\GeneralizationSampleIndex}}_{\GeneralizationSampleIndex = 1}^\GeneralizationSampleSize$ be a set of i.i.d. samples from $\GeneralizationDatasetDistribution$. For any $\GeneralizationGapBoundFailProbability > 0$, for all $f \in \GeneralizationFunctionsLearners$, the following bound holds with probability $1 - \GeneralizationGapBoundFailProbability$:
$$
\begin{aligned}
\gGeneralizationGap\pn{f} \leq 2\GeneralizationEmpiricalRademacherComplexity\pn{\GeneralizationFunctionsDatumToLoss} + 3\sqrt{\frac{\ln\pn*{\frac{2}{\GeneralizationGapBoundFailProbability}}}{2\GeneralizationSampleSize}}, 
\end{aligned}
$$
where
$\GeneralizationFunctionsDatumToLoss = \set{\pn{x, y} \mapsto \LossFunctionName\pn{f\pn{x}, y}: f \in \GeneralizationFunctionsLearners}$.
\end{theorem}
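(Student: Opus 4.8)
The plan is to follow the classical Rademacher-complexity route of \cite{mohriFoundationsMachineLearning2012}, since the statement is exactly the uniform generalization bound obtained by symmetrization applied to the loss class $\GeneralizationFunctionsDatumToLoss$. Throughout, abbreviate a generic element of the loss class as $g\pn{x,y} = \LossFunctionName\pn{f\pn{x}, y} \in \spn{0,1}$, and reduce everything to the single scalar functional
$$
\begin{aligned}
\Phi\pn{\GeneralizationSampleName} = \sup_{f \in \GeneralizationFunctionsLearners} \gGeneralizationGap\pn{f},
\end{aligned}
$$
the worst-case gap over the model class. Bounding $\Phi\pn{\GeneralizationSampleName}$ with high probability automatically bounds $\gGeneralizationGap\pn{f}$ for \emph{every} $f$ simultaneously, which delivers the ``for all $f \in \GeneralizationFunctionsLearners$'' form of the claim.

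First I would verify the bounded-differences (McDiarmid) property of $\Phi$. Replacing one sample $\pn{x_\GeneralizationSampleIndex, y_\GeneralizationSampleIndex}$ by an independent copy changes the empirical risk of any fixed $f$ by at most $1/\GeneralizationSampleSize$, because $g \in \spn{0,1}$; taking suprema and using $\abs{\sup_f a_f - \sup_f b_f} \leq \sup_f \abs{a_f - b_f}$ transfers this stability to $\Phi$ itself. McDiarmid's inequality with all increments equal to $1/\GeneralizationSampleSize$ then yields, with probability at least $1 - \GeneralizationGapBoundFailProbability/2$,
$$
\begin{aligned}
\Phi\pn{\GeneralizationSampleName} \leq \bE_{\GeneralizationSampleName}\spn{\Phi\pn{\GeneralizationSampleName}} + \sqrt{\frac{\ln\pn{2/\GeneralizationGapBoundFailProbability}}{2\GeneralizationSampleSize}}.
\end{aligned}
$$

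The core and most delicate step is the \emph{symmetrization} bound $\bE_{\GeneralizationSampleName}\spn{\Phi\pn{\GeneralizationSampleName}} \leq 2\,\bE_{\GeneralizationSampleName}\spn{\GeneralizationEmpiricalRademacherComplexity\pn{\GeneralizationFunctionsDatumToLoss}}$. Here I would introduce a ghost sample $\GeneralizationSampleName'$ drawn independently from $\GeneralizationDatasetDistribution$, write the expected risk as $\bE_{\GeneralizationSampleName'}$ of the empirical risk on $\GeneralizationSampleName'$, pull the supremum outside the inner expectation by Jensen's inequality, and then insert i.i.d. Rademacher signs $\pn{\GeneralizationRademacherRandomVector}_\GeneralizationSampleIndex$: since $\pn{x_\GeneralizationSampleIndex, y_\GeneralizationSampleIndex}$ and its ghost copy are exchangeable, multiplying their difference by $\pm 1$ leaves the joint distribution unchanged, and splitting the symmetrized difference into its two halves produces exactly two copies of the Rademacher average. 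This is the step I expect to be the main obstacle, because it is where the probabilistic structure (exchangeability plus the sign trick) does the real work, whereas the surrounding manipulations are bookkeeping.

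Finally, because the theorem is stated in terms of the \emph{empirical} Rademacher complexity rather than its expectation, I would apply McDiarmid a second time to the map $\GeneralizationSampleName \mapsto \GeneralizationEmpiricalRademacherComplexity\pn{\GeneralizationFunctionsDatumToLoss}$, which again has increments at most $1/\GeneralizationSampleSize$, giving with probability at least $1 - \GeneralizationGapBoundFailProbability/2$,
$$
\begin{aligned}
\bE_{\GeneralizationSampleName}\spn{\GeneralizationEmpiricalRademacherComplexity\pn{\GeneralizationFunctionsDatumToLoss}} \leq \GeneralizationEmpiricalRademacherComplexity\pn{\GeneralizationFunctionsDatumToLoss} + \sqrt{\frac{\ln\pn{2/\GeneralizationGapBoundFailProbability}}{2\GeneralizationSampleSize}}.
\end{aligned}
$$
A union bound over the two failure events of probability $\GeneralizationGapBoundFailProbability/2$ each, chained through the three inequalities above, gives $\Phi\pn{\GeneralizationSampleName} \leq 2\GeneralizationEmpiricalRademacherComplexity\pn{\GeneralizationFunctionsDatumToLoss} + 3\sqrt{\ln\pn{2/\GeneralizationGapBoundFailProbability}/\pn{2\GeneralizationSampleSize}}$, where the constant $3 = 2 + 1$ is precisely the $2\times$ factor from symmetrization applied to the second concentration term, plus the first concentration term. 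Since $\Phi$ dominates each $\gGeneralizationGap\pn{f}$, this is the desired uniform bound.
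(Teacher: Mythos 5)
The paper does not prove this theorem; it imports it verbatim from \cite{mohriFoundationsMachineLearning2012}, so there is no in-paper argument to compare against. Your reconstruction is the standard proof from that reference --- McDiarmid applied to the worst-case gap $\Phi$ with increments $1/\GeneralizationSampleSize$, ghost-sample symmetrization with Rademacher signs, a second application of McDiarmid to replace the expected Rademacher complexity by the empirical one, and a union bound over the two $\GeneralizationGapBoundFailProbability/2$ events --- and it is correct, including the accounting that produces the constant $3 = 1 + 2$.
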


Therefore, we only need to bound ERC. 
The covering number is a celebrated tool to bound ERC. 

{
\newcommand{\zSetOfInterest}{{\cZ}}
\newcommand{\zSetOfInterestItem}{{z}}
\newcommand{\zCoveringSet}{{\zSetOfInterest'}}
\newcommand{\zCoveringSetItem}{{\zSetOfInterestItem'}}
\newcommand{\zCoveringDistance}{{r}}
\begin{definition}[Covering number]
    \label{def:covering-number}
    The covering number $\GeneralizationCoveringNumber\pn{\zSetOfInterest, \zCoveringDistance, \UtilNormWithPlaceholder}$ of a set $\zSetOfInterest$ with respect to some norm $\UtilNormWithPlaceholder$ is the minimum cardinality of a set $\zCoveringSet$ such that, for any element of $\zSetOfInterest$, there is an element in $\zCoveringSet$ that is within a distance $\zCoveringDistance$ of it, i.e., $\GeneralizationCoveringNumber\pn{\zSetOfInterest, \zCoveringDistance, \UtilNormWithPlaceholder} = \min\set{\abs{\zCoveringSet}: \forall \zSetOfInterestItem \in \zSetOfInterest,\ \exists \zCoveringSetItem \in \zCoveringSet,\ \norm{\zSetOfInterestItem - \zCoveringSetItem} \leq \zCoveringDistance}$. 
\end{definition}
}

Now, we state an established bound of the ERC in terms of the covering number.

\begin{lemma}[Bounding ERC \cite{bartlettSpectrallynormalizedMarginBounds2017}]\label{lemma:bounding-erc}
Let $\GeneralizationFunctionsLearners = \set{f: \GeneralizationDomain \to \spn{-\GeneralizationBartlettBound,\GeneralizationBartlettBound}}$, and assume that there exists a function $\GeneralizationFunctionDatumToLossZero \in \GeneralizationFunctionsLearners$ such that $\GeneralizationFunctionDatumToLossZero\pn{x} = 0$ for all $x \in \GeneralizationDomain$. With $\norm{f}_\infty = \sup_{x \in \GeneralizationDomain} \abs{f\pn{x}}$, for any $\GeneralizationSampleName = \set{x_i}_{i = 1}^\GeneralizationSampleSize \subset \GeneralizationDomain$,
$$
\begin{aligned}
\GeneralizationEmpiricalRademacherComplexity\pn{\GeneralizationFunctionsLearners} \leq \inf_{\alpha > 0} \pn*{\frac{4\alpha}{\sqrt{\GeneralizationSampleSize}} + \frac{12}{\GeneralizationSampleSize}\int_{\alpha}^{2\GeneralizationBartlettBound\sqrt{\GeneralizationSampleSize}} \sqrt{\ln\pn{\GeneralizationCoveringNumber\pn{\GeneralizationFunctionsLearners, r, \UtilNormWithPlaceholder_\infty}}} \wrt{r}}
\end{aligned}.
$$
\end{lemma}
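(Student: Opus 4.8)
The statement is the Dudley entropy-integral bound for a Rademacher process, so the plan is to run a chaining (successive-approximation) argument over a geometric sequence of covers. First I would pass from functions to vectors: identifying each $f \in \GeneralizationFunctionsLearners$ with its evaluation vector $v_f = \spn{f\pn{x_1}, \ldots, f\pn{x_\GeneralizationSampleSize}}^\top \in \bR^\GeneralizationSampleSize$, Definition~\ref{def:empirical-rademacher-complexity} becomes
$$
\GeneralizationEmpiricalRademacherComplexity\pn{\GeneralizationFunctionsLearners} = \frac{1}{\GeneralizationSampleSize}\bE_\GeneralizationRademacherRandomVector\spn*{\sup_{f \in \GeneralizationFunctionsLearners} \GeneralizationRademacherRandomVector^\top v_f},
$$
a Rademacher average over the bounded set $\set{v_f : f \in \GeneralizationFunctionsLearners} \subset \bR^\GeneralizationSampleSize$. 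Since $\abs{f} \le \GeneralizationBartlettBound$, each $v_f$ satisfies $\norm{v_f}_2 \le \GeneralizationBartlettBound\sqrt{\GeneralizationSampleSize}$, so this set has Euclidean diameter at most $2\GeneralizationBartlettBound\sqrt{\GeneralizationSampleSize}$ --- exactly the upper limit of the integral. The hypothesis that the zero function $\GeneralizationFunctionDatumToLossZero$ lies in $\GeneralizationFunctionsLearners$ puts the origin in this set, which is what lets the coarsest approximation in the chain be the single point $\set{0}$ and removes any additive ``anchor'' term; this is where that assumption is used.

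Next I would build the multiscale cover. The plan is to fix dyadic scales $\epsilon_j = 2\GeneralizationBartlettBound\sqrt{\GeneralizationSampleSize}\,2^{-j}$, take a minimal $\epsilon_j$-net $V_j$ of $\set{v_f}$ in $\norm{\cdot}_2$ with nearest-point map $\pi_j$ (so $\pi_0 \equiv 0$), and for a truncation level $J$ telescope
$$
v_f = \sum_{j = 1}^J \pn{\pi_j\pn{f} - \pi_{j - 1}\pn{f}} + \pn{v_f - \pi_J\pn{f}}.
$$
Taking inner products with $\GeneralizationRademacherRandomVector$, the supremum over $f$ splits into finitely many ``link'' terms plus a residual. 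Each link ranges over at most $\abs{V_j}\abs{V_{j - 1}}$ increments, each of Euclidean norm at most $\epsilon_j + \epsilon_{j - 1}$, so Massart's finite-class lemma bounds its Rademacher average by a multiple of $\epsilon_j\sqrt{\ln\GeneralizationCoveringNumber\pn{\set{v_f}, \epsilon_j, \norm{\cdot}_2}}$. Using that $\epsilon_{j - 1} - \epsilon_j = \epsilon_j$ and that $\sqrt{\ln\GeneralizationCoveringNumber}$ is nonincreasing in the radius, the dyadic sum is dominated by the entropy integral $\int_{\epsilon_J}^{2\GeneralizationBartlettBound\sqrt{\GeneralizationSampleSize}}\sqrt{\ln\GeneralizationCoveringNumber\pn{\set{v_f}, r, \norm{\cdot}_2}}\wrt{r}$, with the constant working out to $12$ after this accounting. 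The residual is controlled by Cauchy--Schwarz, $\GeneralizationRademacherRandomVector^\top\pn{v_f - \pi_J\pn{f}} \le \norm{\GeneralizationRademacherRandomVector}_2\,\epsilon_J = \sqrt{\GeneralizationSampleSize}\,\epsilon_J$. Dividing through by $\GeneralizationSampleSize$ and optimizing the stopping scale over all $\alpha > 0$ (not just dyadic values, which is what inflates the residual constant to $4$) produces the free term $4\alpha/\sqrt{\GeneralizationSampleSize}$ together with the integral, giving the claimed infimum.

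The last step is to replace the empirical Euclidean covering number $\GeneralizationCoveringNumber\pn{\set{v_f}, r, \norm{\cdot}_2}$ produced by the chaining with the covering number $\GeneralizationCoveringNumber\pn{\GeneralizationFunctionsLearners, r, \norm{\cdot}_\infty}$ in the statement, using that a uniform (sup-norm) cover of $\GeneralizationFunctionsLearners$ restricts to an admissible cover of $\set{v_f}$; this comparison, together with the normalization between the two metrics, is the step that must be carried out carefully. I expect the main obstacle to be bookkeeping rather than any single hard estimate: keeping the increment-norm bounds and the $\abs{V_j}\abs{V_{j - 1}}$ cardinalities tight enough that the logarithm collapses to $\ln\GeneralizationCoveringNumber$ at a single scale, passing cleanly from the discrete dyadic sum to the integral via monotonicity, and handling the residual so that the cutoff becomes a free parameter $\alpha$. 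None of these requires ideas beyond the standard chaining toolkit, but matching the exact constants $4$ and $12$ --- and reconciling the empirical $\ell_2$ metric of the chaining with the $\norm{\cdot}_\infty$ covering number stated here --- is where the care lies.
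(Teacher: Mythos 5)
The paper does not prove this lemma at all: it is imported from \cite{bartlettSpectrallynormalizedMarginBounds2017} (their Lemma~A.5), so there is no in-paper argument to compare yours against. Your chaining sketch is the standard proof of that cited result, and the structural choices are all correct: identifying functions with their evaluation vectors, using $\GeneralizationFunctionDatumToLossZero \in \GeneralizationFunctionsLearners$ to anchor the coarsest net at the origin, Massart's finite-class lemma on each dyadic link, monotonicity of the entropy to pass from the dyadic sum to the integral, and Cauchy--Schwarz on the residual to produce the free $4\alpha/\sqrt{\GeneralizationSampleSize}$ term.

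The one step that does not close as described is exactly the one you flag at the end. Your chain is run in the unnormalized empirical $\ell_2$ metric on $\set{v_f} \subset \bR^{\GeneralizationSampleSize}$ (which is why the integral's upper limit is the $\ell_2$ diameter $2\GeneralizationBartlettBound\sqrt{\GeneralizationSampleSize}$), but a sup-norm $r$-cover of $\GeneralizationFunctionsLearners$ is only an $r\sqrt{\GeneralizationSampleSize}$-cover of the evaluation vectors in that metric, since $\norm{v_f - v_g}_2 \leq \sqrt{\GeneralizationSampleSize}\,\norm{f - g}_\infty$. Hence the inequality your final paragraph needs, $\GeneralizationCoveringNumber\pn{\set{v_f}, r, \UtilNormWithPlaceholder_2} \leq \GeneralizationCoveringNumber\pn{\GeneralizationFunctionsLearners, r, \UtilNormWithPlaceholder_\infty}$, does not hold at matching radii; carried out honestly, your argument yields the bound with $\GeneralizationCoveringNumber\pn{\GeneralizationFunctionsLearners, r/\sqrt{\GeneralizationSampleSize}, \UtilNormWithPlaceholder_\infty}$ in the integrand, i.e., Bartlett et al.'s original form phrased via the $\ell_2$ covering number of the restricted class. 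This is as much a looseness in the lemma as restated here as it is a defect of your plan, but to prove the statement exactly as written you must either run the entire chain in the sup norm (which changes the prefactor and integration limits) or state explicitly the normalization convention under which $\GeneralizationCoveringNumber_2 \leq \GeneralizationCoveringNumber_\infty$ at equal radius, rather than leaving the comparison as a step to be "carried out carefully."
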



\subsection{Behavior cloning with fast-forwarding}
The space of training data for \PupilTDAGNN{} -- using IL with DAgger -- is the set of tuples $\pn{\gDAggerDatasetDatum}$, and the probability distribution on that space is induced by how DAgger generates these tuples. However, Definition~\ref{def:generalization-gap} requires that the 
training samples are
independent, but the tuples generated by DAgger are not for two reasons.
%
%
First, each tuple is computed using the previously generated tuple. At the beginning of each epoch, flocking is simulated for $\DAggerSimulationTimeStepsCount$ time steps where $\pn{\gExpertInput[\pn{t_{\DAggerSimulationTimeStepIndex + 1}}]}$ is computed by applying an acceleration to $\pn{\gExpertInput[\pn{t_\DAggerSimulationTimeStepIndex}]}$. In addition, $\PupilHistoryMatrix_\FlockAgentIndex\pn{t_\DAggerSimulationTimeStepIndex}$ is computed using $\set{\pn{\gExpertInput[\pn{t_{\DAggerSimulationTimeStepIndex - \PupilHistoryIndex + 1}}]}}_{\PupilHistoryIndex = 1}^\PupilHistoryCount$. The second reason is that the ML controller is trained on previously generated tuples, and it can influence what tuples are generated next by applying its acceleration.

The second reason cannot be addressed without fundamentally changing DAgger, and therefore, the probability distribution it induces on the space of training data. A primary motivation for the DAgger algorithm is allowing the ML controller to influence what training examples (e.g., tuples) are generated, and since the ML controller's weights depend on previously generated training examples, the training examples generated next cannot be independent. Therefore, we cannot analyze the generalization gap of ML controllers trained using IL with DAgger. To move forward with our analysis, we substitute DAgger with a variation of behavior cloning we call {\bf fast-forward behavior cloning} (\GeneralizationFastForwardBC{}).

\GeneralizationFastForwardBC{} addresses the second reason by only using the expert controller's acceleration during the flocking simulations run before each epoch.
The first reason is addressed by ensuring every tuple saved is derived from a newly sampled flock initial condition using fast-forwarding.
Instead of sampling an initial condition once at the beginning of each simulation, we sample an initial condition at each time step of the simulation. At time $t_\DAggerSimulationTimeStepIndex$, we sample $\pn{\gExpertInput[\pn{0}]}$, and then use the expert controller to advance it to $\pn{\gExpertInput[\pn{t_\DAggerSimulationTimeStepIndex}]}$ by sequentially applying the accelerations $\set{\gExpertName\pn{\gExpertInput[\pn{t_{\DAggerSimulationTimeStepIndex'}}]}}_{\DAggerSimulationTimeStepIndex' = 0}^{\DAggerSimulationTimeStepIndex - 1}$. Once the last acceleration is applied, we have also computed $\set{\PupilHistoryMatrix_\FlockAgentIndex\pn{t_\DAggerSimulationTimeStepIndex}}_{\FlockAgentIndex = 1}^\FlockAgentCount$. Finally, we save the tuple $\pn{\gDAggerDatasetDatum[\pn{t_\DAggerSimulationTimeStepIndex}]}$ to the training set.

\GeneralizationFastForwardBC{} is independent of the ML controller's weights, so the dataset of tuples may be computed before training by running $\DAggerEpochCount$ flocking simulations. Note that the dataset is dependent on whether the ML controller uses sum aggregation or mean aggregation (see \secref{sec:methods:improving-tdagnn}) because the tuples contain $\PupilHistoryMatrix_\FlockAgentIndex$. The datasets are split into training and test sets, and each ML controller is trained on its respective training set. We train the ML controller on the entire training set each epoch so that the training set size does not vary as the ML controller trains, allowing us to compare generalization gaps at different epochs.

\subsection{\PupilTDAGNN{} reinterpretations and loss function}
The bound proved in \cite{pmlr-v235-karczewski24a} is for EGNN \cite{satorras2021n} 
applied to graph-level tasks (e.g., graph classification). For these tasks, all \textit{invariant} node features after the last EGNN layer are aggregated, 
and the aggregated feature is fed to a final scoring model (e.g., an MLP) to produce a label for the graph. In flocking, each agent attempts to compute an acceleration for itself to match that of an expert controller, 
so the ML controller is performing a node-level task. 
We need to convert the node-level task into a graph-level task to adapt the analysis in \cite{pmlr-v235-karczewski24a}. 
We normally train ML controllers using squared error (SE) between their acceleration and the expert's acceleration averaged over all agents, resulting in a mean SE (MSE) loss. To convert to a graph-level task, we modify the output of each agent to be the 
SE instead of its acceleration, then define the scoring model $\PupilGraphScorer$ for the flock as
\begin{equation}
\label{eq:generalization-gap:scoring-model}
\begin{aligned}
    \PupilGraphScorer\pn{\mathrm{MSE}} = \PupilGraphScorerBias^2 + \mathrm{MSE} 
\end{aligned},
\end{equation}
where $\mathrm{MSE} = \frac{1}{\FlockAgentCount} \sum_{\FlockAgentIndex = 1}^\FlockAgentCount \mathrm{SE}_\FlockAgentIndex$, $\mathrm{SE}_\FlockAgentIndex =  \norm{\gExpertName\pn{\gExpertInput} - \PupilName\pn{\PupilHistoryMatrix_\FlockAgentIndex}}^2$,
and $\PupilGraphScorerBias$ is a trainable parameter in the scoring model. Note that $\mathrm{SE}_\FlockAgentIndex$ is \OrthogonalGroup{2} invariant, so MSE is an aggregation of invariant node features. The scoring model now computes the original loss given by \eqref{eq:results:loss-function}, so we can simply choose the identity function $\LossFunctionName\pn{y} = y$ as the loss function. However, to comply with the assumptions of Theorem~\ref{theorem:erc-bounds-generalization-gap}, we instead choose the loss function $\LossFunctionName\pn{y} = \min\set{1, y/\LossFunctionMSENormalizationConstant}$ for some $\LossFunctionMSENormalizationConstant > 0$.

\subsection{Bounding the generalization gap}
Paper \cite{pmlr-v235-karczewski24a} uses MLPs to update the node features of EGNN, so our adaptation of their proof considers the MLP representations of $\PupilName$ given by \Lemmaref{lemma:results:conv1d-as-linear} or \ref{lemma:results:eqconv-as-linear}. The authors also assume that the input to EGNN is bounded, so we also assume a bound on the input of the MLP representation in Assumption~\ref{assumption:results:dagger-dataset-non-equivariant-bounded}.

\begin{definition}[Input of \PupilTDAGNN{} as an MLP]\label{def:results:input-for-mlp-non-equivariant}
The first convolutional layer $\ConvOneD_1: \bR^{\gConvChannelsIn[1] \times \gConvFeaturesIn[1]} \to \bR^{\gConvChannelsIn[2] \times \gConvFeaturesIn[2]}$ of $\PupilName$ has $\gConvChannelsIn[1]$ input channels that are row vectors of the form $\gConvInput[1] [\ConvChannelInIndex] = \spn{
\pn{\PupilHistoryMatrix_\FlockAgentIndex}_{\ConvChannelInIndex,1},\; \dots,\; \pn{\PupilHistoryMatrix_\FlockAgentIndex}_{\ConvChannelInIndex,\gConvFeaturesIn[1]}
}$. 
    The input $\gConvInputAsMLP[1]$ of $\PupilName$ as an MLP is the row-wise concatenation of the input channels 
    and $\vone_{\gConvFeaturesIn[2]}^\top$ to account for the bias term of $\ConvOneD_1$: $\gConvInputAsMLP[1]=\spn{\gConvInput[1] [1],\; 
            \dots,\; 
            \gConvInput[1] [\gConvChannelsIn],\; 
            \vone_{\gConvFeaturesIn[2]}^\top}$.
\end{definition}

\begin{definition}[Input of \OrthogonalGroup{2} equivariant \PupilTDAGNN{} as an MLP]\label{def:results:input-for-mlp-equivariant}
$\EqConv_1: \bR^{\gConvChannelsIn[1] \times \gConvFeaturesIn[1]} \to \bR^{\gConvChannelsIn[2] \times \gConvFeaturesIn[2]}$ of $\PupilEquivariantName$ has $\gConvChannelsIn[1]/2$ input channels that are two-row matrices of the form $\gConvInput[1] [\ConvChannelInIndex] = \spn{
            \EqConvChannel_{\ConvChannelInIndex,1}
            ,\; \dots
            ,\; \EqConvChannel_{\ConvChannelInIndex,\gConvFeaturesIn[1]}
}$.
The input $\gConvInputAsMLP[1]$ of $\PupilName$ as an MLP is the row-wise concatenation of the input channels:
$
\gConvInputAsMLP[1]  = \spn{
    \gConvInput[1] [1]
    ,\; \dots
    ,\; \gConvInput[1] [\gConvChannelsIn[1]/2]
}
$.
\end{definition}

\begin{assumption}[\GeneralizationFastForwardBC{} datasets are bounded] 
\label{assumption:results:dagger-dataset-non-equivariant-bounded}
There exists $\DAggerDatumBound \geq 1$ such that for all tuples $\pn{\gDAggerDatasetDatum}$ in the \GeneralizationFastForwardBC{} dataset,
$$
\begin{aligned}
        \max_\FlockAgentIndex\set{
            \norm{\gExpertName\pn{\gExpertInput}},
            \norm{
                \gConvInputAsMLP[1]
            }_F
        } \leq \DAggerDatumBound,
\end{aligned}
$$
where \gExpertName{} is the expert controller and $\gConvInputAsMLP[1]$ is defined in either Definition~\ref{def:results:input-for-mlp-non-equivariant} or Definition~\ref{def:results:input-for-mlp-equivariant}.
\end{assumption}

Now we state the generalization gap bound for the ML controllers with proof in the appendix.

\begin{proposition}[Generalization bound of TDAGNN]
Let $P$ be the probability distribution over tuples $\pn{\gDAggerDatasetDatum}$ induced by \GeneralizationFastForwardBC{}. Let 
$\cL\pn{y} = \min\set{1, y/\LossFunctionMSENormalizationConstant}$ for $\LossFunctionMSENormalizationConstant > 0$ be the loss function. Let $\set{\PupilWeightMatrix_\PupilLayerIndex}_{\PupilLayerIndex = 1}^{\PupilLayerCount}$ be the weights of the MLP representation $\PupilAsMLPName$ of $\PupilName$ given by \Lemmaref{lemma:results:conv1d-as-linear} or \ref{lemma:results:eqconv-as-linear}, and let $\PupilGraphScorerBias$ of $\PupilGraphScorer$ in \eqref{eq:generalization-gap:scoring-model} be such that $\PupilGraphScorerBias \in \spn{0, \sqrt{\LossFunctionMSENormalizationConstant}}$. For any $\delta > 0$, with probability at least $1 - \delta$ over choosing a batch $\GeneralizationSampleName$ of $\GeneralizationSampleSize$ tuples sampled from $P$, the following bound holds:
\begin{equation}\label{eq:generalization-gap:bound}
\begin{aligned}
    &\ \gGeneralizationGap\pn{\PupilGraphScorer} \leq \frac{8}{\GeneralizationSampleSize} +\\
    &\ 
    \frac{48\PupilWeightMatrixLargestDim}{\sqrt{\GeneralizationSampleSize}}\sqrt{\pn{3\PupilLayerCount + 1}\ln\pn{10\PupilLayerCount\DAggerDatumBound\PupilActivationLipshitzConstant^{\PupilLayerCount}\sqrt{\PupilWeightMatrixLargestDim\GeneralizationSampleSize\LossFunctionMSENormalizationConstant}} + \pn{2\PupilLayerCount + 3}\sum_{\PupilLayerIndex = 1}^{\PupilLayerCount} \ln\pn{\max\set{1,\norm{\PupilWeightMatrix_\PupilLayerIndex}_F}}} + \\
    &\ 3\sqrt{\frac{\ln\pn{\frac{2}{\delta}}}{2\GeneralizationSampleSize}}.
\end{aligned}
\end{equation}
\end{proposition}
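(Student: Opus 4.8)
The plan is to instantiate the abstract generalization machinery of Theorem~\ref{theorem:erc-bounds-generalization-gap} and Lemma~\ref{lemma:bounding-erc} on the graph-level reformulation built around the scoring model $\PupilGraphScorer$, and then to control the only remaining unknown -- a covering number -- by a layer-by-layer argument on the MLP representation of the controller. Since the chosen loss $\cL\pn{y} = \min\set{1, y/\LossFunctionMSENormalizationConstant}$ takes values in $\spn{0,1}$, the hypotheses of Theorem~\ref{theorem:erc-bounds-generalization-gap} hold, so $\gGeneralizationGap\pn{\PupilGraphScorer}$ is bounded by $2\GeneralizationEmpiricalRademacherComplexity\pn{\GeneralizationFunctionsDatumToLoss} + 3\sqrt{\ln\pn{2/\delta}/\pn{2\GeneralizationSampleSize}}$, and the last term already matches the final summand of \eqref{eq:generalization-gap:bound}. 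It then remains to bound $\GeneralizationEmpiricalRademacherComplexity\pn{\GeneralizationFunctionsDatumToLoss}$, which I would do through Lemma~\ref{lemma:bounding-erc} with $\GeneralizationBartlettBound = 1$; the technical requirement that the class contain an identically-zero map is met for free by augmenting with the constant $0$, which raises both the ERC and the covering number only negligibly (the covering number by at most one).

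Second, I would reduce the covering number $\GeneralizationCoveringNumber\pn{\GeneralizationFunctionsDatumToLoss, r, \UtilNormWithPlaceholder_\infty}$ to a covering number of the weight matrices $\set{\PupilWeightMatrix_\PupilLayerIndex}$. Both the truncation $\cL$ and the scoring map $\PupilGraphScorer\pn{\mathrm{MSE}} = \PupilGraphScorerBias^2 + \mathrm{MSE}$ are Lipschitz in the per-agent errors, and each $\mathrm{SE}_\FlockAgentIndex = \norm{\gExpertName\pn{\gExpertInput} - \PupilName\pn{\PupilHistoryMatrix_\FlockAgentIndex}}^2$ is Lipschitz in the controller output on the bounded domain furnished by Assumption~\ref{assumption:results:dagger-dataset-non-equivariant-bounded}. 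Passing to the MLP representation $\PupilAsMLPName$ of $\PupilName$ given by Lemma~\ref{lemma:results:conv1d-as-linear} (non-equivariant) or Lemma~\ref{lemma:results:eqconv-as-linear} (equivariant, for $\PupilEquivariantName$), the output becomes a composition of linear maps $\PupilWeightMatrix_\PupilLayerIndex$ interleaved with $\PupilActivationLipshitzConstant$-Lipschitz activations acting on inputs of Frobenius norm at most $\DAggerDatumBound$. A perturbation of each weight matrix therefore propagates to the output with an accumulated factor of order $\PupilActivationLipshitzConstant^{\PupilLayerCount}\DAggerDatumBound$ and polynomial in the $\norm{\PupilWeightMatrix_\PupilLayerIndex}_F$, so a joint cover of the weights at a suitable resolution induces a sup-norm cover of $\GeneralizationFunctionsDatumToLoss$. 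Crucially, this unifies the two cases, since both the equivariant and non-equivariant controllers collapse to the same MLP form.

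Third, I would bound the weight covering number directly. Covering each $\PupilWeightMatrix_\PupilLayerIndex$ (a matrix of dimension at most $\PupilWeightMatrixLargestDim$) by a Frobenius grid gives $\ln \GeneralizationCoveringNumber \lesssim \PupilWeightMatrixLargestDim^2 \sum_{\PupilLayerIndex = 1}^{\PupilLayerCount} \ln\pn{\text{radius}/\text{resolution}}$, where the radius is governed by $\max\set{1, \norm{\PupilWeightMatrix_\PupilLayerIndex}_F}$ and the resolution by the target accuracy $r$ together with the Lipschitz factor $\PupilActivationLipshitzConstant^{\PupilLayerCount}\DAggerDatumBound$ and the $\sqrt{\LossFunctionMSENormalizationConstant}$ coming from $\cL$. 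This produces both the $\ln\pn{10\PupilLayerCount\DAggerDatumBound\PupilActivationLipshitzConstant^{\PupilLayerCount}\sqrt{\PupilWeightMatrixLargestDim\GeneralizationSampleSize\LossFunctionMSENormalizationConstant}}$ factor and the $\sum_{\PupilLayerIndex=1}^{\PupilLayerCount}\ln\pn{\max\set{1,\norm{\PupilWeightMatrix_\PupilLayerIndex}_F}}$ factor in \eqref{eq:generalization-gap:bound}, with the integer coefficients $3\PupilLayerCount + 1$ and $2\PupilLayerCount + 3$ arising from counting how many layer-wise resolution and radius contributions accumulate. Since $\sqrt{\ln \GeneralizationCoveringNumber} \lesssim \PupilWeightMatrixLargestDim\sqrt{(\cdots)}$ depends only weakly (logarithmically) on $r$, substituting into the Dudley integral of Lemma~\ref{lemma:bounding-erc}, taking $\alpha \asymp 1/\sqrt{\GeneralizationSampleSize}$, and recalling the leading factor $2$ of Theorem~\ref{theorem:erc-bounds-generalization-gap} turns the $4\alpha/\sqrt{\GeneralizationSampleSize}$ term into the additive $8/\GeneralizationSampleSize$ and the integral into the $48\PupilWeightMatrixLargestDim/\sqrt{\GeneralizationSampleSize}$ prefactor multiplying the square root, reproducing \eqref{eq:generalization-gap:bound}.

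The main obstacle I anticipate is the second step: propagating the weight-perturbation bound cleanly through all $\PupilLayerCount$ layers while keeping the dependence on $\PupilActivationLipshitzConstant$, $\DAggerDatumBound$, and each $\norm{\PupilWeightMatrix_\PupilLayerIndex}_F$ tight enough to reproduce the exact logarithmic form. In particular, I must verify that the tailored activations $\PupilETDAGNNActivationScaleLog$ and $\PupilETDAGNNActivationScaleTanh$ are genuinely $\PupilActivationLipshitzConstant$-Lipschitz (including at the origin for $\PupilETDAGNNActivationScaleLog$) so the contraction through equivariant layers is valid, and that the truncation $\min\set{1,\cdot}$ together with the constraint $\PupilGraphScorerBias \in \spn{0,\sqrt{\LossFunctionMSENormalizationConstant}}$ keeps the scoring output in $\spn{0,1}$ so that $\GeneralizationBartlettBound = 1$ is legitimate. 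Matching the precise numerical constants of \eqref{eq:generalization-gap:bound} will be bookkeeping-heavy but routine once the covering-number exponent $\PupilWeightMatrixLargestDim^2$ and the per-layer logarithmic factors are pinned down.
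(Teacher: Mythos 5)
Your proposal follows essentially the same route as the paper's proof: Theorem~\ref{theorem:erc-bounds-generalization-gap} combined with the Dudley-type bound of Lemma~\ref{lemma:bounding-erc}, a Lipschitz-in-the-weights reduction of the covering number of the loss class to a grid cover of the weight matrices via the MLP representations of Lemmas~\ref{lemma:results:conv1d-as-linear} and \ref{lemma:results:eqconv-as-linear}, and the same constant bookkeeping with $r\asymp 1/\sqrt{\GeneralizationSampleSize}$. The only cosmetic difference is how the zero-function requirement of Lemma~\ref{lemma:bounding-erc} is met: you augment the class with the constant zero, whereas the paper realizes it inside the transformed class $\set{1-\LossFunctionName\circ\PupilGraphScorer}$ by setting all weights to zero and $\PupilGraphScorerBias=\sqrt{\LossFunctionMSENormalizationConstant}$, which is exactly why the hypothesis $\PupilGraphScorerBias\in\spn{0,\sqrt{\LossFunctionMSENormalizationConstant}}$ appears in the statement.
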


\section{Experiments}\label{sec:experiments}
{\bf Controllers and hyperparameters:} We compare four ML controllers and the expert controller used to train them. The expert controller is \ExpertTanner{} from \eqref{eq:tanner:centralized} with the potential function from \eqref{eq:tanner:centralized:tolstaya}. In \cite{tolstayaLearningDecentralizedControllers2017b}, \ExpertTanner{}'s acceleration $\gExpertName\pn{\gExpertInput}$ is clamped so that $\norm{\gExpertName\pn{\gExpertInput}}_\infty \leq \ExpertAccelerationClampConstant$, but this breaks \ExpertTanner{}'s \OrthogonalGroup{2} equivariance. To retain \OrthogonalGroup{2} equivariance, we enforce $\norm{\gExpertName\pn{\gExpertInput}}_2 \leq \ExpertAccelerationClampConstant$.

\begin{table}[!ht]
    \centering\small
    \begin{tabular}{lrrr}
        \toprule
         \textbf{Controller} & \textbf{
         Aggregation} & \textbf{Activations (after conv. $\PupilLayerIndex$)} & \textbf{Conv. layers} \\
         \midrule
         \PupilTDAGNN{} & Sum & $\PupilActivationName_\PupilLayerIndex = \tanh$, $\PupilLayerIndex \in \set{1,\dots,\PupilLayerCount - 1}$ & $\ConvOneD$ \\
         \midrule
         \PupilTDAGNNTF{} & Sum & $\PupilActivationName_1 = \tanh$ & $\ConvOneD$ \\
         \midrule
         \PupilTDAGNNTFMu{} & Mean & $\PupilActivationName_1 = \tanh$ & $\ConvOneD$ \\
         \midrule
         \PupilETDAGNN{} & Mean & $\PupilActivationEquivariantName_0 = \PupilETDAGNNActivationScaleLog$, $\PupilActivationEquivariantName_1 = \PupilETDAGNNActivationScaleTanh$ & $\EqConv$ \\
         \bottomrule
    \end{tabular}
\caption{
Summary of the architectural differences between the compared ML controllers for flocking.
The subscripts $\PupilLayerIndex$ of the activations in the Activations column indicate the convolutional layer that the activation is applied after. If the subscript $\PupilLayerIndex$ is 0, then the activation is applied before the first convolutional layer. If a subscript value is not listed, then no (or the identity) activation is applied after the corresponding convolutional layer.}
    \label{tab:results:models}
\end{table}

The first controller is \PupilTDAGNN{} from \cite{tolstayaLearningDecentralizedControllers2017b} and serves as a baseline. The second is \PupilTDAGNN{} with the  ``Activate once'' improvement described in \secref{sec:methods:improving-tdagnn} and is denoted by \PupilTDAGNNTF{} -- ``TF'' stands for ``tanh first.'' The third is \PupilTDAGNN{} with both the improvements described in \secref{sec:methods:improving-tdagnn} and is denoted by \PupilTDAGNNTFMu{}. The last is \PupilTDAGNNTFMu{} with the CNN \PupilEquivariantName{} described in \secref{sec:methods:equiv-model} and the equivariant activations proposed in \secref{sec:methods:equivariant-activations}, and it is denoted by \PupilETDAGNN{}. The architectural differences, including what activations are used after each convolutional layer, are summarized in Table~\ref{tab:results:models}.
All ML controllers use $\PupilLayerCount = 3$ convolutional layers mapping $\bR^{\gConvChannelsIn \times \gConvFeaturesIn}$ to $\bR^{\gConvChannelsIn[\PupilLayerIndex+1] \times \gConvFeaturesIn[\PupilLayerIndex+1]}$ for $\PupilLayerIndex \in \set{1,\dots,\PupilLayerCount}$. 
Explictly writing their domains and ranges, $\pn{\gConvChannelsIn[1], \gConvFeaturesIn[1]} = \pn{6, \PupilHistoryCount}$, $\pn{\gConvChannelsIn[2], \gConvFeaturesIn[2]} = \pn{32, 1}$, $\pn{\gConvChannelsIn[3], \gConvFeaturesIn[3]} = \pn{32, 1}$, and $\pn{\gConvChannelsIn[4], \gConvFeaturesIn[4]} = \pn{2, 1}$. 
With these hyperparameters, Table \ref{tab:results:num-model-weights} shows each ML controller's trainable parameter count.

\begin{table}[!ht]
    \centering\small
    \begin{tabular}{lrrrr}
        \toprule
        & \textbf{\textit{\PupilTDAGNN{}}} & \textbf{\textit{\PupilTDAGNNTF{}}} & \textbf{\textit{\PupilTDAGNNTFMu{}}} & \textbf{\textit{\PupilETDAGNN{}}} \\
        \midrule
        \textbf{\#Weights} & 1,730 & 1,730 & 1,730 & 416 \\
        \bottomrule
    \end{tabular}
\caption{
Number of trainable weights of the ML controllers for decentralized flocking.}
\label{tab:results:num-model-weights}
\end{table}

\noindent{\bf Training:} All ML controllers are trained for flocking control using IL with DAgger for $\DAggerEpochCount = 400$ epochs and then tested in flocking, leader following, and obstacle avoidance scenarios. 
We initialize $\DAggerActionPropability_0 = \DAggerActionPropabilityInit=0.993$, and set $\DAggerActionPropability_\DAggerEpochIndex = \max\set{\DAggerActionPropability_{\DAggerEpochIndex - 1}\DAggerActionPropabilityInit, 0.5}$.
Tuples are generated and added to the training set by running flocking simulations for $\DAggerSimulationTimeStepsCount = 2/\TimeStep$ time steps with $\TimeStep = 10^{-2}$. The initial condition for each simulation is sampled from a dataset of initial conditions. The training set of tuples is capped at 10,000 examples, and the oldest examples are discarded first after reaching the cap. After the flocking simulation of each epoch, we sample 200 batches of 20 tuples with replacement from the training set. For each batch, we compute the loss averaged over the batch and update the ML controller's weights.
The loss function for one tuple is
\begin{equation}
\label{eq:results:loss-function} 
\begin{aligned}
\LossFunctionName\pn{\gDAggerDatasetDatum} = \frac{1}{\FlockAgentCount} \sum_{\FlockAgentIndex = 1}^\FlockAgentCount \norm{\gExpertName\pn{\gExpertInput} - \PupilName\pn{\PupilHistoryMatrix_\FlockAgentIndex}}^2
\end{aligned}.
\end{equation}
The trainable parameters are initialized using Xavier uniform initialization with gain 1, and optimized using the Adam optimizer with learning rate $5 \times 10^{-5}$, $\beta_1 = 0.9$, and $\beta_2 = 0.999$.

\noindent{\bf Initial conditions:}
The dataset of flock initial conditions is randomly generated following the procedure described in \cite{tolstayaLearningDecentralizedControllers2017b}. We refer to this dataset as the \textit{RandomDisk} dataset. Each initial condition composes $\FlockAgentCount$ agents whose positions are distributed in a 2D disk of radius $\sqrt{\FlockAgentCount}$. Having radius $\sqrt{\FlockAgentCount}$ implies that the ratio of the number of agents to the disk's area is the constant $\pi$. The agents are placed in the disk uniformly randomly such that three conditions are met: the agents are not too close (for $\FlockAgentIndexTwo \in \FlockAgentNeighborhood_\FlockAgentIndex\pn{0}$,  $\FlockAgentMinDist \leq \norm{\FlockAgentRelativePos} \leq \FlockAgentCommRadius$); 
the agents have enough neighbors ($\abs{\FlockAgentNeighborhood_\FlockAgentIndex\pn{0}} \geq \FlockCommunicationGraphMinDegree \geq 0$); and, the flock's communication graph is connected. The agents' velocities are initialized to $\FlockVelMatrix\pn{0} = \FlockAgentVelocityInitial + \FlockAgentVelocityBias\vone_\FlockAgentCount^\top$ where the entries of $\FlockAgentVelocityInitial$ and $\FlockAgentVelocityBias$ are uniformly randomly sampled from $\spn{-\FlockAgentVelocityMaxMaxNorm, \FlockAgentVelocityMaxMaxNorm}$ for $\FlockAgentVelocityMaxMaxNorm \in \hlpn{0, \infty}$. \Figref{fig:dataset:randomdisk} shows example initial conditions of this dataset. Following \cite{tolstayaLearningDecentralizedControllers2017b}, 
we choose the RandomDisk dataset parameters as $\FlockAgentCount = 100$, $\FlockAgentMinDist = 0.1$, $\FlockAgentCommRadius = 1$, $\FlockCommunicationGraphMinDegree = 2$, and $\FlockAgentVelocityMaxMaxNorm = 3$.

\begin{figure}[!ht]
    \centering
    \begin{subfigure}[t]{.29\textwidth}
        \centering
        \includegraphics[width=\textwidth]{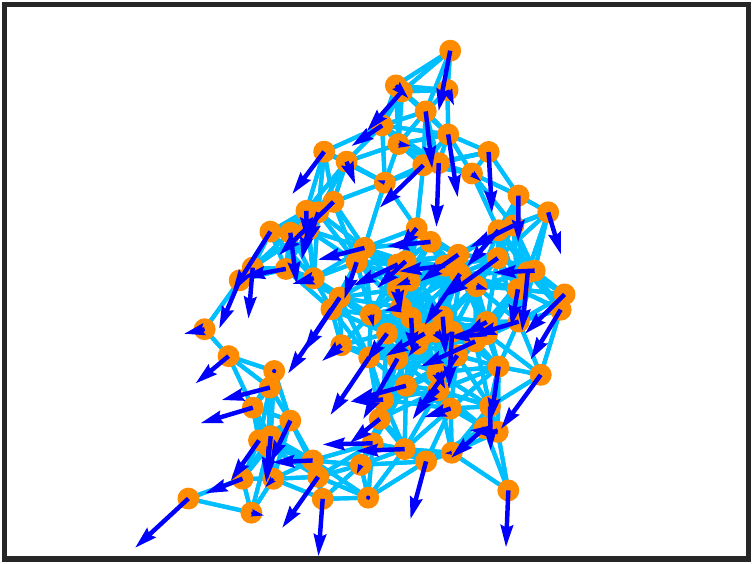}
        \label{fig:dataset:randomdisk:0}
    \end{subfigure}
    \hspace{1em}
    \begin{subfigure}[t]{.29\textwidth}
        \centering
        \includegraphics[width=\textwidth]{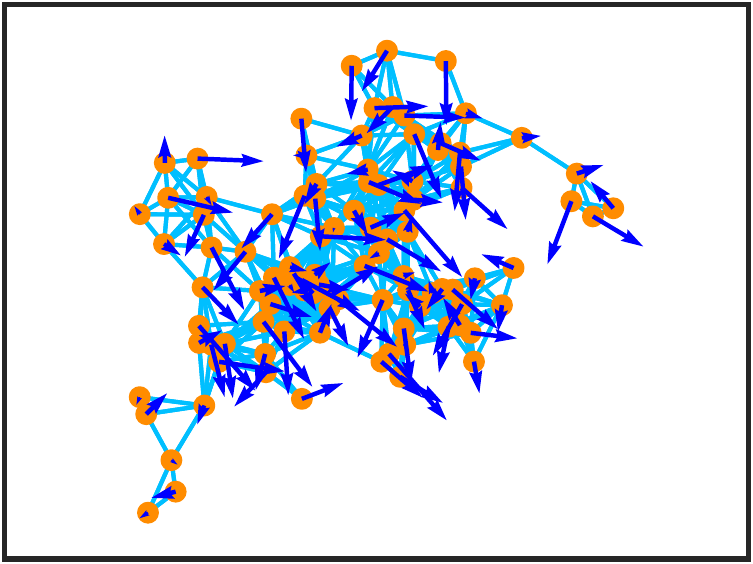}
        \label{fig:dataset:randomdisk:1}
    \end{subfigure}
    \hspace{1em}
    \begin{subfigure}[t]{.29\textwidth}
        \centering
        \includegraphics[width=\textwidth]{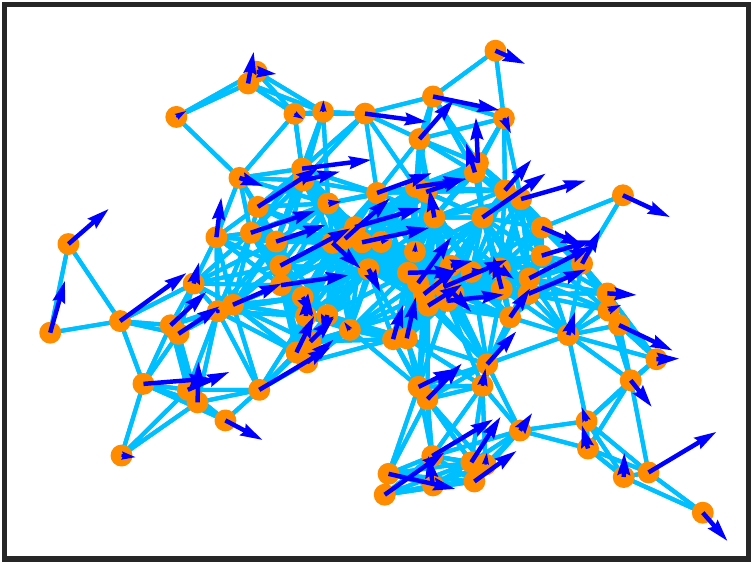}
        \label{fig:dataset:randomdisk:2}
    \end{subfigure}
\caption{
Examples from the RandomDisk dataset of flock initial conditions. There are $\FlockAgentCount = 100$ agents (orange dots) with at least $\deg_{\min} = 2$ neighbors (indicated by light blue edges connecting them). The distance between an agent and its neighbors is between $\FlockAgentMinDist = 0.1$ and $\FlockAgentCommRadius = 1$. The agents' velocities (dark blue arrows) have magnitudes no larger than $2\FlockAgentVelocityMaxMaxNorm = 6$. 
}
\label{fig:dataset:randomdisk}
\end{figure}

\noindent{\bf Metrics:} We quantify the performance of the ML controllers using the two metrics:
\begin{itemize}
\item \textbf{Velocity variance:} The variance of velocities is $\var\pn{\FlockAgentVel} = \frac{1}{\FlockAgentCount}\sum_{\FlockAgentIndex = 1}^\FlockAgentCount \norm{\FlockAgentVel_\FlockAgentIndex - \mean\pn{\FlockAgentVel}}^2$ where $\mean\pn{\FlockAgentVel} := \frac{1}{\FlockAgentCount}\sum_{\FlockAgentIndex = 1}^\FlockAgentCount \FlockAgentVel_\FlockAgentIndex$. 
Lower variance implies the flock is closer to a limiting velocity of asymptotic flocking.

\item \textbf{Mean acceleration norm:} The mean acceleration norm of the agents is $\frac{1}{\FlockAgentCount} \sum_{\FlockAgentIndex = 1}^\FlockAgentCount \norm{\FlockAgentAccel_\FlockAgentIndex\pn{t_\DAggerSimulationTimeStepIndex}}$. 
A lower mean acceleration norm means the controller uses a smaller control input to achieve its objective. This metric measures the controller's efficiency because less acceleration implies less energy expenditure for the flock.
\end{itemize}

\subsection{Flocking}\label{sec:results:flocking}
\newcommand{\RiemannSumVelVar}{IVV}
\newcommand{\RiemannSumMeanAccelNorm}{IMAN}

In this task, we observe significant performance gaps
between \ExpertTanner{}, the ML controllers using sum aggregation (\PupilTDAGNN{} and \PupilTDAGNNTFMu{}), and the ML controllers using mean aggregation (\PupilTDAGNNTFMu{} and \PupilETDAGNN{}).
When presenting these results, we will compare these groups, and then compare the controllers within these groups when necessary. Keep in mind that \PupilETDAGNN{} has 75\% fewer trainable weights than other controllers (see Table~\ref{tab:results:num-model-weights}).

For training, \figref{fig:results:flocking:randomdisk:by-epoch} shows the median Integral of the Velocity Variance (\RiemannSumVelVar{}) 
and the median Integral of the Mean Acceleration Norm (\RiemannSumMeanAccelNorm{}) on validation set of the RandomDisk dataset with $\FlockAgentCount = 100$ and $\TimeStep = 10^{-2}$. The mean-aggregation ML controllers achieve a lower median \RiemannSumVelVar{} and \RiemannSumMeanAccelNorm{} by epoch 80 than the sum-aggregation ML controllers do by epoch 400. Furthermore, at epoch 400, the \RiemannSumVelVar{} and \RiemannSumMeanAccelNorm{} IQRs 
of the best sum-aggregation ML controllers and worst mean-aggregation ML controllers do not overlap.

\begin{figure}[!ht]
    \centering
    \begin{subfigure}[T]{\textwidth}
        \centering
        \includegraphics[width=0.8\textwidth]{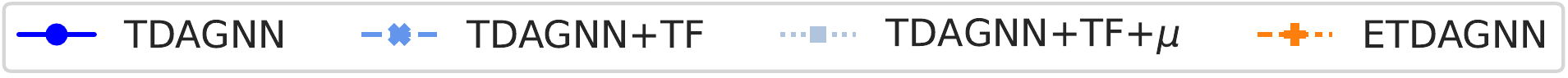}
    \end{subfigure} \\
    \begin{subfigure}[T]{.49\textwidth}
        \centering
        \includegraphics[width=.7\textwidth]{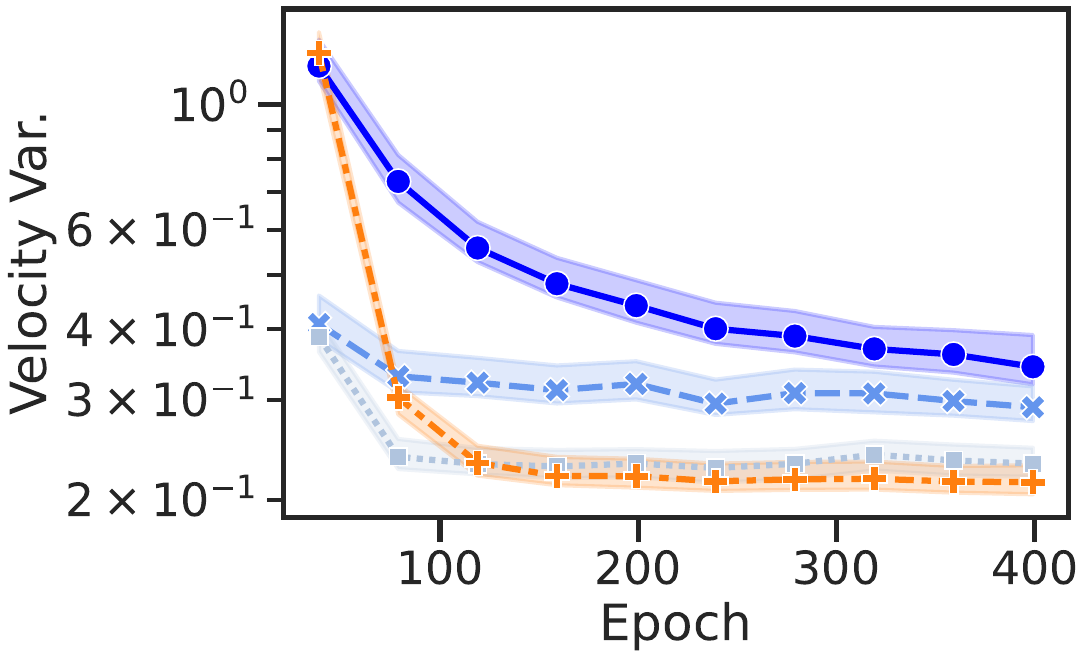}
    \end{subfigure}
    \begin{subfigure}[T]{.49\textwidth}
        \centering
        \includegraphics[width=.7\textwidth]{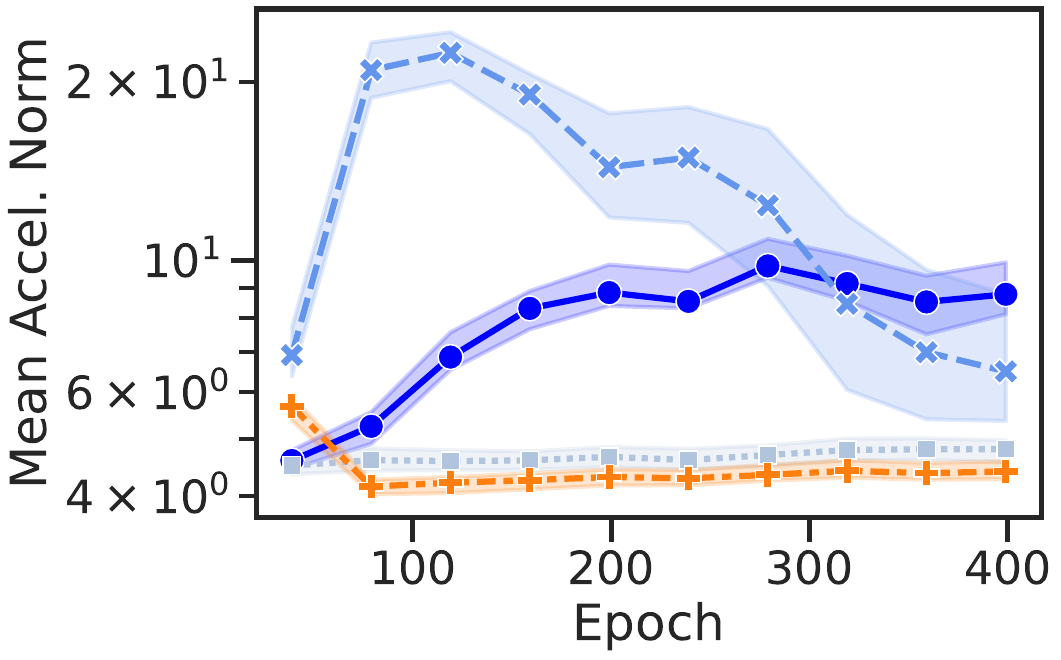}
    \end{subfigure}
\caption{
Performance of ML controllers in flocking as they train. They are evaluated on the RandomDisk validation set with $100$ agents. Each simulation is run for $\DAggerSimulationTimeStepsCount = 2/\TimeStep$ time steps where $\TimeStep = 10^{-2}$.
The curves show the median values of the respective metrics' integrals and the colored areas show their corresponding interquartile ranges.}
\label{fig:results:flocking:randomdisk:by-epoch}
\end{figure}

For the sum-aggregation ML controllers,
\PupilTDAGNNTF{} has a lower median \RiemannSumVelVar{} than \PupilTDAGNN{} for all epochs. 
\PupilTDAGNNTF{} has a higher \RiemannSumMeanAccelNorm{} than \PupilTDAGNN{} at epoch 40, and is over double compared to \PupilTDAGNN{} from epoch 80 to epoch 160. It takes until epoch 320 for it to become lower than \PupilTDAGNN{}.
For the mean-aggregation ML controllers, at epoch 40, \PupilTDAGNNTFMu{} has a median \RiemannSumVelVar{} less than half that of \PupilETDAGNN{}, but \PupilETDAGNN{} matches \PupilTDAGNNTFMu{} by epoch 120.
By epoch 400, the \RiemannSumVelVar{} of \PupilTDAGNNTFMu{} is only larger than \PupilETDAGNN{} by a few hundredths.
At epoch 40, \PupilTDAGNNTFMu{} also has a lower \RiemannSumMeanAccelNorm{} than \PupilETDAGNN{}, but by epoch 80 and for the rest of the epochs, the \RiemannSumVelVar{} of \PupilTDAGNNTFMu{} and \PupilETDAGNN{} remain within that gap, both increasing at the same rate.

After training, we test each ML controller's ability to achieve asymptotic flocking with separation on 50 RandomDisk initial conditions not used for training or hyperparameter tuning.
We use $\FlockAgentCount \in \set{50, 100, 200, 400}$ and $\TimeStep = 10^{-3}$.
\Figref{fig:results:flocking:randomdisk:vel-var} shows the median velocity variance over time.
For all $\FlockAgentCount$, the mean-aggregation controllers reduce the velocity variance from about 4 to below 0.2 faster than the sum-aggregation controllers.
When $\FlockAgentCount \leq 100$, the mean-aggregation ML controllers also reach a lower velocity variance at the last time step, but when $\FlockAgentCount \geq 200$, the sum-aggregation ML controllers reach a lower velocity variance.
Surprisingly, when $\FlockAgentCount = 50$, the velocity variances of \PupilTDAGNNTF{} and the mean-aggregation ML controllers are below that of \ExpertTanner{} for the majority of the simulation.
The exception is from times $t_\DAggerSimulationTimeStepIndex\TimeStep \in \spn{0.1, 0.3}$ where \PupilTDAGNNTF{} has a larger velocity variance than \ExpertTanner{}.
When $\FlockAgentCount = 100$, \PupilETDAGNN{} reaches a lower velocity variance than \ExpertTanner{} at the end of the simulation.
\Figref{fig:results:flocking:randomdisk:accel} shows the median mean acceleration norm over time.
The sum-aggregation ML controllers' median mean acceleration norm at the last time step matches or is a few hundredths smaller than the mean-aggregation ML controllers.

\begin{figure}[!ht]
    \centering
    \begin{subfigure}[T]{\textwidth}
        \centering
        \includegraphics[width=.95\textwidth]{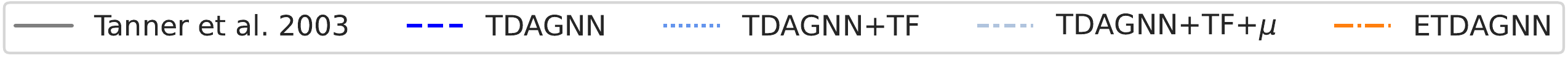}
    \end{subfigure} \\
    \begin{subfigure}[T]{.23\textwidth}
        \centering
        \hspace{-2.8em}
        \includegraphics[width=1.21\textwidth]{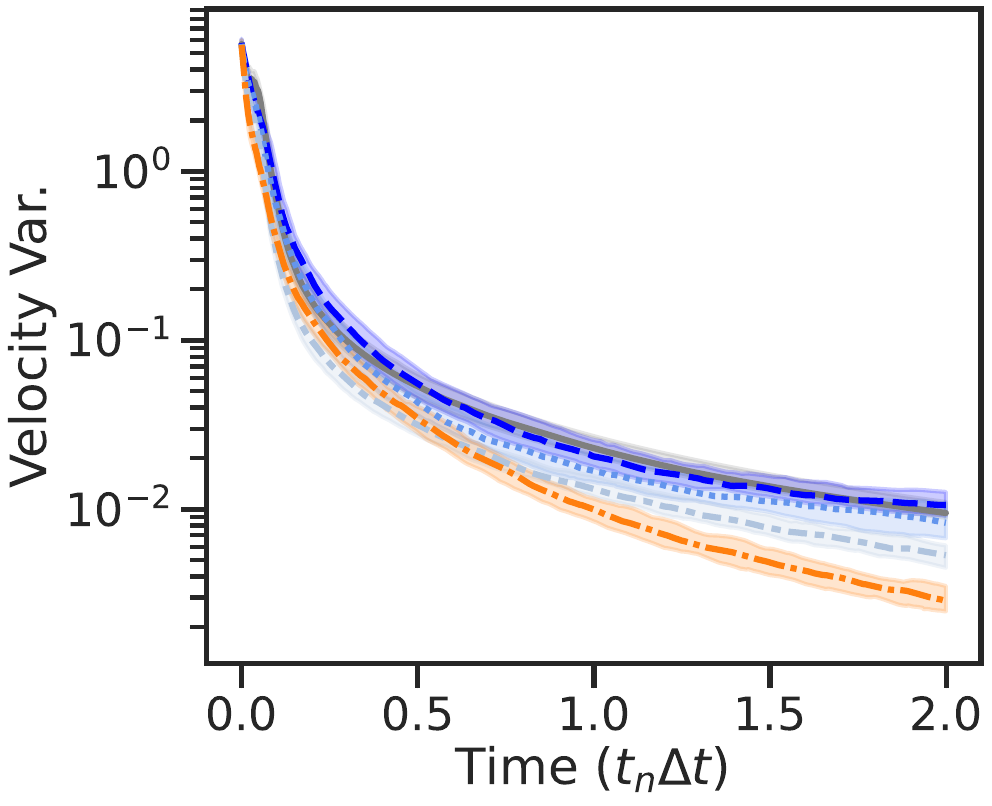}
    \end{subfigure}
    \begin{subfigure}[T]{.23\textwidth}
        \centering
        \includegraphics[width=\textwidth]{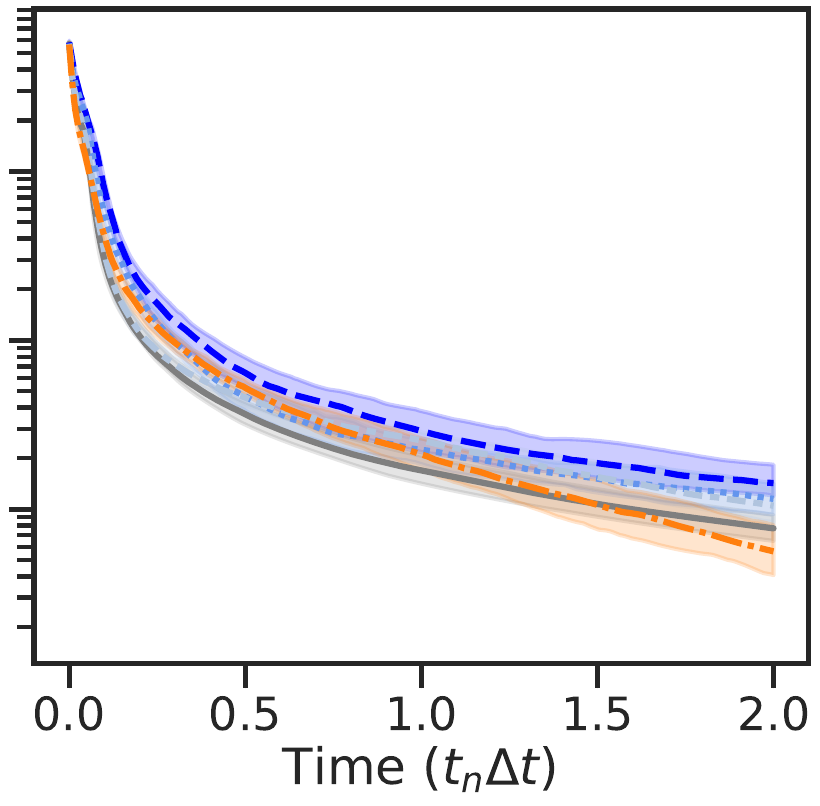}
    \end{subfigure}
    \begin{subfigure}[T]{.23\textwidth}
        \centering
        \includegraphics[width=\textwidth]{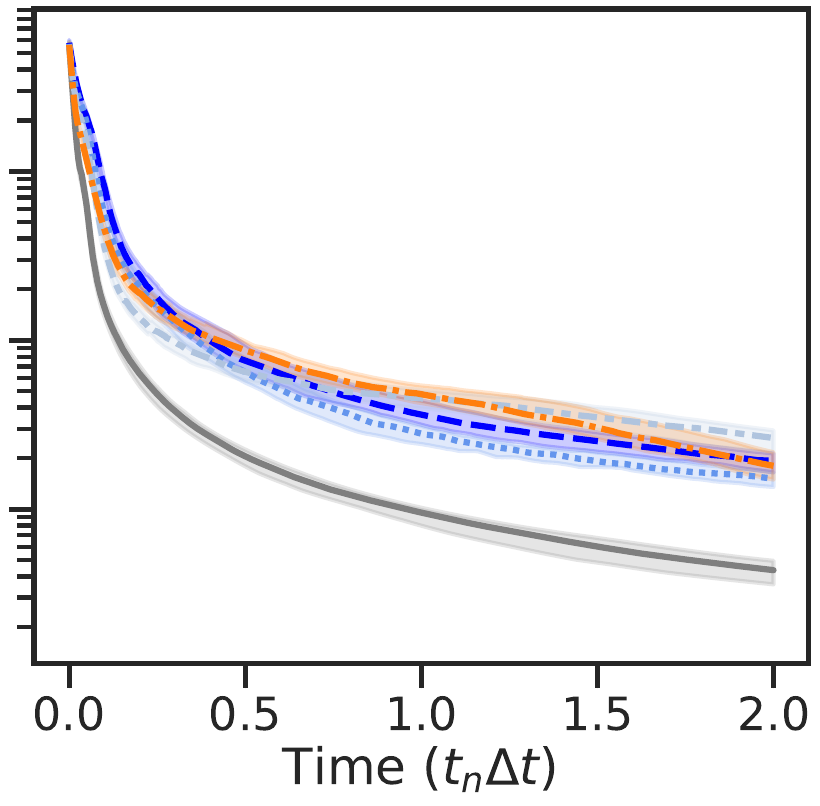}
    \end{subfigure}
    \begin{subfigure}[T]{.23\textwidth}
        \centering
        \includegraphics[width=\textwidth]{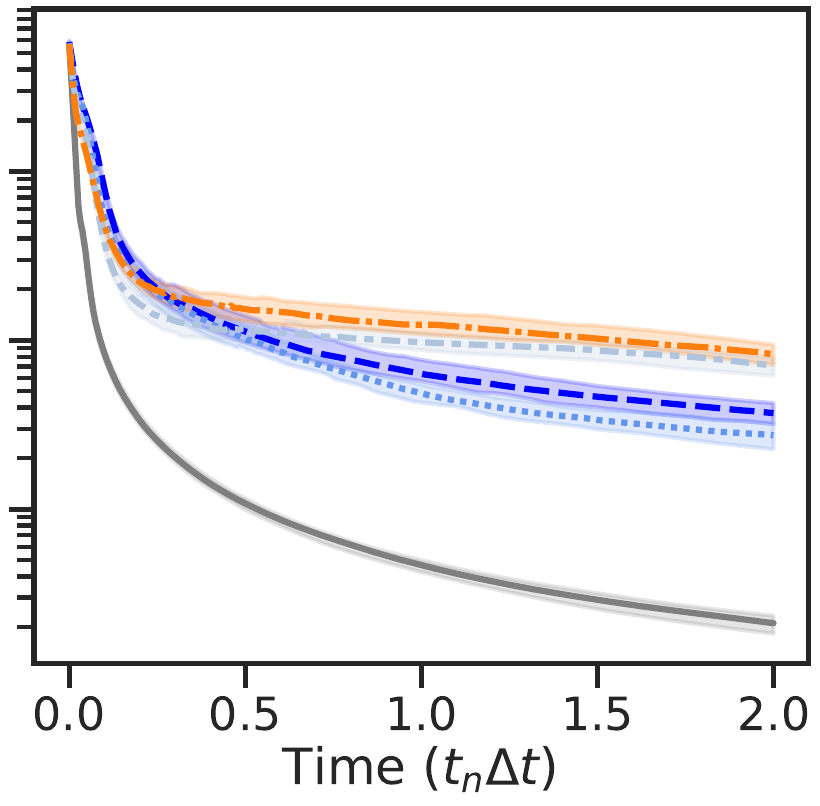}
    \end{subfigure}
\caption{
Velocity variance of the controllers in flocking over the simulation time with time step size $\TimeStep = 10^{-3}$.
They are evaluated on the RandomDisk test set with the number of agents $\FlockAgentCount \in \set{50, 100, 200, 400}$. The lines show the median metric values and the colored areas show the corresponding interquartile ranges.}
    \label{fig:results:flocking:randomdisk:vel-var}\vspace{-0.15cm}
\end{figure}

\begin{figure}[!ht]
    \centering
    \begin{subfigure}[T]{\textwidth}
        \centering
        \includegraphics[width=.95\textwidth]{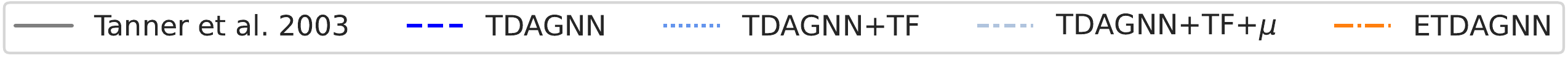}
    \end{subfigure} \\
    \begin{subfigure}[T]{.23\textwidth}
        \centering
        \hspace{-2.8em}
        \includegraphics[width=1.21\textwidth]{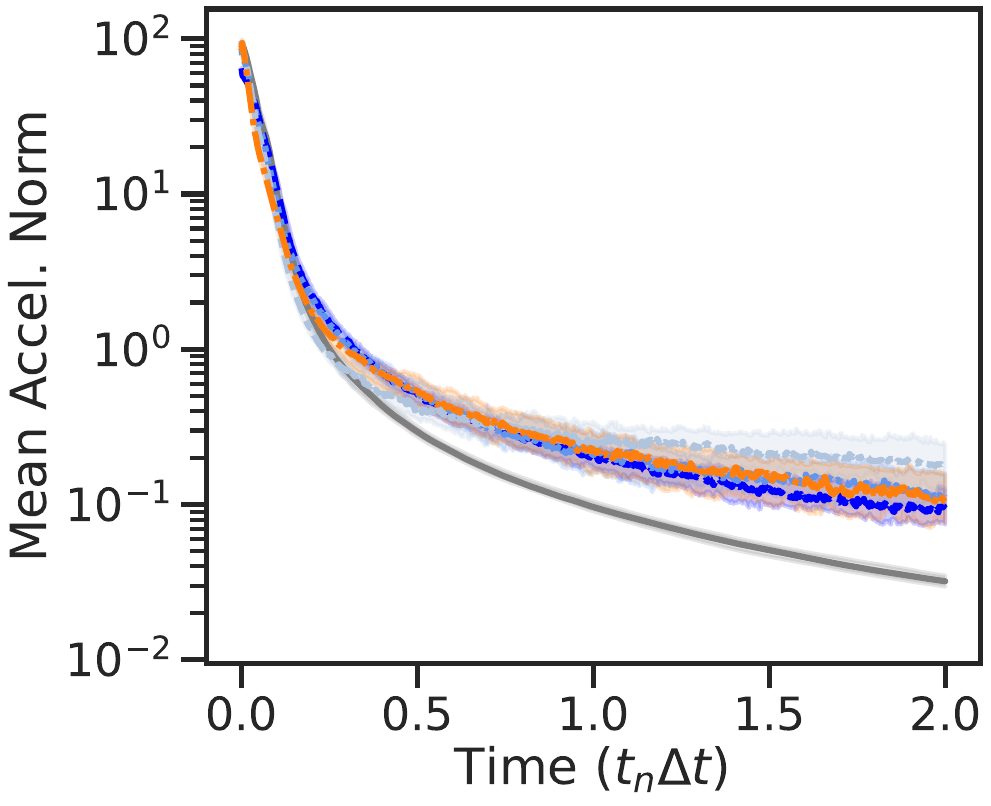}
    \end{subfigure}
    \begin{subfigure}[T]{.23\textwidth}
        \centering
        \includegraphics[width=\textwidth]{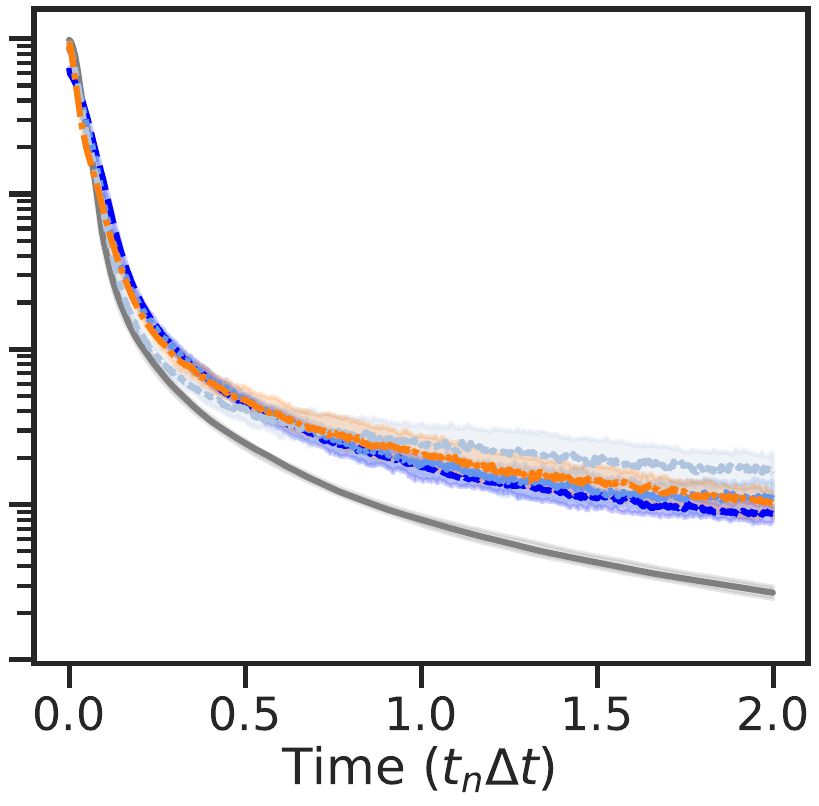}
    \end{subfigure}
    \begin{subfigure}[T]{.23\textwidth}
        \centering
        \includegraphics[width=\textwidth]{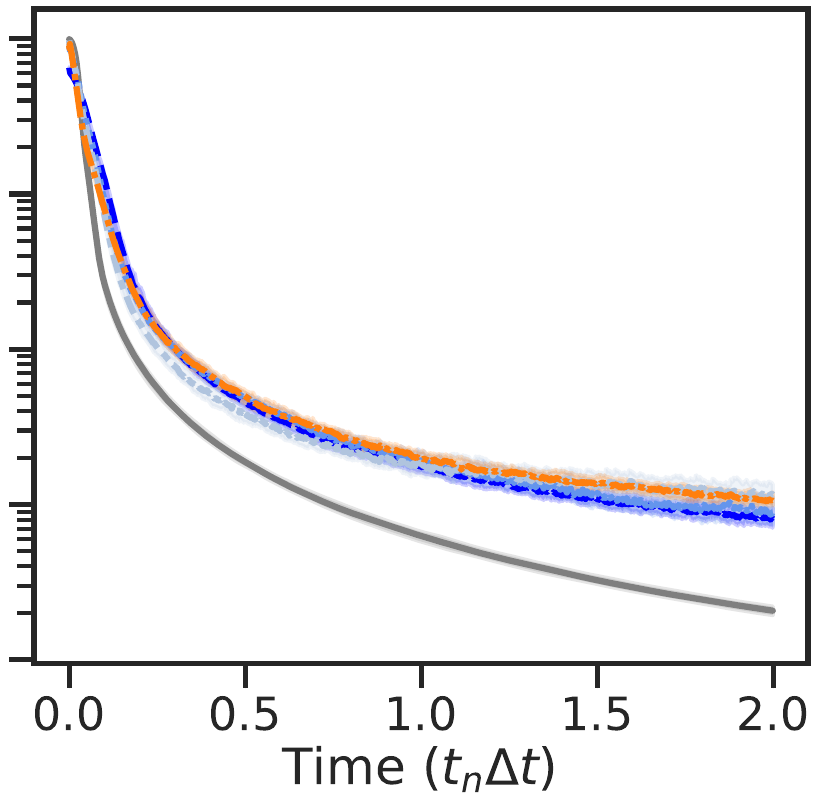}
    \end{subfigure}
    \begin{subfigure}[T]{.23\textwidth}
        \centering
        \includegraphics[width=\textwidth]{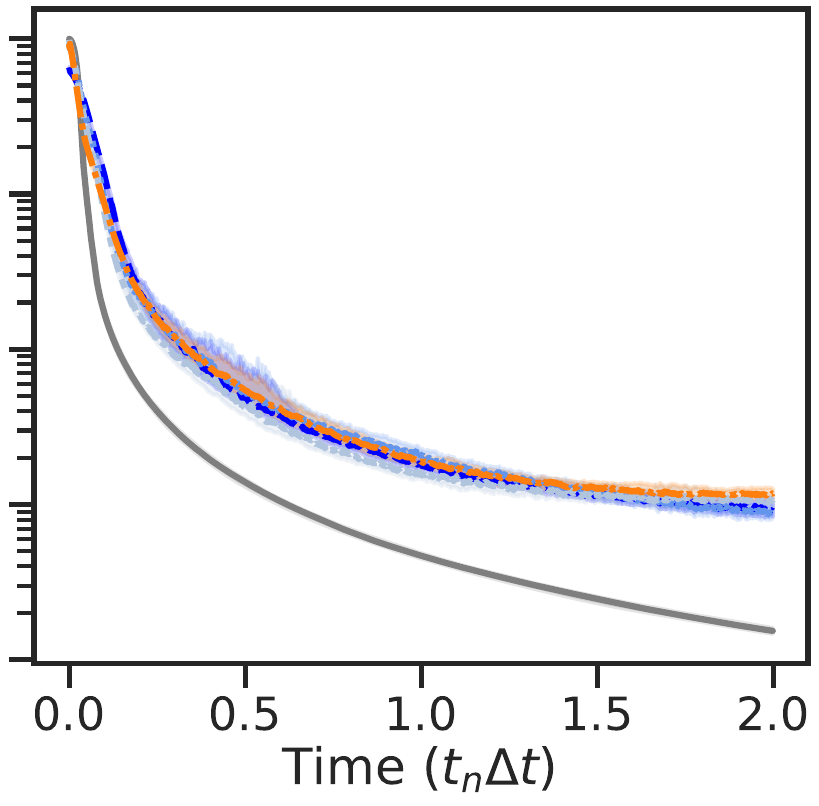}
    \end{subfigure}
\caption{
Mean acceleration norm of the controllers in flocking over the simulation time with time step size $\TimeStep$. They are evaluated on the RandomDisk test set with the number of agents $\FlockAgentCount \in \set{50, 100, 200, 400}$. The lines show the median metric values and the colored areas show the corresponding interquartile ranges.}
\label{fig:results:flocking:randomdisk:accel}\vspace{-0.15cm}
\end{figure}

In summary, there is no best-performing ML controller for all tested flock sizes. For all tests, the mean-aggregation ML controllers reduce the velocity variance of the flock faster than the sum-aggregation ones when the variance is large. 
The ML controller that reaches the smallest velocity variance by the last time step depends on $\FlockAgentCount$. When $\FlockAgentCount \leq 100$, \PupilETDAGNN{} reaches the lowest 
variance at the last time step 
than other ML controllers.
For larger $\FlockAgentCount$, \PupilTDAGNNTF{} reaches the smallest velocity variance. The ML controllers have approximately the same mean acceleration norm.
Animations of flocking are available on GitHub.\footnote{Flocking animations: \url{github.com/Utah-Math-Data-Science/Equivariant-Decentralized-Controllers/tree/main/misc/animations/flocking}}

\subsection{Leader following}\label{sec:results:leader-following}

{
\newcommand{\zAgentLeaderVelocity}{{\vv_{\mathrm{ldr}}}}

In leader following, the leader agents are selected from the flock and instructed to move along some predefined path (e.g., a line), and the 
other agents are followers. 
%
%
We test the ability of ML controllers, already trained for flocking, in conducting leader following using 50 RandomDisk initial conditions. 
Two agents from each initial condition are randomly selected to be leaders of the flock and the leaders' velocities are set equal. To prevent the followers from changing the leaders' trajectories, the leaders ignore all messages from the followers, i.e., the leaders only have directed edges from them to other agents. 
Consequently, the leaders do not pass on summaries of the messages they receive to their neighbors.
In addition, the ML controllers only control the followers, and since the ML controllers are trained to maintain communication graph connectivity, the controllers are compelled to have the followers match the velocity of the leaders.
The leader following simulations are run for $\DAggerSimulationTimeStepsCount = 3/\TimeStep$ time steps. In addition to the flocking validation metrics, leader following adds this validation metric:
\begin{itemize}
\item \textbf{Mean leader velocity distance (MLVD):} Let $\zAgentLeaderVelocity$ be the velocity of the leaders in the flock. 
We measure how close the flock is to the limiting velocity with $\frac{1}{\FlockAgentCount}\sum_{\FlockAgentIndex = 1}^\FlockAgentCount \norm{\FlockAgentVel_\FlockAgentIndex\pn{t_\DAggerSimulationTimeStepIndex} - \zAgentLeaderVelocity}$. 
\end{itemize}

\Figref{fig:results:leader-following:randomdisk:by-time} shows the median MLVD and median mean acceleration norm over time. By the last time step, \PupilTDAGNNTFMu{} has the lowest median MLVD. 
The medians of \PupilETDAGNN{} and \PupilTDAGNN{} are nearly double and triple that of \PupilTDAGNNTFMu{}.
The ML controllers have approximately the same mean acceleration norm.
%
%
Based on the performance of the ML controllers for the leader following task, we recommend \PupilTDAGNNTFMu{} for the best performance; however, \PupilETDAGNN{} provides comparable performance with 75\% fewer trainable parameters.
Animations of leader following are available on GitHub.\footnote{Leader following animations: \url{github.com/Utah-Math-Data-Science/Equivariant-Decentralized-Controllers/tree/main/misc/animations/leader_following}}

\begin{figure}[!ht]
    \centering
    \begin{subfigure}[T]{\textwidth}
        \centering
        \includegraphics[width=0.95\textwidth]{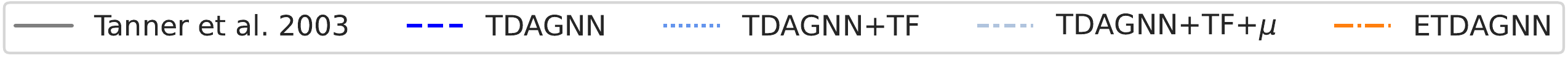}
    \end{subfigure} \\
    \begin{subfigure}[T]{.49\textwidth}
        \centering
        \includegraphics[width=.9\textwidth]{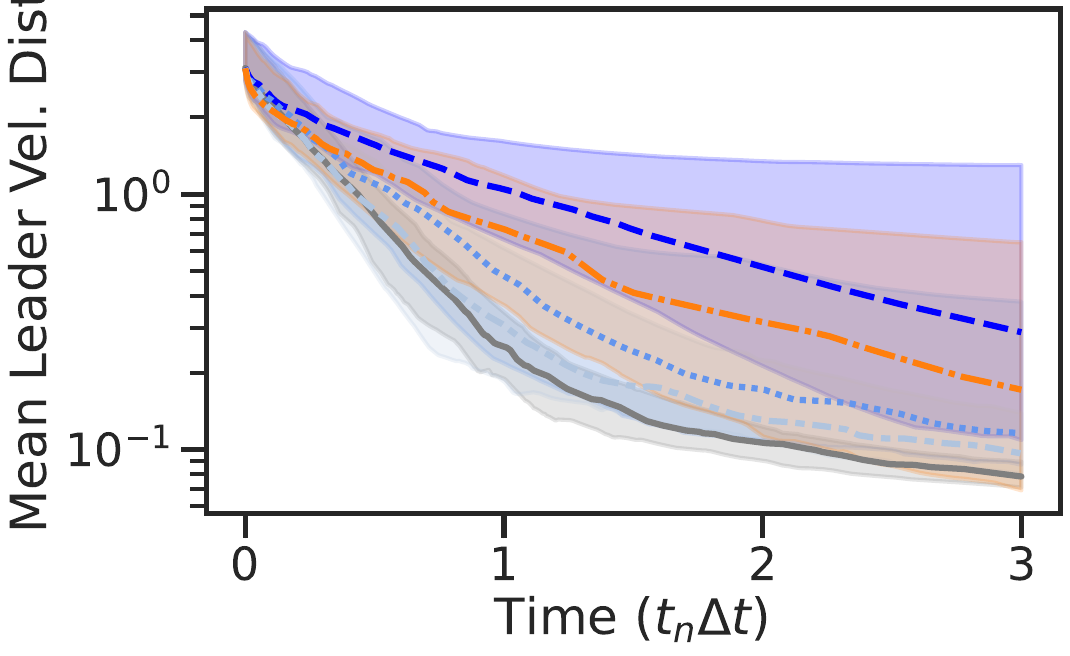}
    \end{subfigure}
    \begin{subfigure}[T]{.49\textwidth}
        \centering
        \includegraphics[width=.9\textwidth]{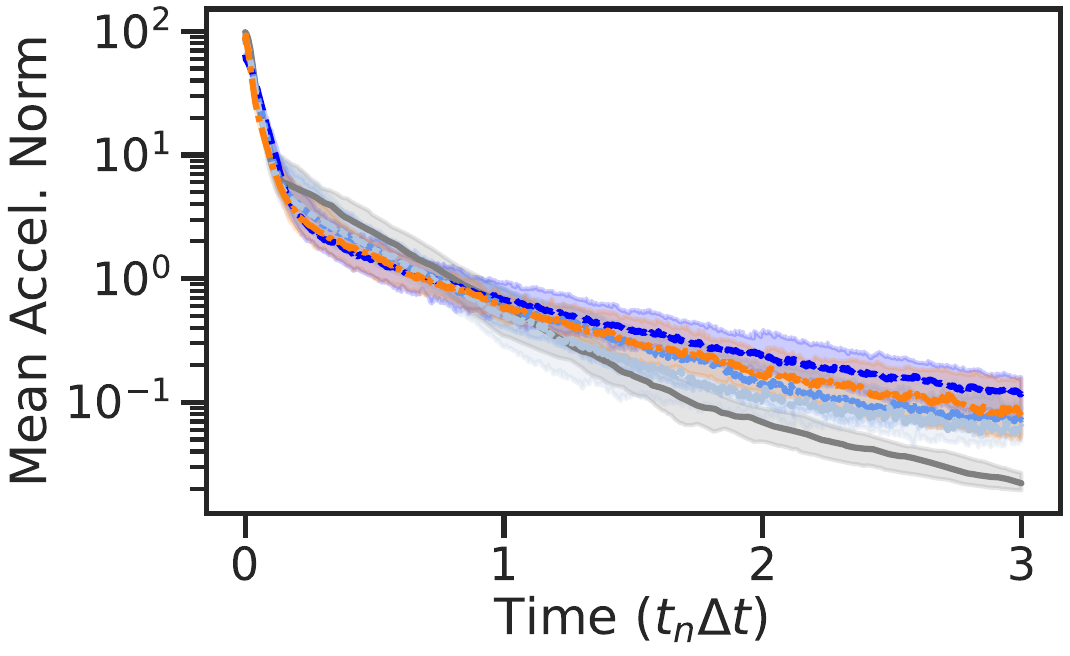}
    \end{subfigure}
\caption{
Performance of controllers in leader following at each time step of the simulation. They are evaluated on the RandomDisk test set with $\FlockAgentCount = 100$ agents where two agents are leaders. 
}
\label{fig:results:leader-following:randomdisk:by-time}\vspace{-0.15cm}
\end{figure}
}

\subsection{Obstacle avoidance}
When a flock is intercepting an obstacle, the ML controllers should have the flock circumnavigate it. For centralized flocking controllers, the primary requirement is that the agents do not collide with the obstacle. The centralized flocking controllers discussed can guarantee agent separation, which is readily extendable to obstacle collision avoidance if each agent can compute its distance from the obstacle. Decentralized flocking controllers, however, face the significant challenge of managing communication graph connectivity. An obstacle's diameter can be larger than the communication radius $\FlockAgentCommRadius$ of the agents, preventing agents on opposite sides of the obstacle from communicating. Decentralized flocking controllers have two options for successful circumnavigation -- ensure that all agents move around the obstacle in the same direction, or devise a scheme that will guarantee communication graph connectivity is restored if groups of agents go around in different directions. We present a technique for helping decentralized flocking controllers conduct obstacle avoidance inspired by Fig.~2 of \cite{reynoldsFlocksHerdsSchools1987}. The technique is tested using disk-shaped obstacles with diameters up to nearly half of the flock's diameter. 


{
\newcommand{\zFlockAgentExtremeIndex}{{\FlockAgentIndex_*}}
\newcommand{\zFlockAgentExtremeIndexTwo}{{\FlockAgentIndexTwo_*}}
\newcommand{\zObstacleSideCount}{{s}}

The obstacle avoidance dataset is built upon the RandomDisk dataset. For each initial condition, we construct a disk-shaped obstacle as follows. First, we place the center of a regular polygon with $s$ sides of length $\FlockAgentMinDist$ relative to the flock such that two conditions are met:
\begin{enumerate}
\item \textbf{Polygon is in the middle:} Let $\FlockAgentPos_\zFlockAgentExtremeIndex$ and $\FlockAgentPos_\zFlockAgentExtremeIndexTwo$ be agents whose distance is the diameter of the flock: $\norm{\vr_{\zFlockAgentExtremeIndex\zFlockAgentExtremeIndexTwo}} = \max_{\FlockAgentIndex,\FlockAgentIndexTwo} \norm{\FlockAgentRelativePos}$. The polygon's center is on the line passing through $\mean\set{\FlockAgentPos_\zFlockAgentExtremeIndex, \FlockAgentPos_\zFlockAgentExtremeIndexTwo}$ that is perpendicular to the line passing through $\FlockAgentPos_\zFlockAgentExtremeIndex$ and $\FlockAgentPos_\zFlockAgentExtremeIndexTwo$.

\item \textbf{Polygon is in front:} At time zero, the minimum distance between the polygon center and the flock agents is greater than the polygon's circumradius plus $\FlockAgentCommRadius$.
\end{enumerate}

The obstacle is the circumscribed circle of the polygon.
Next, the agents need a mechanism to compute their position relative to the obstacle. In our simulation, we utilize the existing communication mechanism
by placing additional \textit{obstacle agents} on the vertices of the polygon. The obstacle agents only send their position to the flock agents. Moreover, the obstacle agents do not receive any messages from the flock agents; that is, the obstacle agents only have directed edges from them to the flock agents in the communication graph.

The final step is to randomly select two agents in the flock to be leaders (see \secref{sec:results:leader-following}). Without leaders, the flock will not attempt to circumnavigate the obstacle; instead, the flock will turn around or simply halt in front of the obstacle. Leaders force the flock to continue past the obstacle. Leaders do not receive any messages from the obstacle agents. The leaders' velocity is fixed, so we select leaders that will always be at least $\RadiusMin$ away from the obstacle boundary. We set the initial velocity of leaders and followers to the unit vector pointing from $\mean\set{\FlockAgentPos_\zFlockAgentExtremeIndex, \FlockAgentPos_\zFlockAgentExtremeIndexTwo}$ to the center of the obstacle. Examples are shown in \Figref{fig:dataset:obstacle-avoidance:randomdisk}.
}

\begin{figure}[!ht]
    \centering
    \begin{subfigure}[T]{.3\textwidth}
        \centering
        \includegraphics[width=\textwidth]{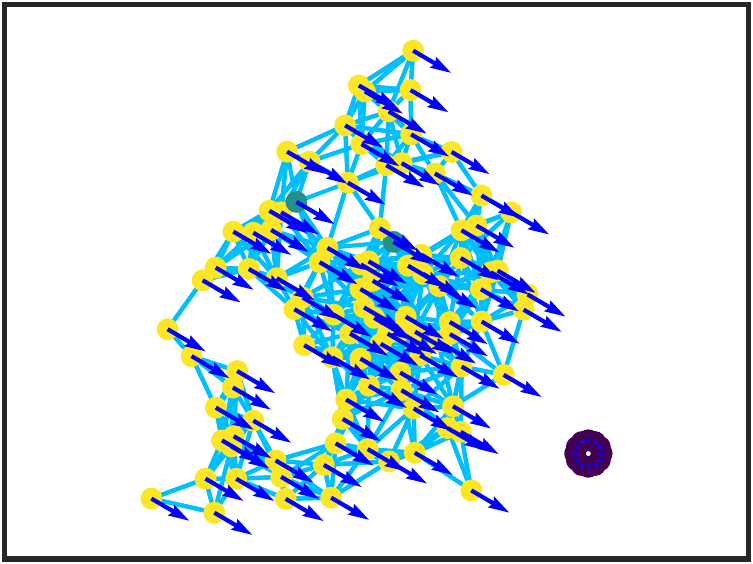}
        \label{fig:dataset:obstacle-avoidance:randomdisk:0}
    \end{subfigure}
    \hspace{1em}
    \begin{subfigure}[T]{.3\textwidth}
        \centering
        \includegraphics[width=\textwidth]{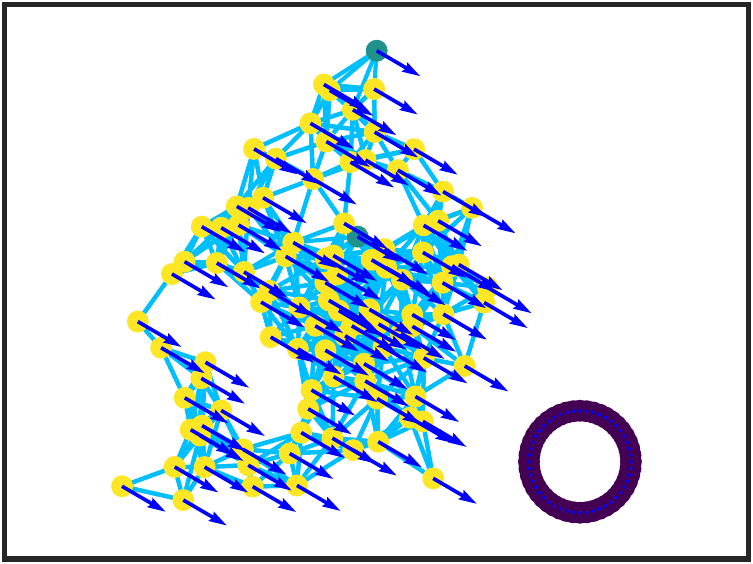}
        \label{fig:dataset:obstacle-avoidance:randomdisk:1}
    \end{subfigure}
    \hspace{1em}
    \begin{subfigure}[T]{.3\textwidth}
        \centering
        \includegraphics[width=\textwidth]{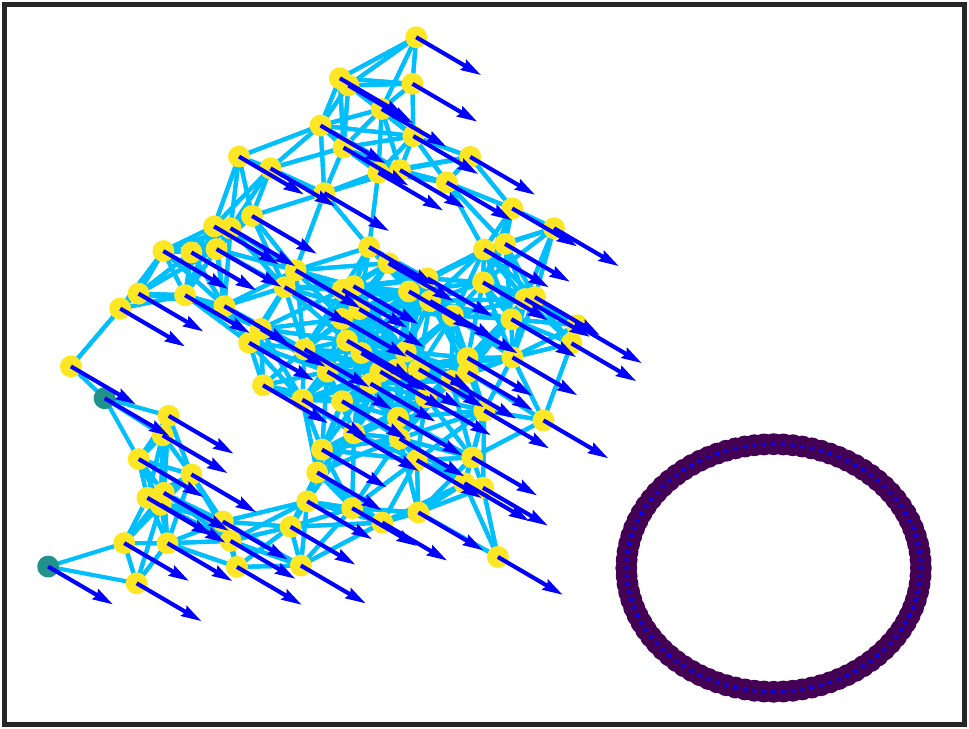}
        \label{fig:dataset:obstacle-avoidance:randomdisk:2}
    \end{subfigure}
\caption{
Examples from the Obstacle Avoidance RandomDisk dataset of initial conditions of agents with $\FlockCommunicationGraphMinDegree = 2$, $\FlockAgentMinDist = 0.1$, $\FlockAgentCommRadius = 1$, initial velocity (dark blue arrows) with magnitude 1, $98$ agent followers (yellow), two leaders (green), and obstacles (purple) of perimeter 12, 48, and 96 (all multiplied by $\FlockAgentMinDist$).}
    \label{fig:dataset:obstacle-avoidance:randomdisk}\vspace{-0.15cm}
\end{figure}

Instead of terminating the obstacle avoidance simulations after some fixed number of time steps $\DAggerSimulationTimeStepsCount$, the simulations are run until any of the following three termination conditions are met:
\begin{enumerate}
\item \textbf{Disconnected communication graph:}
First, check if the minimum distance between the flock 
and the obstacle is greater than $\FlockAgentCommRadius$, implying that the flock is unaware of the obstacle. If so, terminate the simulation if the communication graph containing leaders and followers is disconnected. Otherwise, terminate the simulation if the graph containing leaders, followers, and obstacle agents is disconnected.

\item \textbf{Collision:} Terminate the simulation if any flock agents are closer than $\RadiusMin$ to each other or any obstacle agents, or the distance of any flock agent to the center of the obstacle is less than or equal to the radius of the obstacle. 
    {
    \newcommand{\zTimeStepsAfterObstaclePassed}{{\DAggerSimulationTimeStepsCount_{\mathrm{passed}}}}
\item \textbf{Obstacle passed:} Terminate the simulation if the minimum distance between the flock 
and obstacle agents has been less than 
$\FlockAgentCommRadius$ for at least one 
step, and that distance has been larger than $\FlockAgentCommRadius$ for the past $\zTimeStepsAfterObstaclePassed \geq 1$ time steps. This termination condition indicates successful obstacle avoidance.
    }
\end{enumerate}
The obstacle avoidance 
validation metric is the fraction of terminations not due to passing the obstacle.

Now, we describe our obstacle avoidance technique.
At a high level, we have the followers orbit the obstacle, and then we use the leaders to draw the followers away from the obstacle and terminate the followers' orbits. The technique influences the acceleration of a follower $\FlockAgentIndex$ by formulating the relative velocity $-\FlockAgentRelativeVel = \FlockAgentRelativeVel[_{\FlockAgentIndexTwo\FlockAgentIndex}]$ between it and an obstacle agent $\FlockAgentIndexTwo$. Notice that we wrote the relative velocity vector so that it is rooted at the velocity of the follower $\FlockAgentIndex$.
This lets us work from the reference frame of the follower.

{
\newcommand{\zRotationMatrix}{{\UtilRotationMatrix}}
\newcommand{\zAngle}{{\theta}}
\newcommand{\zLinearDiscriminantName}{{\gamma}}
\newcommand{\zFlockAgentVelocity}{\vv}
\newcommand{\zVelocityNeighborAverage}{{\overline{\zFlockAgentVelocity}}}
\newcommand{\zRelativeVelocityMagnitude}{{\alpha_1}}
\newcommand{\zRelativeVelocityGravityBreak}{{\alpha_2}}
\newcommand{\zRelativeVelocityObstacleDodgeAngle}{{\alpha_\zAngle}}

The technique defines the interaction between a follower agent and a single obstacle agent.
As we will later show empirically, all these pairwise interactions culminate in the flock's obstacle-avoidance capability.
To implement our technique, we define a parametrized linear discriminant to make two classifications about a follower: (1) whether the follower is moving towards or away from an obstacle agent, 
and (2) whether the obstacle agent is on the left or right side of the follower's heading. The parameterized linear discriminant is
\begin{align}\label{eq:results:obstacle-avoidance-linear-discriminant}
\zLinearDiscriminantName\pn{\FlockAgentRelativePos[], \zFlockAgentVelocity, \zAngle} = \frac{\FlockAgentRelativePos[]^\top}{\norm{\FlockAgentRelativePos[]}}\zRotationMatrix\pn{\zAngle}\frac{\zFlockAgentVelocity}{\norm{\zFlockAgentVelocity}},
\end{align}
for 
$\FlockAgentRelativePos[],\zFlockAgentVelocity \neq \vzero$ and 
rotation matrix $\UtilRotationMatrix\pn{\cdot}$. 
Fixing $\FlockAgentVel_\FlockAgentIndex$, the linear discriminant $\FlockAgentRelativePos[_{\FlockAgentIndexTwo\FlockAgentIndex}] \mapsto \zLinearDiscriminantName\pn{\FlockAgentRelativePos[_{\FlockAgentIndexTwo\FlockAgentIndex}], \FlockAgentVel_\FlockAgentIndex, 0}$ classifies whether the follower is moving toward (positive value) or away (negative value) from the obstacle agent. Moreover, the linear discriminant $\FlockAgentRelativePos[_{\FlockAgentIndexTwo\FlockAgentIndex}] \mapsto \zLinearDiscriminantName\pn{\FlockAgentRelativePos[_{\FlockAgentIndexTwo\FlockAgentIndex}], \FlockAgentVel_\FlockAgentIndex, \frac{\pi}{2}}$ classifies whether the obstacle agent is to the left (positive value) or right (negative value) of the follower's heading.
These linear discriminants are assembled to determine how the follower should accelerate when receiving the position of an obstacle agent. When the follower accelerates, $\FlockAgentVel_\FlockAgentIndex$ changes, and so do the outputs of the linear discriminants. To reduce the linear discriminants sensitivity to $\FlockAgentVel_\FlockAgentIndex$, we replace $\FlockAgentVel_\FlockAgentIndex$ with the mean velocity of the follower and its neighbors: $\zVelocityNeighborAverage_\FlockAgentIndex\pn{t} = \mean\set{\FlockAgentVel_j\pn{t}: j \in \FlockAgentNeighborhood_\FlockAgentIndex\pn{t} \cup \set{i}}$.

Finally, our technique formulates the relative velocity as
\begin{align}
    \label{eq:results:obstacle-avoidance:relative-velocity}
    -\FlockAgentRelativeVel\pn{t} & = \begin{cases}
        \zRelativeVelocityMagnitude\pn{\norm{\FlockAgentRelativePos[_{\FlockAgentIndexTwo\FlockAgentIndex}]}, \norm{\zVelocityNeighborAverage_\FlockAgentIndex}}\frac{\FlockAgentRelativePos[_{\FlockAgentIndexTwo\FlockAgentIndex}]}{\norm{\FlockAgentRelativePos[_{\FlockAgentIndexTwo\FlockAgentIndex}]}} 
        \ \text{if } -\zRelativeVelocityGravityBreak \leq \zLinearDiscriminantName\pn{\FlockAgentRelativePos[_{\FlockAgentIndexTwo\FlockAgentIndex}],\zVelocityNeighborAverage_\FlockAgentIndex, 0} \leq 0, \\
    \zRelativeVelocityMagnitude\pn{\norm{\FlockAgentRelativePos[_{\FlockAgentIndexTwo\FlockAgentIndex}]},\norm{\zVelocityNeighborAverage_\FlockAgentIndex}}\pn{-\mathrm{sgn}\spn{\zLinearDiscriminantName\pn{\FlockAgentRelativePos[_{\FlockAgentIndexTwo\FlockAgentIndex}], \zVelocityNeighborAverage_\FlockAgentIndex, \frac{\pi}{2}}}\zRotationMatrix\pn{\alpha_\zAngle}}\frac{\FlockAgentRelativePos[_{\FlockAgentIndexTwo\FlockAgentIndex}]}{\norm{\FlockAgentRelativePos[_{\FlockAgentIndexTwo\FlockAgentIndex}]}} & \text{else},
    \end{cases}
\end{align}
where $\zRelativeVelocityMagnitude:\mathbb{R}_+^2\to\mathbb{R}_+$ is a rescaling function, 
$\zRelativeVelocityGravityBreak \in \spn{0, 1}$, and $\zRelativeVelocityObstacleDodgeAngle \in \hrpn{0, \frac{\pi}{2}}$. A follower may have multiple obstacle agent 
neighbors, but we limit the follower to only process the relative velocity computed from the position of the closest obstacle agent.

We explain the relative velocity formulation considering a follower $\FlockAgentIndex$, its flock agent neighbors, and the obstacle agent $\FlockAgentIndexTwo$ that it is closest to.
From the definition, when $\zLinearDiscriminantName\pn{\FlockAgentRelativePos[_{\FlockAgentIndexTwo\FlockAgentIndex}], \zVelocityNeighborAverage_\FlockAgentIndex, 0} > 0$, the follower is roughly heading toward the obstacle agent, 
so the relative velocity accelerates the follower to the left or right of the obstacle agent (depending on the sign of $\zLinearDiscriminantName\pn{\FlockAgentRelativePos[_{\FlockAgentIndexTwo\FlockAgentIndex}], \zVelocityNeighborAverage_\FlockAgentIndex, \frac{\pi}{2}}$).
When moving left or right, eventually $\zLinearDiscriminantName\pn{\FlockAgentRelativePos[_{\FlockAgentIndexTwo\FlockAgentIndex}], \zVelocityNeighborAverage_\FlockAgentIndex, 0} \in \spn{-\zRelativeVelocityGravityBreak, 0}$, meaning the follower either is heading tangent to a disk of radius $\norm{\FlockAgentRelativePos[_{\FlockAgentIndexTwo\FlockAgentIndex}]}$ centered at the obstacle agent or is heading away from the obstacle agent. 
In this case, the relative velocity accelerates the follower toward the obstacle agent, initiating an orbit about the obstacle agent. The orbit helps the follower reunite with the followers that moved around the obstacle agent the opposite way. When $\zRelativeVelocityGravityBreak = 1$, the relative velocity accelerates the follower toward the obstacle agent even when its flock agent neighbors are moving away from the obstacle agent. The follower can get stuck in the ``gravity'' of the obstacle agent, so setting $\zRelativeVelocityGravityBreak \in \hlpn{0, 1}$ can help the follower terminate its orbit. When $\zLinearDiscriminantName\pn{\FlockAgentRelativePos[_{\FlockAgentIndexTwo\FlockAgentIndex}], \zVelocityNeighborAverage_\FlockAgentIndex, 0} < -\zRelativeVelocityGravityBreak$ and $\zRelativeVelocityObstacleDodgeAngle = \pi/2$, the relative velocity accelerates the followers in the direction tangent to a disk of radius $\norm{\FlockAgentRelativePos[_{\FlockAgentIndexTwo\FlockAgentIndex}]}$ centered at the obstacle agent.

The followers are drawn away from the obstacle agents by the leaders because the leaders will always eventually head away from all of the obstacle agents. The followers try to align their velocity with the leaders' velocity, and if they can, $\zLinearDiscriminantName\pn{\FlockAgentRelativePos[_{\FlockAgentIndexTwo\FlockAgentIndex}], \zVelocityNeighborAverage_\FlockAgentIndex, 0}$ will become negative. How closely the followers need to align their velocity with the leaders in order to move away from the obstacle agents depends on $\zRelativeVelocityGravityBreak$.
In our experiment, we choose $\zRelativeVelocityGravityBreak = 0.5$, $\zRelativeVelocityObstacleDodgeAngle = \pi/2$, and $\zRelativeVelocityMagnitude$ as 
$\zRelativeVelocityMagnitude\pn{r, v} = e^{-r} + e^{-v}$.
We offer an intuition for our choice of $\zRelativeVelocityMagnitude$. The $e^{-\norm{\FlockAgentRelativePos[_{\FlockAgentIndexTwo\FlockAgentIndex}]}}$ term amplifies the acceleration 
supplied by the relative velocity when the follower and obstacle agent 
are closer. The $e^{-\norm{\zVelocityNeighborAverage_\FlockAgentIndex}}$ term strengthens the acceleration if the follower slows down to avoid colliding with the obstacle. This helps the follower maintain the magnitude of its velocity prior to detecting the obstacle, allowing the flock 
to circumnavigate the obstacle faster. 
Attenuating the signal when the follower's velocity is small would allow it to stall in front of the obstacle and fall behind the leaders.

}

\Figref{fig:results:obstacle-avoidance:randomdisk:by-perimeter:no-transfer-learning-algorithm} shows the failure rate of ML controllers on the test simulations when they do \textbf{\textit{not}} use our technique. Their failure rate is near 100\% with over 95\% of failures due to disconnected communication graphs. Using our obstacle avoidance technique makes obstacle avoidance possible for ML controllers, as shown in \figref{fig:results:obstacle-avoidance:randomdisk:by-perimeter}.
When the obstacle perimeter is $24\FlockAgentMinDist$ or smaller, the failure fraction decreases by at least 
80\% 
for all ML controllers. It decreases by 
55\%
for perimeter $48\FlockAgentMinDist$ and at least 
10\%
for perimeter $96\FlockAgentMinDist$.

\begin{figure}[!ht]
    \centering
    \begin{subfigure}[T]{\textwidth}
        \centering
        \includegraphics[width=0.95\textwidth]{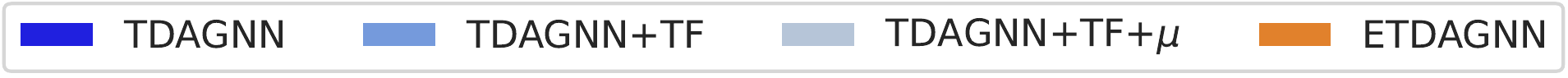}
    \end{subfigure} \\
    \begin{subfigure}[T]{.3\textwidth}
        \centering
        \hspace{-2.3em}
        \includegraphics[width=1.125\textwidth]{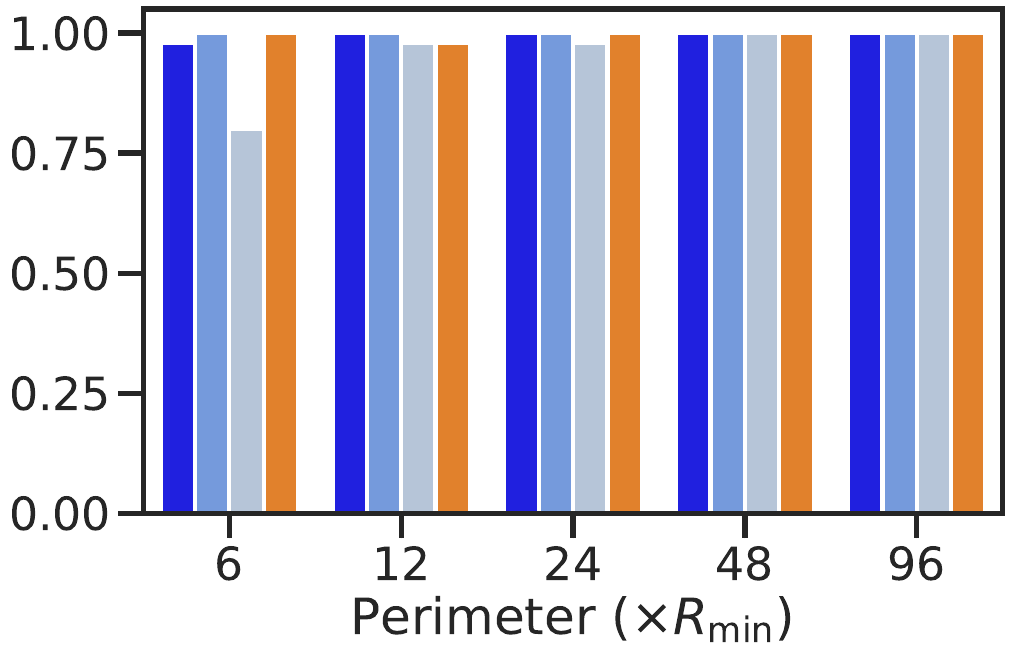}
        \caption{}
    \end{subfigure}
    \begin{subfigure}[T]{.3\textwidth}
        \centering
        \includegraphics[width=\textwidth]{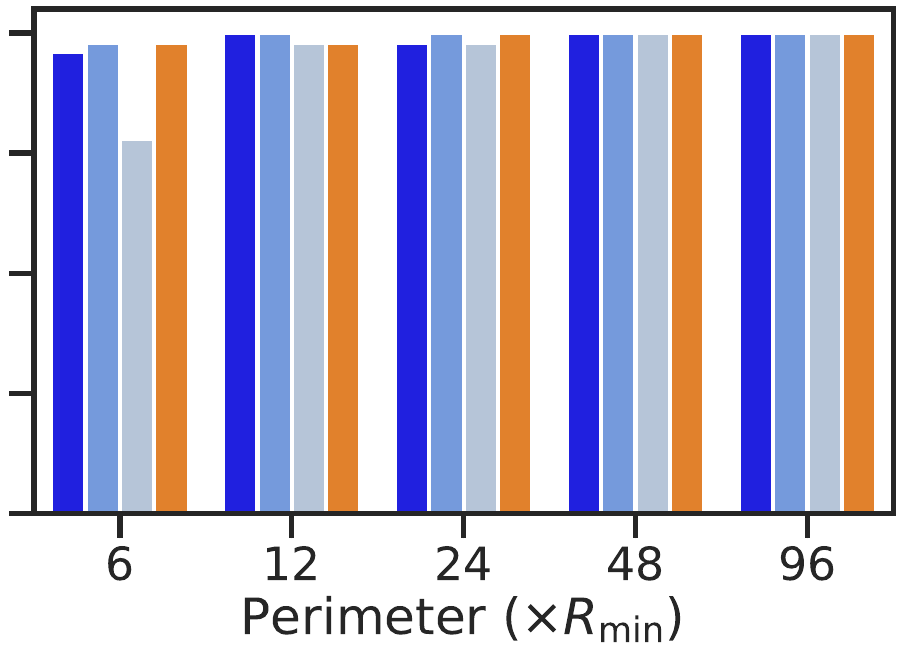}
        \caption{}
    \end{subfigure}
    \begin{subfigure}[T]{.3\textwidth}
        \centering
        \includegraphics[width=\textwidth]{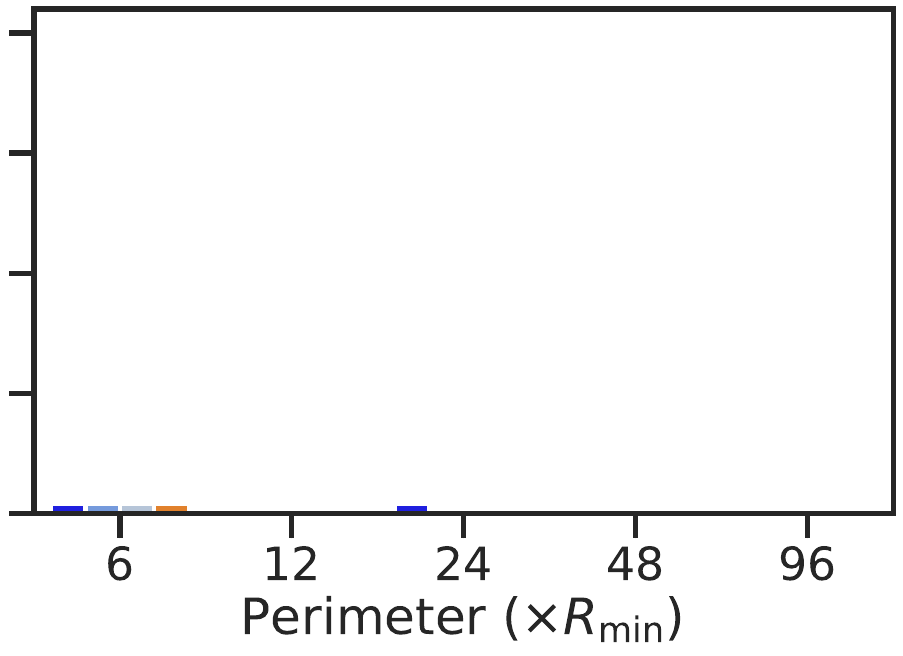}
        \caption{}
    \end{subfigure}
\caption{
Performance of ML controllers in obstacle avoidance varying the perimeter of the regular polygon inscribed in the obstacle when our obstacle avoidance technique is \textbf{\textit{not}} used. The obstacle is a disk, and the regular polygon has $s$ sides of length $\FlockAgentMinDist$ whose vertices are on the boundary of the disk. They are evaluated on the RandomDisk test set with $\FlockAgentCount = 100$ agents where two agents are leaders.
(a) Fraction of simulations that failed.
(b) Fraction of failures due to a disconnected communication graph.
(c) Fraction of failures due to a collision.
}
\label{fig:results:obstacle-avoidance:randomdisk:by-perimeter:no-transfer-learning-algorithm}
\end{figure}

\begin{figure}[!ht]
    \centering
    \begin{subfigure}[T]{\textwidth}
        \centering
        \includegraphics[width=0.95\textwidth]{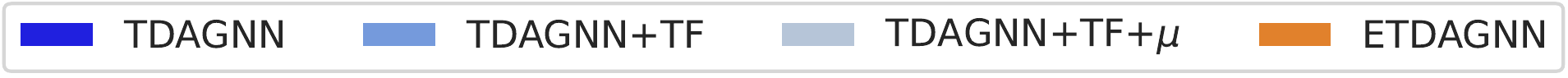}
    \end{subfigure} \\
    \begin{subfigure}[T]{.3\textwidth}
        \centering
        \hspace{-2.3em}
        \includegraphics[width=1.125\textwidth]{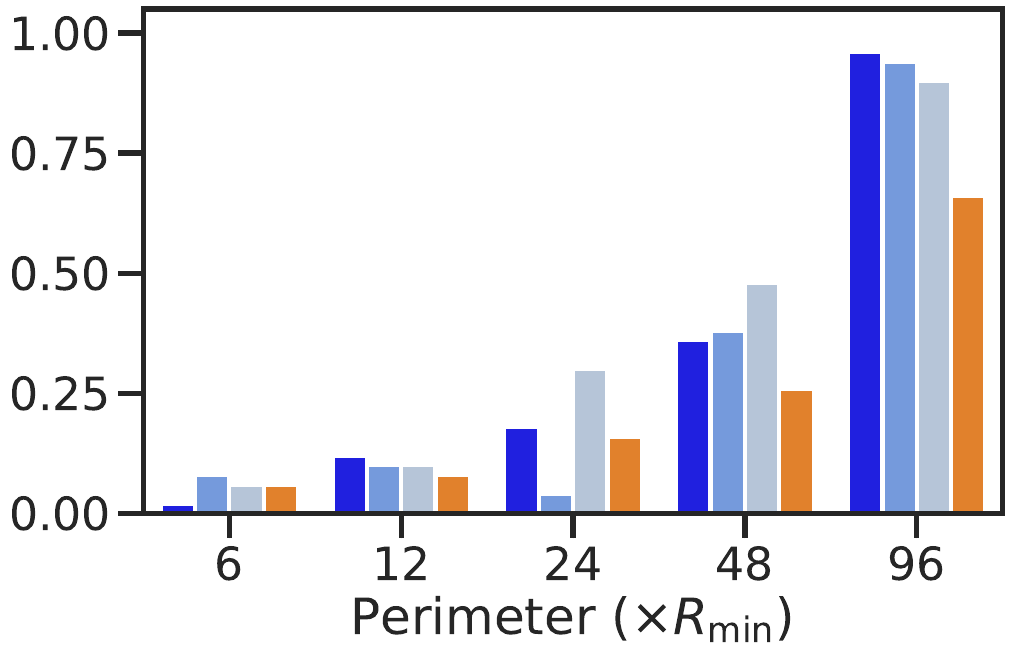}
        \caption{}
    \end{subfigure}
    \begin{subfigure}[T]{.3\textwidth}
        \centering
        \includegraphics[width=\textwidth]{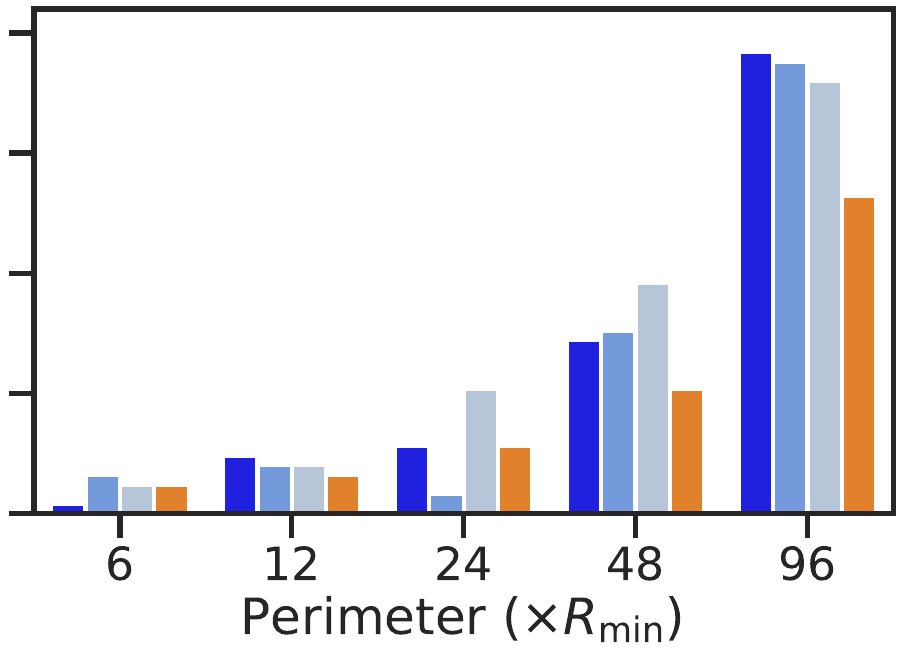}
        \caption{}
    \end{subfigure}
    \begin{subfigure}[T]{.3\textwidth}
        \centering
        \includegraphics[width=\textwidth]{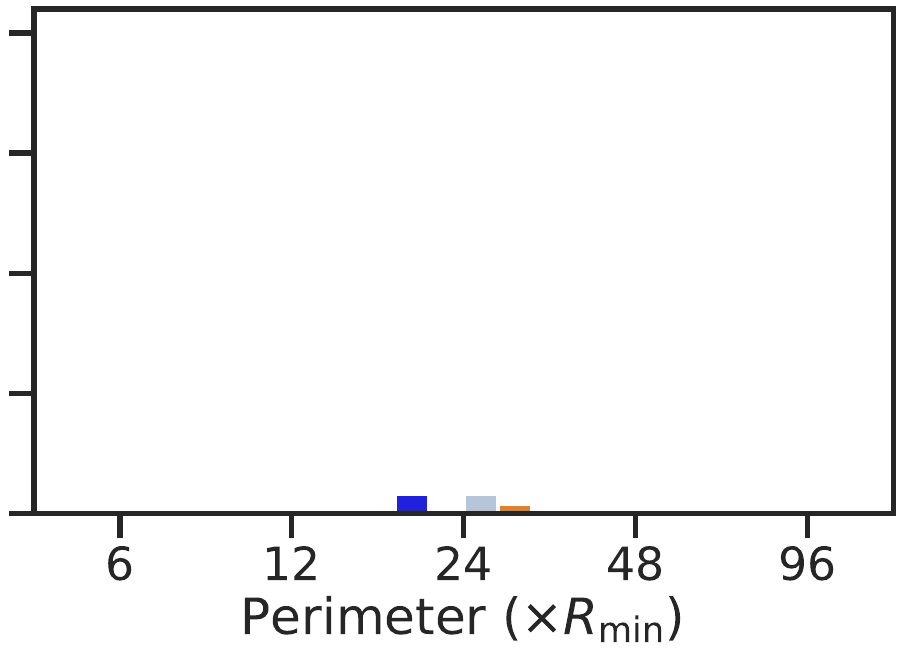}
        \caption{}
    \end{subfigure}
\caption{
Performance of ML controllers in obstacle avoidance using our obstacle avoidance technique varying the perimeter of the regular polygon inscribed in the obstacle. The obstacle is a disk, and the regular polygon has $s$ sides of length $\FlockAgentMinDist$ whose vertices are on the boundary of the disk. They are evaluated on the RandomDisk test set with $\FlockAgentCount = 100$ agents where two agents are leaders.
(a) Fraction of simulations that failed.
(b) Fraction of failures due to a disconnected communication graph.
(c) Fraction of failures due to a collision.
}
\label{fig:results:obstacle-avoidance:randomdisk:by-perimeter}
\end{figure}

Communication graph disconnection is still the dominant cause of failure. Failures due to collisions with the obstacle only occur when the obstacle perimeter is $24\FlockAgentMinDist$, representing less than 5\% of failures. \PupilETDAGNN{} has a failure rate at least 10\% lower than other controllers when the obstacle perimeter is $48\FlockAgentMinDist$, and 20\% lower for perimeter $96\FlockAgentMinDist$. For perimeter $48\FlockAgentMinDist$ and $96\FlockAgentMinDist$, the other ML controllers' failure rates are within 10\% of each other.
For obstacles a perimeter $6\FlockAgentMinDist$ or smaller, all ML controllers have a failure rate of less than 10\%.
For larger obstacles, we recommend \PupilETDAGNN{} since its failure fraction is up to 20\% smaller than other ML controllers. Rotation equivariance provides a significant performance advantage in this obstacle avoidance 
experiment.
Animations of obstacle avoidance are available on GitHub.\footnote{Obstacle avoidance animations: \url{github.com/Utah-Math-Data-Science/Equivariant-Decentralized-Controllers/tree/main/misc/animations/obstacle_avoidance}}

{
\newcommand{\zGeneralizationTestSampleName}{\GeneralizationSampleName^\mathrm{test}}
\newcommand{\zGeneralizationTestSampleSize}{\GeneralizationSampleSize^\mathrm{test}}
\newcommand{\zEmpiricalGeneralizationGap}{\hat{\cR}_{\zGeneralizationTestSampleName,\GeneralizationSampleName,\LossFunctionName}}
\subsection{Generalization gap}
\label{sec:results:generalization-gap}
We 
verify the generalization bound in \eqref{eq:generalization-gap:bound} for each ML controller with respect to its \GeneralizationFastForwardBC{} dataset. Each dataset has 80,400 tuples with 30,150 for training and 50,250 for testing.
We train the ML controllers on their training sets with $\LossFunctionMSENormalizationConstant = 2$, and evaluate them on their test sets. The constant $\LossFunctionMSENormalizationConstant$ helps avoid the loss gradients being 
zero due to the $\min\set{1, \cdot}$ function.
We compute the generalization bound in \eqref{eq:generalization-gap:bound} with $\GeneralizationGapBoundFailProbability = 10^{-3}$ and the empirical bound.
The empirical generalization bound is the difference of the empirical risk over the test set minus the empirical risk over the training set
$$
\begin{aligned}
\zEmpiricalGeneralizationGap\pn{f} = \frac{1}{\zGeneralizationTestSampleSize} \sum_{\GeneralizationSampleIndex = 1}^{\zGeneralizationTestSampleSize} \LossFunctionName\pn{f\pn{x_\GeneralizationSampleIndex^\mathrm{test}}, y^\mathrm{test}} - \frac{1}{\GeneralizationSampleSize} \sum_{\GeneralizationSampleIndex = 1}^{\GeneralizationSampleSize} \LossFunctionName\pn{f\pn{x_\GeneralizationSampleIndex}, y}.
\end{aligned}
$$
\Figref{fig:results:generalization-gap:losses} and \ref{fig:results:generalization-gap:bound} show that the empirical generalization gap is near zero for all controllers. 
Here, the training and the test sets are sampled from the same probability distribution induced by \GeneralizationFastForwardBC{}. 

\begin{figure}[!ht]
    \centering
    \begin{subfigure}[T]{\textwidth}
        \centering
        \includegraphics[width=.5\textwidth]{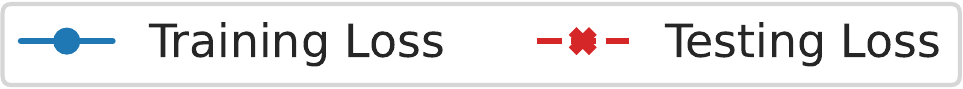}
    \end{subfigure} \\
    \begin{subfigure}[T]{.23\textwidth}
        \centering
        \hspace{-2.5em}
        \includegraphics[width=1.187\textwidth]{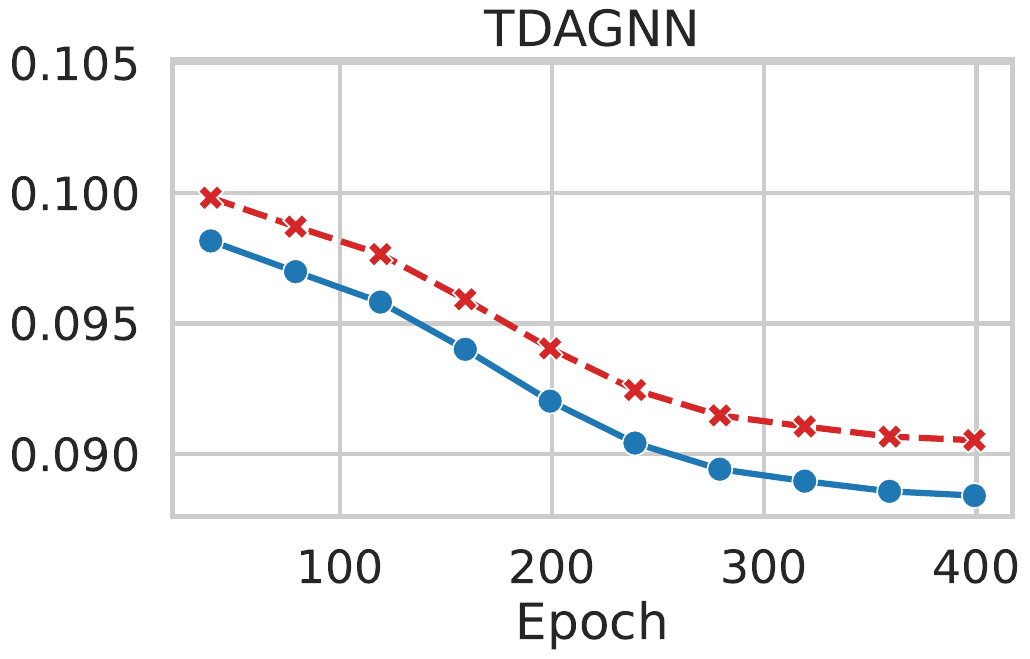}
    \end{subfigure}
    \begin{subfigure}[T]{.23\textwidth}
        \centering
        \includegraphics[width=\textwidth]{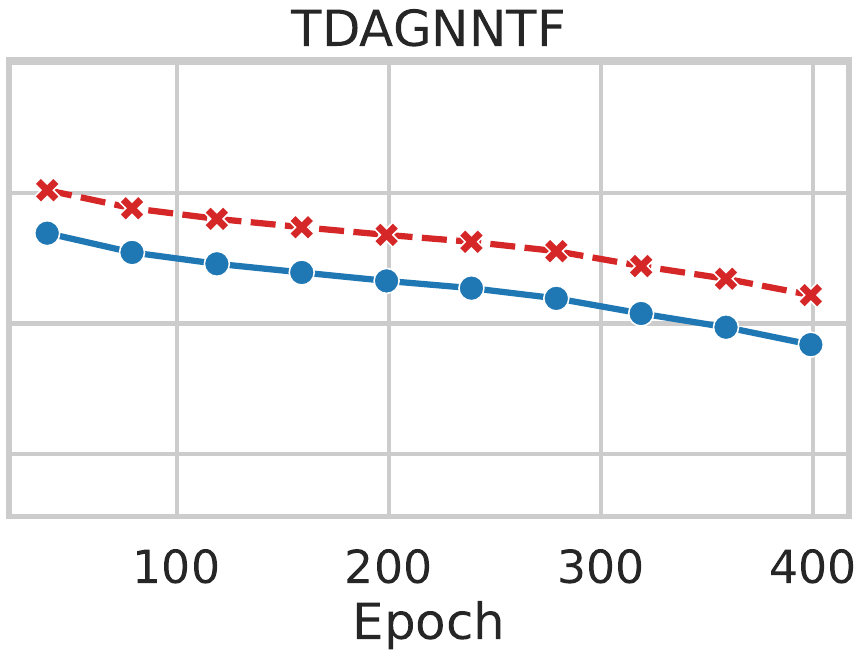}
    \end{subfigure}
    \begin{subfigure}[T]{.23\textwidth}
        \centering
        \includegraphics[width=\textwidth]{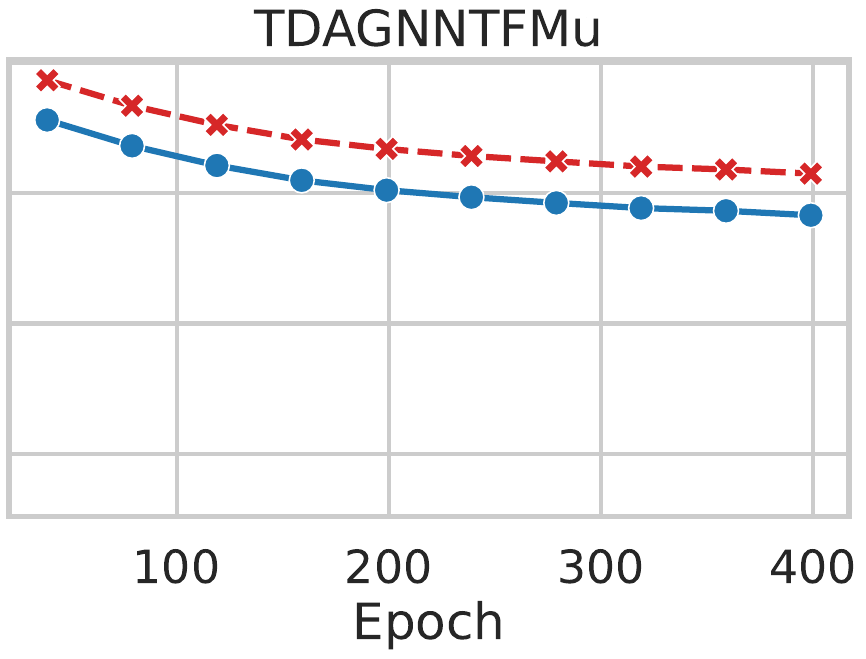}
    \end{subfigure}
    \begin{subfigure}[T]{.23\textwidth}
        \centering
        \includegraphics[width=\textwidth]{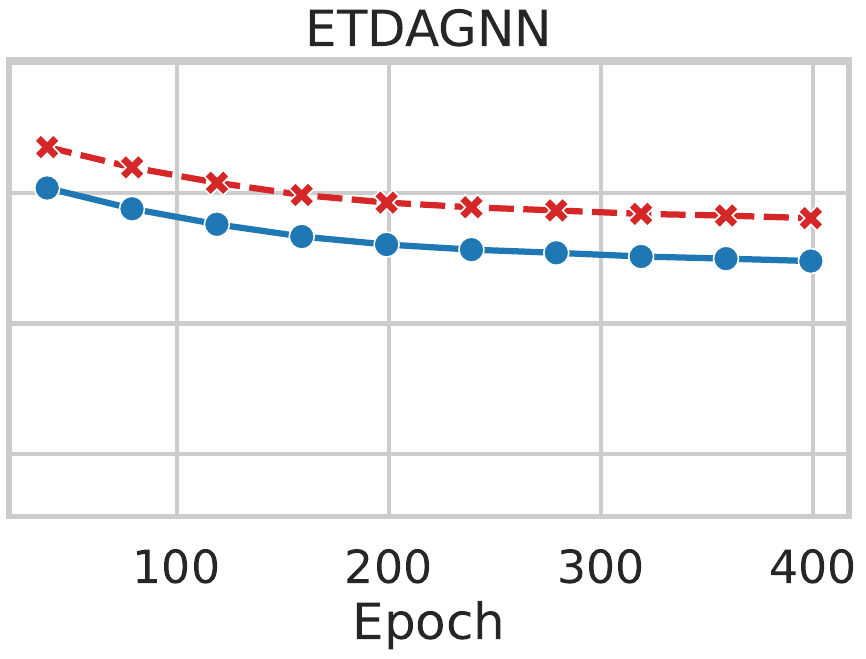}
    \end{subfigure}
    \caption{
        The losses on the behavior cloning training and test sets as the flocking ML controllers train.
    }
    \label{fig:results:generalization-gap:losses}
\end{figure}

\begin{figure}[!ht]
    \centering
    \begin{subfigure}[T]{\textwidth}
        \centering
        \includegraphics[width=.95\textwidth]{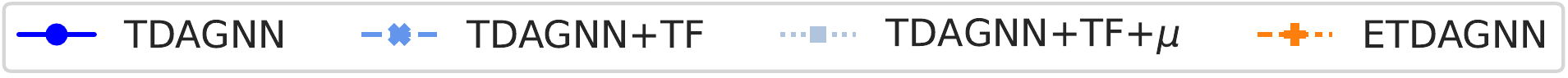}
    \end{subfigure} \\
    \begin{subfigure}[T]{.23\textwidth}
        \centering
        \hspace{-2.5em}
        \includegraphics[width=1.187\textwidth]{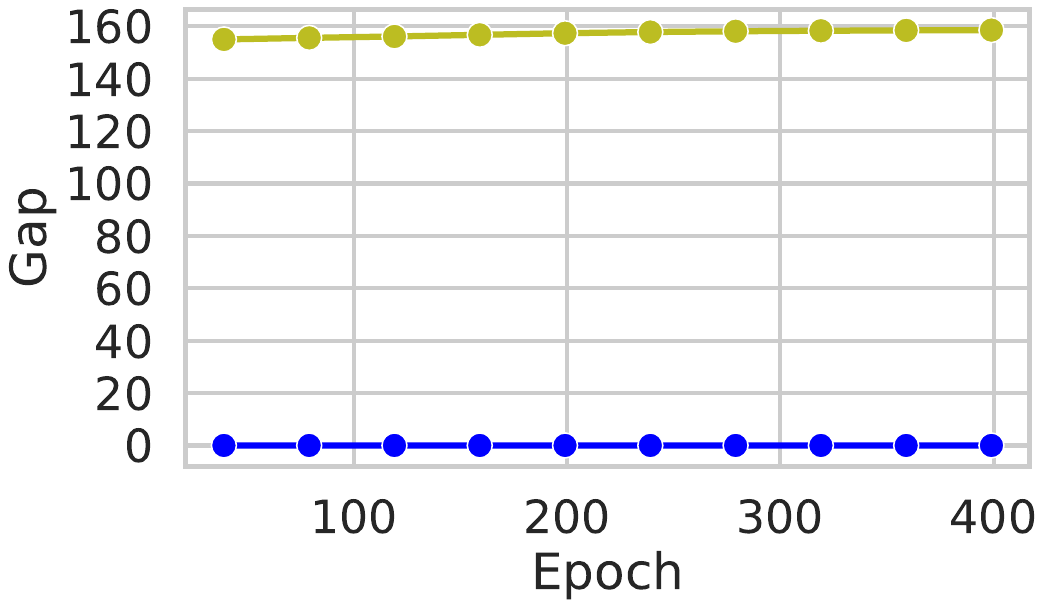}
    \end{subfigure}
    \begin{subfigure}[T]{.23\textwidth}
        \centering
        \includegraphics[width=\textwidth]{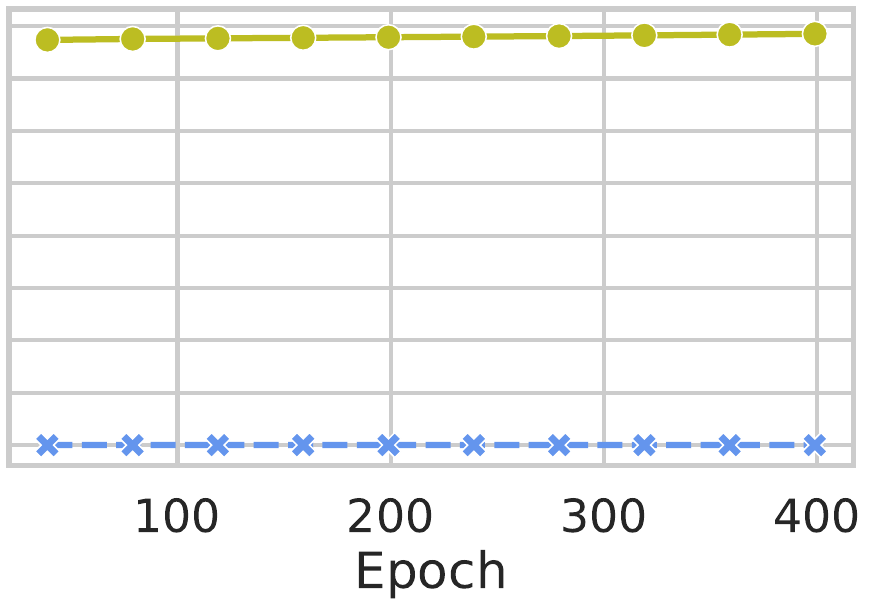}
    \end{subfigure}
    \begin{subfigure}[T]{.23\textwidth}
        \centering
        \includegraphics[width=\textwidth]{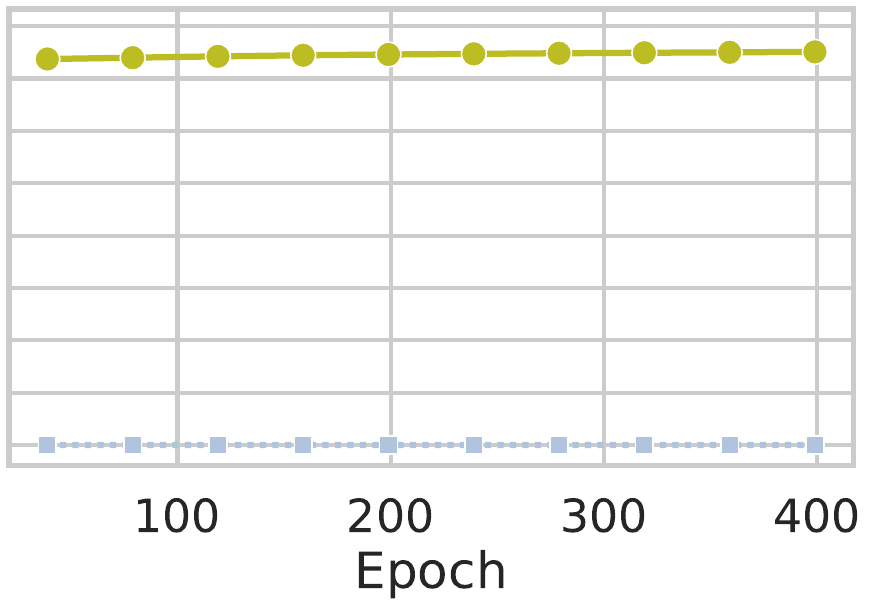}
    \end{subfigure}
    \begin{subfigure}[T]{.23\textwidth}
        \centering
        \includegraphics[width=\textwidth]{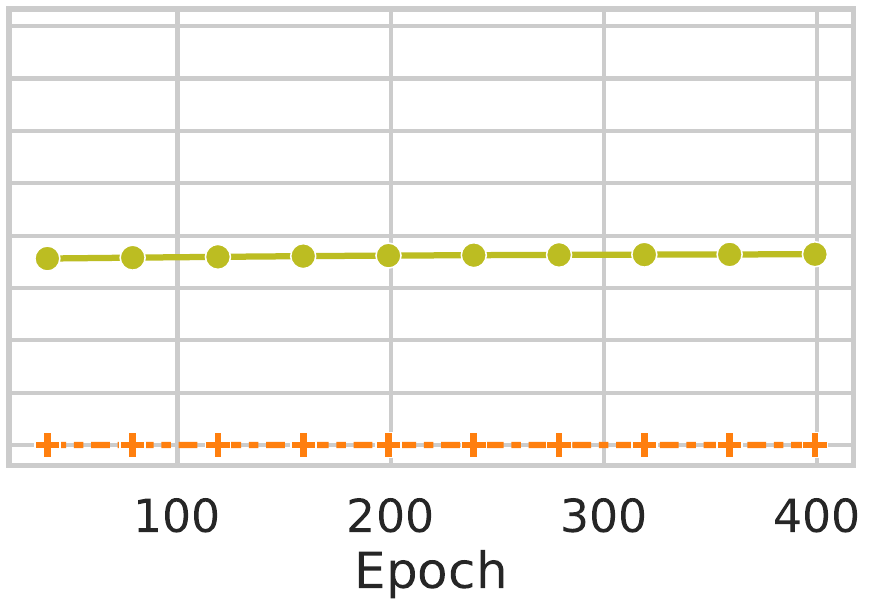}
    \end{subfigure}
    \caption{
        The generalization bound and empirical generalization gap over epochs.
        The generalization bound 
        improves with the ML controllers from left to right, and adding equivariance lowers the generalization gap the most.
    }
    \label{fig:results:generalization-gap:bound}
\end{figure}

The generalization bound reduces as we add the improvements (described in \secref{sec:methods:improving-tdagnn}) to \PupilTDAGNN{}. 
Enforcing equivariance significantly reduces the generalization gap, as \PupilETDAGNN{}'s bound is about half that of \PupilTDAGNNTFMu{}.
The term in the generalization bound that varies between ML controllers is $48\PupilWeightMatrixLargestDim/\sqrt{\GeneralizationSampleSize}$ multiplied by the square root term $\sqrt{\cdot}$. 
\Figref{fig:results:generalization-gap:variables} gives insight into how these terms influence the generalization bound.
The plot of $48\PupilWeightMatrixLargestDim/\sqrt{\GeneralizationSampleSize}$ explains why the generalization bound of \PupilETDAGNN{} is about half of the non-equivariant controllers' -- \PupilETDAGNN{} has $\PupilWeightMatrixLargestDim = 16$ whereas the non-equivariant ML controllers have $\PupilWeightMatrixLargestDim = 33$. Next, $\DAggerDatumBound$ is an order of magnitude smaller for mean-aggregation controllers compared to sum-aggregation; however, it only introduces a 10-point difference between the bounds of \PupilTDAGNNTF{} and \PupilTDAGNNTFMu{} seen in the square root term $\sqrt{\cdot}$. The summation depending on the Frobenius norm of the weight matrices is remarkably similar for the non-equivariant controllers and \PupilETDAGNN{}. \PupilETDAGNN{} has about 75\% fewer weights (see Table~\ref{tab:results:num-model-weights}), so its weights tend to be larger than that of the non-equivariant controllers. Though these bounds are much larger than the empirical generalization gap, \figref{fig:results:generalization-gap:corr} shows that they have a high correlation ($\rho \geq 0.95$) with the empirical generalization gap.

\begin{figure}[!ht]
    \centering
    \begin{subfigure}[T]{\textwidth}
        \centering
        \includegraphics[width=.95\textwidth]{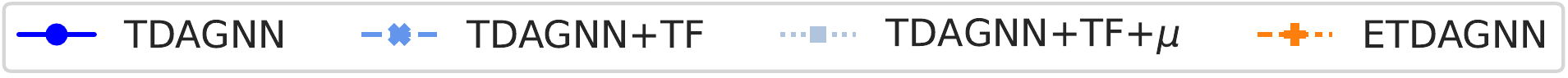}
    \end{subfigure} \\
    \begin{subfigure}[T]{.23\textwidth}
        \centering
        \hspace{-2em}
        \includegraphics[width=1.15\textwidth]{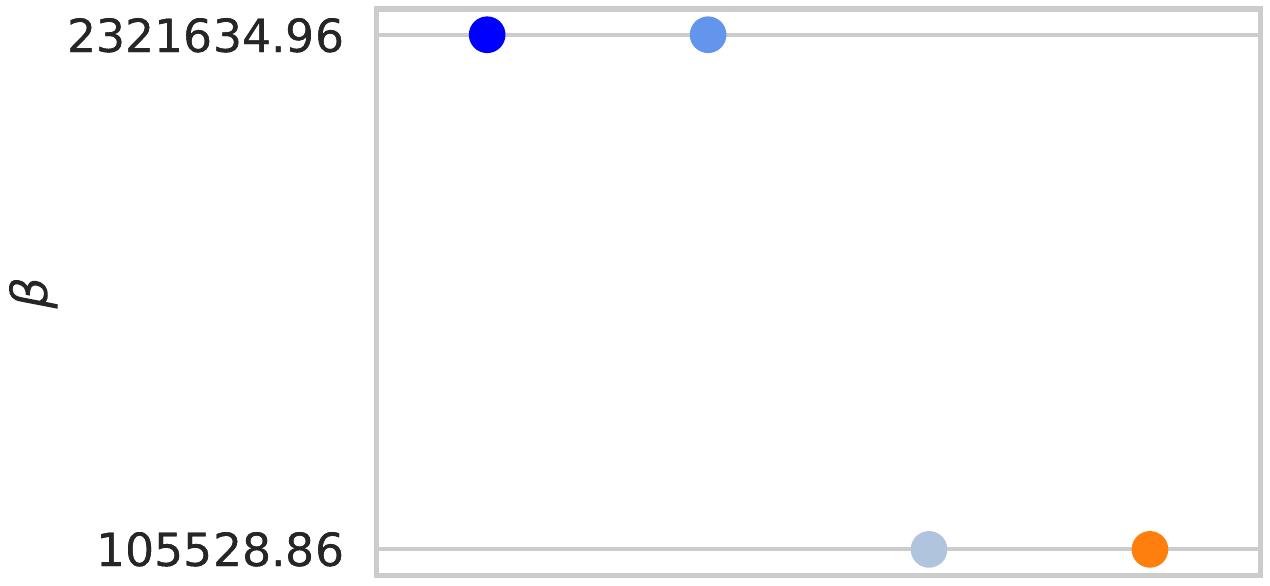}
        \caption{}
    \end{subfigure}
    \begin{subfigure}[T]{.23\textwidth}
        \centering
        \includegraphics[width=\textwidth]{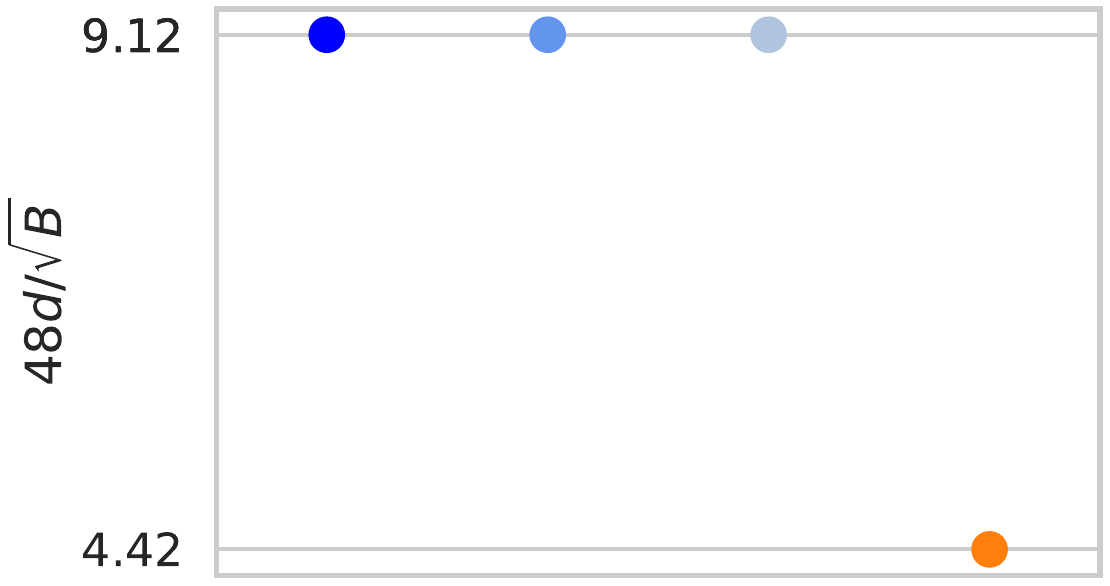}
        \caption{}
    \end{subfigure}
    \begin{subfigure}[T]{.23\textwidth}
        \centering
        \includegraphics[width=.87\textwidth]{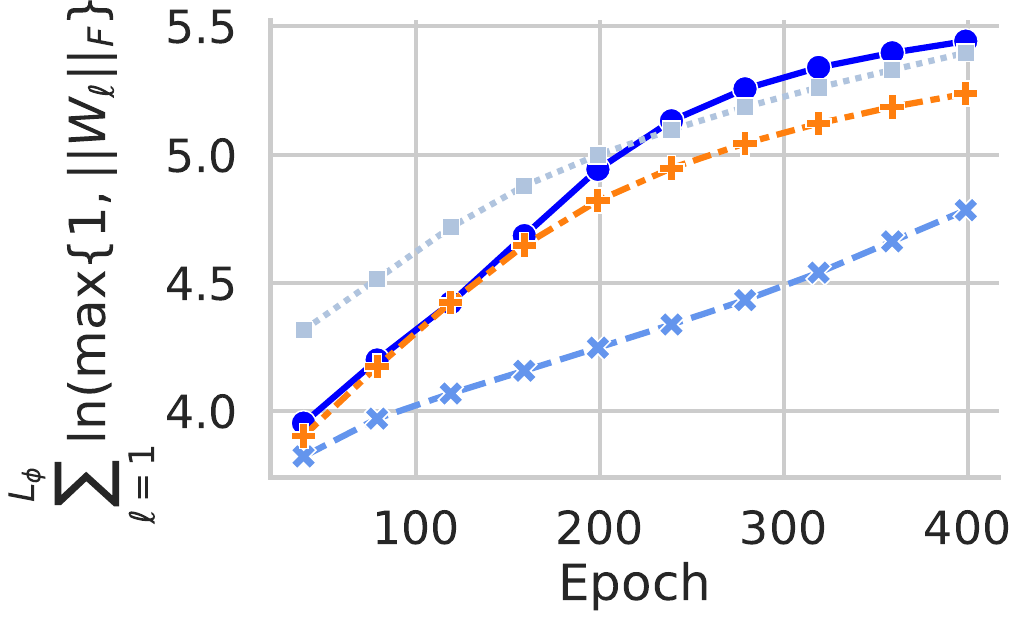}
        \caption{}
    \end{subfigure}
    \begin{subfigure}[T]{.23\textwidth}
        \centering
        \includegraphics[width=.87\textwidth]{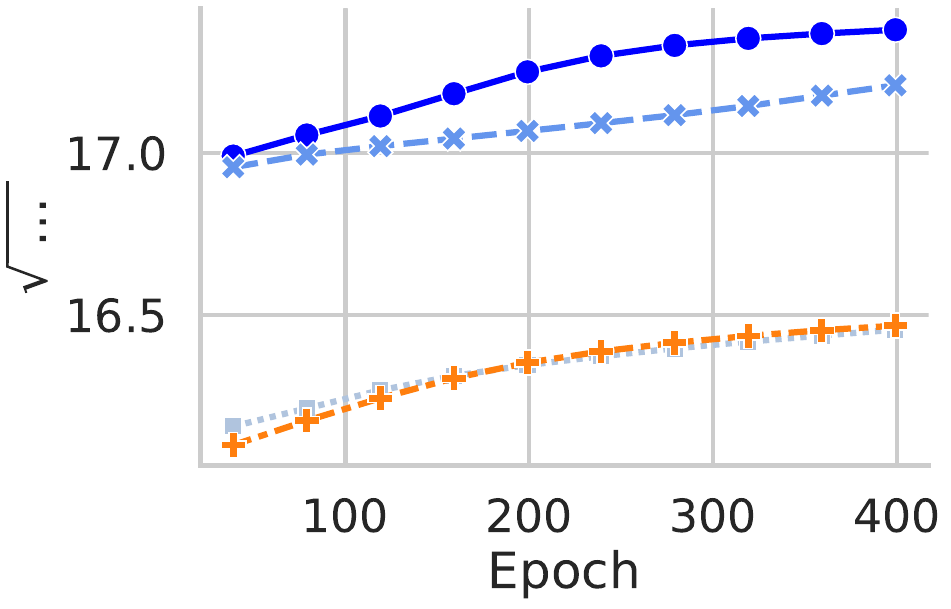}
        \caption{}
    \end{subfigure}
    \caption{
    Select terms of the generalization bound for each ML controller as it trains on its \GeneralizationFastForwardBC{} dataset.
    (a) The \GeneralizationFastForwardBC{} data bound.
    (b) The coefficient to the large square root term in the generalization bound.
    These terms are constant with respect to epochs.
    The data bound of \PupilTDAGNNTFMu{} and \PupilETDAGNN{} (mean-aggregation ML controllers) is an order of magnitude smaller compared to \PupilTDAGNN{} and \PupilTDAGNNTF{} (sum-aggregation ML controllers).
    The coefficient term of \PupilETDAGNN{} is less than half that of the other ML controllers.
    The terms in the generalization gap bound that depend on the Frobenius norm of the ML controllers' weight matrices when represented as MLPs are (c) the summation and (d) the large square root term containing that summation.
    }
    \label{fig:results:generalization-gap:variables}
\end{figure}

\begin{figure}[!ht]
    \centering
    \begin{subfigure}[T]{\textwidth}
        \centering
        \includegraphics[width=.95\textwidth]{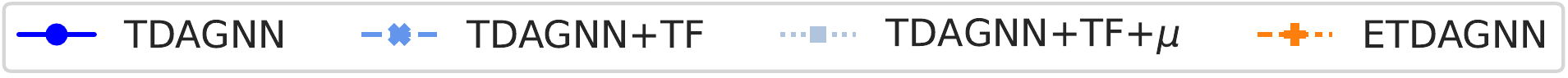}
    \end{subfigure} \\
    \begin{subfigure}[T]{.23\textwidth}
        \centering
        \hspace{-2.5em}
        \includegraphics[width=1.187\textwidth]{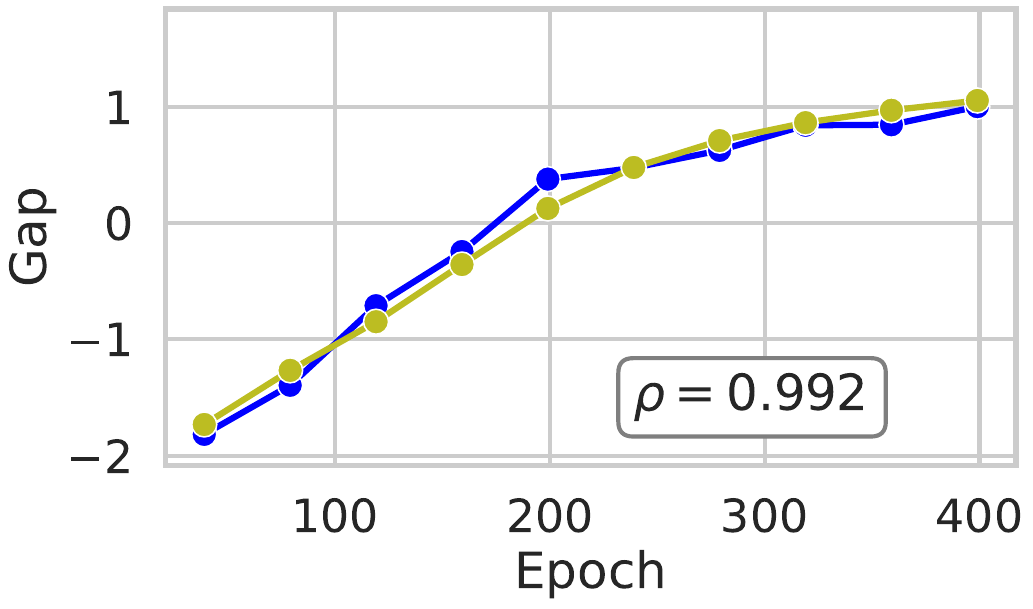}
    \end{subfigure}
    \begin{subfigure}[T]{.23\textwidth}
        \centering
        \includegraphics[width=\textwidth]{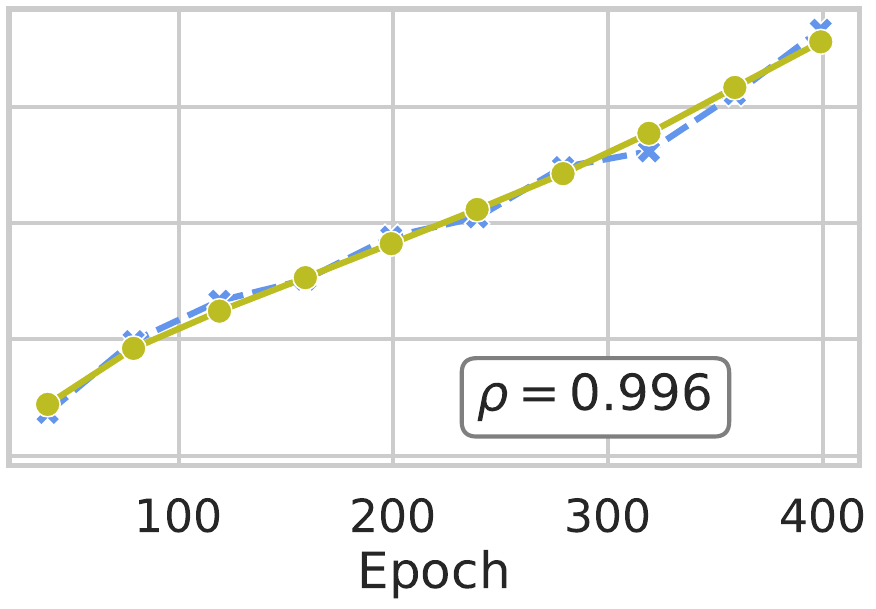}
    \end{subfigure}
    \begin{subfigure}[T]{.23\textwidth}
        \centering
        \includegraphics[width=\textwidth]{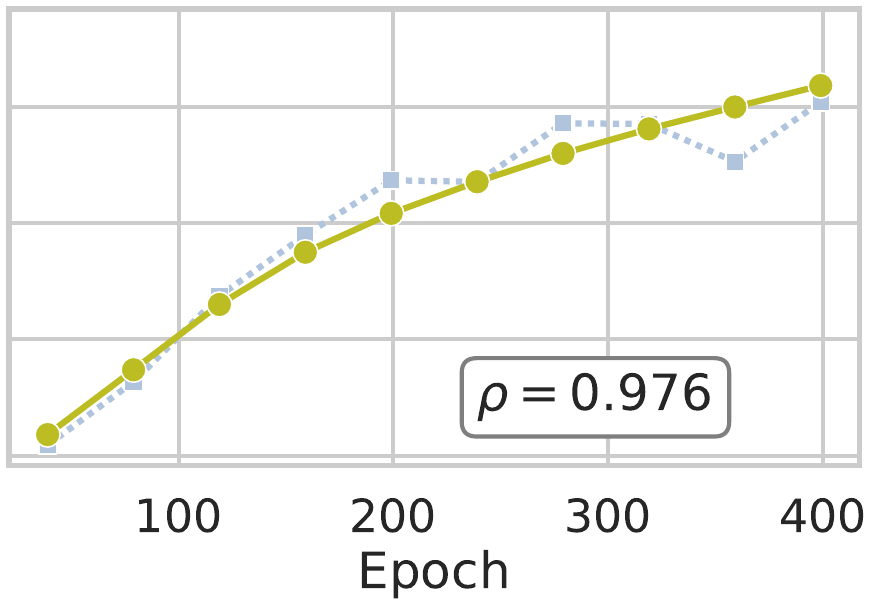}
    \end{subfigure}
    \begin{subfigure}[T]{.23\textwidth}
        \centering
        \includegraphics[width=\textwidth]{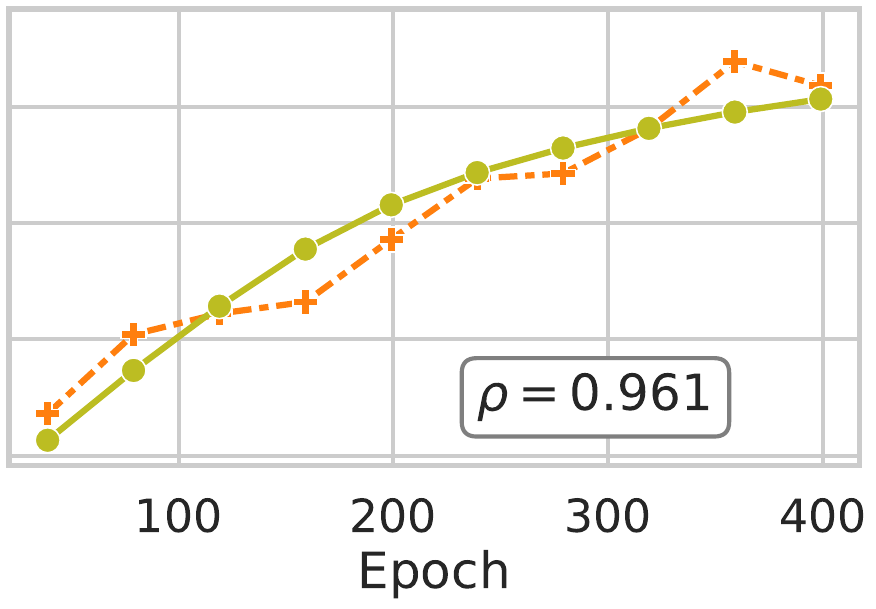}
    \end{subfigure}
    \caption{
        The correlation of generalization 
        bound and empirical generalization gap.
        We plot each of these sequences over epochs with their mean subtracted and divided by their standard deviation.
    }
    \label{fig:results:generalization-gap:corr}
\end{figure}


}

\section{Conclusion}
In response to the challenges in building decentralized flocking controllers, we presented an enhanced ML-based decentralized flocking controller that leverages a rotation equivariant and translation invariant GNN.
We demonstrated the advantages of the proposed decentralized controller over existing non-equivariant ML-based controllers and other flocking controllers using three representative case studies -- flocking, leader following, and obstacle avoidance.
We also analyzed the generalization gap of the proposed decentralized flocking controller.
Our numerical results show that the proposed decentralized controller achieves comparable performance compared to non-equivariant ML-based controllers with 70\% less training data, 75\% fewer trainable weights and a 50\% smaller generalization bound.



\section*{Declarations}


\subsection*{Data availability}
Code and animations are available at \url{github.com/Utah-Math-Data-Science/Equivariant-Decentralized-Controllers}.

\subsection*{Funding}
This material is based on research sponsored by National Science Foundation (NSF) grants DMS-2152762, DMS-2208361, DMS-2219956, and DMS-2436344, and Department of Energy grants DE-SC0023490, DE-SC0025589, and DE-SC0025801.
Taos Transue received partial financial support from the NSF under Award 2136198.

\subsection*{Competing interets}
All authors certify that they have no affiliations with or involvement in any organization or entity with any financial interest or non-financial interest in the subject matter or materials discussed in this manuscript.

\begin{appendix}

\section{Proofs}

\subsection{Preliminaries}

\begin{lemma}[Frobenius norm upper bounds spectral norm]
    \label{lemma:results:frobenius-geq-spectral}
    Let $\mA \in \bR^{m \times n}$, and then $\norm{\mA}_2 \leq \norm{\mA}_F$.
\end{lemma}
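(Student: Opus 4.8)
The plan is to reduce both norms to the singular values of $\mA$ and then observe that the desired inequality is just the statement that one nonnegative term is no larger than a sum of nonnegative terms. Recall that the spectral norm equals the largest singular value, $\norm{\mA}_2 = \sigma_1$, where $\sigma_1 \geq \sigma_2 \geq \cdots \geq 0$ are the singular values of $\mA$, and that the Frobenius norm satisfies $\norm{\mA}_F^2 = \operatorname{tr}(\mA^\top \mA) = \sum_i \sigma_i^2$. Both identities follow from the singular value decomposition $\mA = U \Sigma V^\top$ together with the orthogonal invariance of the Frobenius norm (so that $\norm{\mA}_F = \norm{\Sigma}_F$), which I would invoke as standard linear algebra.

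With these two characterizations in hand, the proof is immediate. Since every $\sigma_i^2 \geq 0$, discarding all terms but the first in the sum gives
$$
\norm{\mA}_2^2 = \sigma_1^2 \leq \sum_i \sigma_i^2 = \norm{\mA}_F^2,
$$
and taking square roots (both sides nonnegative) yields $\norm{\mA}_2 \leq \norm{\mA}_F$. The only real content is the two norm characterizations; after that there is no obstacle, as this is a classical inequality and the final step merely drops nonnegative terms from a sum.

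If a self-contained argument that avoids citing the singular value decomposition is preferred, I would instead argue directly from the variational definition $\norm{\mA}_2 = \max_{\norm{u}_2 = 1} \norm{\mA u}_2$. Writing $\va_1^\top, \dots, \va_m^\top$ for the rows of $\mA$, for any unit vector $u$ one has $\norm{\mA u}_2^2 = \sum_i (\va_i^\top u)^2$, and Cauchy--Schwarz gives $(\va_i^\top u)^2 \leq \norm{\va_i}_2^2$. Summing over $i$ produces $\norm{\mA u}_2^2 \leq \sum_i \norm{\va_i}_2^2 = \norm{\mA}_F^2$, and taking the maximum over unit $u$ finishes the proof. Either route is short; I would present the singular-value version as the main argument since it also makes transparent exactly how much the two norms can differ.
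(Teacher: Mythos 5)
Your proof is correct and follows essentially the same route as the paper: singular value decomposition plus orthogonal invariance of both norms, then dropping nonnegative terms. In fact your version is the cleaner one --- the paper's chain asserts $\sigma_{\max} \leq \sum_i \sigma_i = \norm{\mSigma}_F$, which conflates the Frobenius norm with the nuclear norm (the correct identity is $\norm{\mSigma}_F = \bigl(\sum_i \sigma_i^2\bigr)^{1/2}$, exactly the form you use), so your squared-singular-value argument quietly repairs a slip in the published proof, and your alternative Cauchy--Schwarz route is a valid, more elementary fallback.
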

\begin{proof}
    Let $\mA = \mU\mSigma\mV^\top$ be the singular value decomposition of $\mA$ where $\mU \in \bR^{m \times m}$ and $\mV \in \bR^{n \times n}$ are orthogonal matrices.
    Let $\sigma_i$ and $\sigma_{\max}$ denote the $i$-th and the largest singular value, respectively.
    Then,
    $$
        \norm{\mA}_2 = \norm{\mU\mSigma\mV^\top}_2 = \norm{\mSigma}_2 = \sigma_{\max} \leq \sum_{i = 1}^{\min\set{m,n}} \sigma_i = \norm{\mSigma}_F = \norm{\mU\mSigma\mV^\top}_F = \norm{\mA}_F
    $$
    This bound is sharp since, for $c \in \bR^{1 \times 1}$, $\norm{c}_2 = \abs{c} = \norm{c}_F$.
\end{proof}

\subsection{Generalization gap}
\begin{lemma}
\label{lemma:results:tdagnn-bounded}
Let \PupilName{} be either \PupilTDAGNN{}, \PupilTDAGNNTF{}, or \PupilTDAGNNTFMu{}. Then,
$$
    \begin{aligned}
        \norm{\PupilName\pn{\gConvInput[1]}} =
        \norm{\PupilAsMLPName{}\pn{\gConvInputAsMLP[1]}} \leq
        \pn{\prod_{\PupilLayerIndex = 1}^{\PupilLayerCount} \PupilActivationLipshitzConstant \norm{\PupilWeightMatrix_\PupilLayerIndex}_F}\norm{\gConvInputAsMLP[1]}_F,
    \end{aligned}
$$
where \PupilAsMLPName{} is the MLP representation of \PupilName{}, $\gConvInputAsMLP[1]$ is from Definition~\ref{def:results:input-for-mlp-non-equivariant}, $\PupilActivationLipshitzConstant \geq 1$ bounds the largest Lipshitz constant of the activations used by \PupilName{}, and $\set{\PupilWeightMatrix_\PupilLayerIndex}_{\PupilLayerIndex = 1}^{\PupilLayerCount}$ are the weight matrices of \PupilAsMLPName{}.
\end{lemma}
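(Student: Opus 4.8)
The plan is to linearize the network and then run a telescoping Lipschitz estimate down its layers. First I would invoke \Lemmaref{lemma:results:conv1d-as-linear} to replace the 1D CNN by its MLP representation, giving the equality $\norm{\PupilName\pn{\gConvInput[1]}} = \norm{\PupilAsMLPName\pn{\gConvInputAsMLP[1]}}$ and, simultaneously, expressing $\PupilAsMLPName$ as a composition of $\PupilLayerCount$ linear maps $z \mapsto \PupilWeightMatrix_\PupilLayerIndex z$ interleaved with the (possibly identity) activations $\PupilActivationName_\PupilLayerIndex$, with the first-layer bias absorbed into the augmented input $\gConvInputAsMLP[1]$ of Definition~\ref{def:results:input-for-mlp-non-equivariant} via its appended $\vone$ block. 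This reduction is what lets the same argument cover \PupilTDAGNN{}, \PupilTDAGNNTF{}, and \PupilTDAGNNTFMu{} at once: they differ only in where $\tanh$ versus the identity is applied, and both are handled uniformly below.

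Second, I would prove the single-layer estimate. Every activation used by $\PupilName$ satisfies $\PupilActivationName_\PupilLayerIndex\pn{\vzero} = \vzero$ (this holds for $\tanh$ and for the identity) and is Lipschitz with constant at most $\PupilActivationLipshitzConstant \geq 1$, so $\norm{\PupilActivationName_\PupilLayerIndex\pn{y}} = \norm{\PupilActivationName_\PupilLayerIndex\pn{y} - \PupilActivationName_\PupilLayerIndex\pn{\vzero}} \leq \PupilActivationLipshitzConstant\norm{y}$. Pairing this with submultiplicativity of the operator norm and \Lemmaref{lemma:results:frobenius-geq-spectral}, namely $\norm{\PupilWeightMatrix_\PupilLayerIndex z} \leq \norm{\PupilWeightMatrix_\PupilLayerIndex}_2\norm{z} \leq \norm{\PupilWeightMatrix_\PupilLayerIndex}_F\norm{z}$, yields the per-layer bound $\norm{\PupilActivationName_\PupilLayerIndex\pn{\PupilWeightMatrix_\PupilLayerIndex z}} \leq \PupilActivationLipshitzConstant\norm{\PupilWeightMatrix_\PupilLayerIndex}_F\norm{z}$. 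The hypothesis $\PupilActivationLipshitzConstant \geq 1$ is precisely what keeps this valid on layers carrying the identity activation (Lipschitz constant $1$).

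Third, I would chain the $\PupilLayerCount$ single-layer estimates by a short induction on depth, so that the Lipschitz constants and Frobenius norms multiply while the input norm factors out cleanly, producing $\norm{\PupilAsMLPName\pn{\gConvInputAsMLP[1]}} \leq \pn*{\prod_{\PupilLayerIndex = 1}^{\PupilLayerCount} \PupilActivationLipshitzConstant\norm{\PupilWeightMatrix_\PupilLayerIndex}_F}\norm{\gConvInputAsMLP[1]}_F$, which is exactly the claimed inequality.

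The telescoping itself is routine; the main obstacle is the zero-preserving property $\PupilActivationName_\PupilLayerIndex\pn{\vzero} = \vzero$ holding uniformly, since this is what delivers the clean multiplicative bound rather than an affine one contaminated by additive bias terms. Verifying it requires checking that the bias terms of the convolutional layers do not break it, and this is exactly why routing through \Lemmaref{lemma:results:conv1d-as-linear} is essential: the bias of the first layer is folded into $\gConvInputAsMLP[1]$ through the $\vone$ augmentation of Definition~\ref{def:results:input-for-mlp-non-equivariant}, so each map $\PupilActivationName_\PupilLayerIndex \circ \PupilWeightMatrix_\PupilLayerIndex$ genuinely fixes $\vzero$. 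The one point needing care is confirming that the weight matrices $\set{\PupilWeightMatrix_\PupilLayerIndex}_{\PupilLayerIndex = 1}^{\PupilLayerCount}$ supplied by \Lemmaref{lemma:results:conv1d-as-linear} are precisely the ones whose Frobenius norms appear in the bound.
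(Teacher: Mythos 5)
Your proposal is correct and follows essentially the same route as the paper's proof: reduce to the MLP representation via \Lemmaref{lemma:results:conv1d-as-linear}, use that every activation is Lipschitz with constant $\PupilActivationLipshitzConstant$ and fixes zero to get the per-layer bound, and telescope, finishing with \Lemmaref{lemma:results:frobenius-geq-spectral} to pass from spectral to Frobenius norms. The paper does exactly this by adapting Lemma~B.1 of \cite{pmlr-v235-karczewski24a}, so no substantive difference remains.
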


\begin{proof}
    The activations of \PupilName{} are either $\PupilActivationName_\PupilLayerIndex(x) = \tanh(x)$ or $\PupilActivationName_\PupilLayerIndex\pn{x} = x$ for $\PupilLayerIndex \in \set{1,\dots,\PupilLayerCount}$.
    Both have Lipshitz constant $\PupilActivationLipshitzConstant = 1$ and satisfy $\PupilActivationName\pn{0} = 0$.
    By \lemmaref{lemma:results:conv1d-as-linear}, $\PupilName$ can be expressed as an MLP \PupilAsMLPName{}.
    Adapting Lemma~B.1 of \cite{pmlr-v235-karczewski24a},
$$
    \begin{aligned}
        \norm{\PupilAsMLPName_{\PupilLayerCount}\pn{\gConvInputAsMLP[1]}}_2 & =
        \norm{\PupilActivationName_{\PupilLayerCount}\pn{\PupilAsMLPName_{\PupilLayerCount - 1}\pn{\gConvInputAsMLP[1]}\PupilWeightMatrix_{\PupilLayerCount}}}_2 =
        \norm{\PupilActivationName_{\PupilLayerCount}\pn{\PupilAsMLPName_{\PupilLayerCount - 1}\pn{\gConvInputAsMLP[1]}\PupilWeightMatrix_{\PupilLayerCount}} - \PupilActivationName_{\PupilLayerCount}\pn{\mzero}}_2
        \\ & \leq
        \PupilActivationLipshitzConstant\norm{\PupilAsMLPName_{\PupilLayerCount - 1}\pn{\gConvInputAsMLP[1]}\PupilWeightMatrix_{\PupilLayerCount}}_2 \leq
        \PupilActivationLipshitzConstant\norm{\PupilWeightMatrix_{\PupilLayerCount}}_2\norm{\PupilAsMLPName_{\PupilLayerCount - 1}\pn{\gConvInputAsMLP[1]}}_2
        \\ & \leq
        \dots \leq
        \pn{\prod_{\PupilLayerIndex = 1}^{\PupilLayerCount} \PupilActivationLipshitzConstant\norm{\PupilWeightMatrix_\PupilLayerIndex}_2}\norm{\gConvInputAsMLP[1]}_2
        \\ & =
        \pn{\prod_{\PupilLayerIndex = 1}^{\PupilLayerCount} \PupilActivationLipshitzConstant\norm{\PupilWeightMatrix_\PupilLayerIndex}_2}\norm{\gConvInputAsMLP[1]}_F \quad \text{by Definition~\ref{def:results:input-for-mlp-non-equivariant}}
        \\ & \leq
        \pn{\prod_{\PupilLayerIndex = 1}^{\PupilLayerCount} \PupilActivationLipshitzConstant\norm{\PupilWeightMatrix_\PupilLayerIndex}_F}\norm{\gConvInputAsMLP[1]}_F \quad \text{by \Lemmaref{lemma:results:frobenius-geq-spectral}}
    \end{aligned}
$$
\end{proof}

\begin{lemma}
    \label{lemma:results:etdagnn-bounded}
    Let \PupilName{} be an \PupilETDAGNN{}.
    Then,
    $$
    \begin{aligned}
        \norm{\PupilName\pn{\gConvInput[1]}} =
        \norm{\PupilAsMLPName{}\pn{\gConvInputAsMLP[1]}} \leq
        \pn{\prod_{\PupilLayerIndex = 1}^{\PupilLayerCount} \PupilActivationLipshitzConstant \norm{\PupilWeightMatrix_\PupilLayerIndex}_F}\norm{\gConvInputAsMLP[1]}_F,
    \end{aligned}
    $$
    where \PupilAsMLPName{} is the MLP representation of \PupilETDAGNN{}, $\gConvInputAsMLP[1]$ is defined in Definition~\ref{def:results:input-for-mlp-equivariant}, $\PupilActivationLipshitzConstant \geq 1$ bounds the largest Lipshitz constant of the activations in \PupilETDAGNN{}, and $\set{\PupilWeightMatrix_\PupilLayerIndex}_{\PupilLayerIndex = 1}^{\PupilLayerCount}$ are the weight matrices of \PupilAsMLPName{}.
\end{lemma}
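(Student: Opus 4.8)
The plan is to follow the proof of Lemma~\ref{lemma:results:tdagnn-bounded} essentially verbatim, because the telescoping argument there used only two properties of the network's activations: that each fixes the origin and that each is $\PupilActivationLipshitzConstant$-Lipschitz. First I would invoke \lemmaref{lemma:results:eqconv-as-linear} to rewrite \PupilETDAGNN{} as an MLP \PupilAsMLPName{} with weight matrices $\set{\PupilWeightMatrix_\PupilLayerIndex}_{\PupilLayerIndex=1}^{\PupilLayerCount}$, then run the identical chain of inequalities: at each layer, use $\PupilActivationName(\vzero)=\vzero$ together with $\PupilActivationLipshitzConstant$-Lipschitzness and $\norm{\mA\PupilWeightMatrix_\PupilLayerIndex}_2\leq\norm{\mA}_2\norm{\PupilWeightMatrix_\PupilLayerIndex}_2$ to peel off one factor of $\PupilActivationLipshitzConstant\norm{\PupilWeightMatrix_\PupilLayerIndex}_2$, collapse the telescope to $\prod_\PupilLayerIndex\PupilActivationLipshitzConstant\norm{\PupilWeightMatrix_\PupilLayerIndex}_2$, and finally replace spectral norms by Frobenius norms via \Lemmaref{lemma:results:frobenius-geq-spectral}. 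Since every activation in \PupilETDAGNN{} (including the one applied before the first convolution) fixes $\vzero$ and is $\PupilActivationLipshitzConstant$-Lipschitz with $\PupilActivationLipshitzConstant\geq 1$, and there are at most $\PupilLayerCount$ of them, the telescope accrues a total Lipschitz factor of at most $\PupilActivationLipshitzConstant^{\PupilLayerCount}$, matching the exponent in the claimed bound. Thus the only genuinely new work is verifying these two hypotheses for the equivariant activations \PupilETDAGNNActivationScaleLog{} and \PupilETDAGNNActivationScaleTanh{}.

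Both activations are block-separable: each applies a single radial map $\PupilActivationEquivariantName(\FlockAgentPos)=\FlockAgentPos\,\PupilActivationInvariantName(\norm{\FlockAgentPos})$ independently to every $2$-vector block $\EqConvChannel_{\ConvChannelOutIndex,\ConvFeatureIndex}$ of the layer. I would first record the elementary reduction that if this block map is $L$-Lipschitz and fixes $\vzero$, then the full activation inherits both properties in Frobenius norm, because the squared Frobenius distance between two outputs is the sum over blocks of the squared block distances, each of which is bounded by $L^2$ times the corresponding squared block distance of the inputs. Zero-preservation is immediate for both maps since $\PupilActivationEquivariantName(\vzero)=\vzero\,\PupilActivationInvariantName(0)=\vzero$, so everything reduces to a Lipschitz bound on the two radial maps.

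The main obstacle is this Lipschitz bound, which I would obtain from a short Jacobian computation. For a radial map $\FlockAgentPos\mapsto\FlockAgentPos\,\phi(\norm{\FlockAgentPos})$, writing $r=\norm{\FlockAgentPos}$, the Jacobian has eigenvalue $\frac{d}{dr}\spn{r\phi(r)}=\phi(r)+r\phi'(r)$ along $\FlockAgentPos$ and eigenvalue $\phi(r)$ in each orthogonal direction, so its operator norm equals $\max\set{\abs{\frac{d}{dr}\spn{r\phi(r)}},\,\abs{\phi(r)}}$ and the Lipschitz constant is the supremum of this quantity over $r\geq 0$. For \PupilETDAGNNActivationScaleLog{}, $\phi(r)=\ln(1+r)/r$ gives radial factor $\frac{d}{dr}\ln(1+r)=1/(1+r)\leq 1$ and tangential factor $\ln(1+r)/r\leq 1$ (using $\ln(1+r)\leq r$), so its Lipschitz constant is at most $1$. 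For \PupilETDAGNNActivationScaleTanh{}, $\phi(r)=\tanh(r)$ gives radial factor $\tanh(r)+r\,\mathrm{sech}^2(r)$ and tangential factor $\tanh(r)\leq 1$; the radial factor is continuous on $\hlpn{0,\infty}$, vanishes at $r=0$, and tends to $1$ as $r\to\infty$, so it attains a finite maximum and the Lipschitz constant is finite. Taking $\PupilActivationLipshitzConstant$ to be the larger of these two constants (and at least $1$) discharges both hypotheses. The one point requiring care is the origin, where \PupilActivationInvariantName{} for \PupilETDAGNNActivationScaleLog{} is defined by its limiting value $1$; there I would note that the radial map is continuous at $\vzero$ and $C^1$ away from it, so the uniform bound on the Jacobian still yields global Lipschitzness.
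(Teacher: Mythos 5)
Your proposal is correct and follows essentially the same route as the paper's proof: invoke \lemmaref{lemma:results:eqconv-as-linear} for the MLP representation, run the same telescoping chain as in \lemmaref{lemma:results:tdagnn-bounded}, and reduce the new work to showing the block-separable equivariant activations fix $\mzero$ and are Lipschitz in the Frobenius norm, which the paper does via \Lemmaref{lemma:results:equivariant-zero-at-zero}, \ref{lemma:results:equivariant-func-lipshitz}, \ref{lemma:results:equivariant-block-matrix-func-lipshitz}, \ref{lemma:results:equivariant-block-matrix-func-lipshitz-multiple-output-channels}, \ref{lemma:results:etdagnn-sigma0-lipshitz}, and \ref{lemma:results:etdagnn-sigma1-lipshitz}. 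Your exact eigenvalue decomposition of the radial Jacobian is a slightly sharper (and sign-safe) variant of the paper's triangle-inequality bound, but the argument is the same in substance.
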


\begin{proof}
    \PupilName{} uses the \OrthogonalGroup{n} equivariant activations $\bm\PupilActivationName_\PupilLayerIndex \in \set{\PupilETDAGNNActivationScaleLog, \PupilETDAGNNActivationScaleTanh, \vx \mapsto \vx}$ for $\PupilLayerIndex \in \set{1,\dots,\PupilLayerCount}$.
    Let \PupilActivationLipshitzConstant{} be the maximum of their Lipshitz constants.
    By \Lemmaref{lemma:results:equivariant-block-matrix-func-lipshitz-multiple-output-channels}, \ref{lemma:results:etdagnn-sigma0-lipshitz}, and \ref{lemma:results:etdagnn-sigma1-lipshitz}, $\bm\PupilActivationName_\PupilLayerIndex$ is an \OrthogonalGroup{2} equivariant activation with global Lipshitz \PupilActivationLipshitzConstant{} in the Frobenius norm, and $\bm\PupilActivationName_\PupilLayerIndex\pn{\mzero} = \mzero$ by \Lemmaref{lemma:results:equivariant-zero-at-zero}.
    Next, by \Lemmaref{lemma:results:eqconv-as-linear}, \PupilName{} can be expressed as an MLP \PupilAsMLPName{}.
    Adapting Lemma~B.1 of \cite{pmlr-v235-karczewski24a},
$$
    \begin{aligned}
        \norm{\PupilAsMLPName_{\PupilLayerCount}\pn{\gConvInputAsMLP[1]}}_2 & =
        \norm{\PupilActivationName_{\PupilLayerCount}\pn{\PupilAsMLPName_{\PupilLayerCount - 1}\pn{\gConvInputAsMLP[1]}\PupilWeightMatrix_{\PupilLayerCount}}}_2 =
        \norm{\PupilActivationName_{\PupilLayerCount}\pn{\PupilAsMLPName_{\PupilLayerCount - 1}\pn{\gConvInputAsMLP[1]}\PupilWeightMatrix_{\PupilLayerCount}} - \PupilActivationName_{\PupilLayerCount}\pn{\mzero}}_2
        \\ & \leq
        \PupilActivationLipshitzConstant\norm{\PupilAsMLPName_{\PupilLayerCount - 1}\pn{\gConvInputAsMLP[1]}\PupilWeightMatrix_{\PupilLayerCount}}_2 \leq
        \PupilActivationLipshitzConstant\norm{\PupilWeightMatrix_{\PupilLayerCount}}_2\norm{\PupilAsMLPName_{\PupilLayerCount - 1}\pn{\gConvInputAsMLP[1]}}_2
        \\ & \leq
        \dots \leq
        \pn{\prod_{\PupilLayerIndex = 1}^{\PupilLayerCount} \PupilActivationLipshitzConstant\norm{\PupilWeightMatrix_\PupilLayerIndex}_2}\norm{\gConvInputAsMLP[1]}_2
        \\ & \leq
        \pn{\prod_{\PupilLayerIndex = 1}^{\PupilLayerCount} \PupilActivationLipshitzConstant\norm{\PupilWeightMatrix_\PupilLayerIndex}_F}\norm{\gConvInputAsMLP[1]}_F \quad \text{by \Lemmaref{lemma:results:frobenius-geq-spectral}}
    \end{aligned}
$$
\end{proof}

\begin{lemma}
    \label{lemma:results:scoring-model-bounded-with-bounded-input}
    Let the scoring model $\PupilGraphScorer$ be as given in \eqref{eq:generalization-gap:scoring-model} equivalently written to take an \GeneralizationFastForwardBC{} tuple as input.
    Using assumption~\ref{assumption:results:dagger-dataset-non-equivariant-bounded},
$$
    \begin{aligned}
        \PupilGraphScorer\pn{\gDAggerDatasetDatum} \leq \PupilGraphScorerBias^2 + \pn{1 + \pn{\prod_{\PupilLayerIndex = 1}^{\PupilLayerCount} \PupilActivationLipshitzConstant\norm{\PupilWeightMatrix_\PupilLayerIndex}_F}^2}\DAggerDatumBound^2.
    \end{aligned}
$$
\end{lemma}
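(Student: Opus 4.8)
The plan is to unfold the scoring model, reduce the bound to a single per-agent estimate, and then control the pupil's output by chaining the two preceding boundedness lemmas with Assumption~\ref{assumption:results:dagger-dataset-non-equivariant-bounded}. First I would rewrite $\PupilGraphScorer$ on a \GeneralizationFastForwardBC{} tuple using its definition in \eqref{eq:generalization-gap:scoring-model}, namely
$$\PupilGraphScorer\pn{\gDAggerDatasetDatum} = \PupilGraphScorerBias^2 + \frac{1}{\FlockAgentCount}\sum_{\FlockAgentIndex = 1}^{\FlockAgentCount}\norm{\gExpertName\pn{\gExpertInput} - \PupilName\pn{\PupilHistoryMatrix_\FlockAgentIndex}}^2.$$
Since the second summand is an arithmetic mean of the per-agent squared errors $\mathrm{SE}_\FlockAgentIndex$, it suffices to bound a single $\mathrm{SE}_\FlockAgentIndex$ uniformly over $\FlockAgentIndex$: an average of quantities lying below a common ceiling stays below that ceiling, so a uniform per-agent bound transfers directly to $\mathrm{MSE}$.

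Next I would bound each $\mathrm{SE}_\FlockAgentIndex = \norm{\gExpertName\pn{\gExpertInput} - \PupilName\pn{\PupilHistoryMatrix_\FlockAgentIndex}}^2$ by separating the expert output from the pupil output. The expert term is immediate from Assumption~\ref{assumption:results:dagger-dataset-non-equivariant-bounded}, which gives $\norm{\gExpertName\pn{\gExpertInput}} \le \DAggerDatumBound$. For the pupil term I would invoke Lemma~\ref{lemma:results:tdagnn-bounded} (for \PupilTDAGNN{}, \PupilTDAGNNTF{}, \PupilTDAGNNTFMu{}) or Lemma~\ref{lemma:results:etdagnn-bounded} (for \PupilETDAGNN{}), either of which yields
$$\norm{\PupilName\pn{\PupilHistoryMatrix_\FlockAgentIndex}} \le \pn{\prod_{\PupilLayerIndex = 1}^{\PupilLayerCount}\PupilActivationLipshitzConstant\norm{\PupilWeightMatrix_\PupilLayerIndex}_F}\norm{\gConvInputAsMLP[1]}_F,$$
and applying the assumption a second time to cap $\norm{\gConvInputAsMLP[1]}_F \le \DAggerDatumBound$ turns the right-hand side into $\bigl(\prod_{\PupilLayerIndex = 1}^{\PupilLayerCount}\PupilActivationLipshitzConstant\norm{\PupilWeightMatrix_\PupilLayerIndex}_F\bigr)\DAggerDatumBound$. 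Substituting both norm bounds into $\mathrm{SE}_\FlockAgentIndex$ and simplifying then yields the per-agent ceiling $\bigl(1 + \bigl(\prod_{\PupilLayerIndex = 1}^{\PupilLayerCount}\PupilActivationLipshitzConstant\norm{\PupilWeightMatrix_\PupilLayerIndex}_F\bigr)^2\bigr)\DAggerDatumBound^2$ claimed by the lemma, and adding back the constant $\PupilGraphScorerBias^2$ from the scoring model completes the argument.

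The main obstacle is the middle step: correctly threading the output bound of Lemma~\ref{lemma:results:tdagnn-bounded}/\ref{lemma:results:etdagnn-bounded} — which is phrased for the MLP reinterpretation $\PupilAsMLPName$ acting on $\gConvInputAsMLP[1]$ — through the two distinct definitions of $\gConvInputAsMLP[1]$ (Definitions~\ref{def:results:input-for-mlp-non-equivariant} and \ref{def:results:input-for-mlp-equivariant}), and verifying that the single constant $\DAggerDatumBound$ supplied by Assumption~\ref{assumption:results:dagger-dataset-non-equivariant-bounded} simultaneously caps the expert acceleration and the Frobenius norm of the MLP input. Once both norm bounds are in hand, combining them inside the squared-difference term and passing from the per-agent bound to the mean are routine, as the per-agent estimate is uniform in $\FlockAgentIndex$.
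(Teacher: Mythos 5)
Your route mirrors the paper's own proof step for step: expand $\PupilGraphScorer$, reduce to a uniform per-agent bound on $\mathrm{SE}_\FlockAgentIndex$, control the expert term by $\DAggerDatumBound$ via Assumption~\ref{assumption:results:dagger-dataset-non-equivariant-bounded}, control the pupil term by $\bigl(\prod_{\PupilLayerIndex=1}^{\PupilLayerCount}\PupilActivationLipshitzConstant\norm{\PupilWeightMatrix_\PupilLayerIndex}_F\bigr)\norm{\gConvInputAsMLP[1]}_F$ via Lemma~\ref{lemma:results:tdagnn-bounded} or~\ref{lemma:results:etdagnn-bounded}, and cap $\norm{\gConvInputAsMLP[1]}_F\leq\DAggerDatumBound$ by the same assumption. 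All of that is handled correctly, including the threading of the two definitions of $\gConvInputAsMLP[1]$. The problem is the one step you compress into ``substituting both norm bounds into $\mathrm{SE}_\FlockAgentIndex$ and simplifying.'' Writing $K=\prod_{\PupilLayerIndex=1}^{\PupilLayerCount}\PupilActivationLipshitzConstant\norm{\PupilWeightMatrix_\PupilLayerIndex}_F$, the triangle inequality gives $\norm{\gExpertName\pn{\gExpertInput}-\PupilName\pn{\PupilHistoryMatrix_\FlockAgentIndex}}^2\leq\pn{\DAggerDatumBound+K\DAggerDatumBound}^2=\pn{1+K}^2\DAggerDatumBound^2$, which exceeds the claimed ceiling $\pn{1+K^2}\DAggerDatumBound^2$ by the cross term $2K\DAggerDatumBound^2$. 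To land exactly on $\pn{1+K^2}\DAggerDatumBound^2$ you would need $\norm{\va-\vb}^2\leq\norm{\va}^2+\norm{\vb}^2$, which is false in general (take $\vb=-\va\neq\vzero$); the generic elementary inequality is $\norm{\va-\vb}^2\leq 2\norm{\va}^2+2\norm{\vb}^2$.

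In fairness, the paper's own proof makes the identical jump --- its first displayed inequality is precisely $\norm{\gExpertName\pn{\gExpertInput}-\PupilName\pn{\gConvInput[1]}}^2\leq\norm{\gExpertName\pn{\gExpertInput}}^2+\norm{\PupilName\pn{\gConvInput[1]}}^2$ --- so you have faithfully reconstructed the intended argument. But as a self-contained derivation the ``simplifying'' step is a genuine gap: either the lemma's constant must be relaxed to $\pn{1+K}^2\DAggerDatumBound^2$ (or $2\pn{1+K^2}\DAggerDatumBound^2$), or some additional justification is needed to discard the cross term. If you want your write-up to be airtight, make the inequality you are using explicit at this point rather than appealing to ``simplifying,'' and state which constant you actually obtain.
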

\begin{proof}
    Using Assumption~\ref{assumption:results:dagger-dataset-non-equivariant-bounded}, and \Lemmaref{lemma:results:tdagnn-bounded} and \ref{lemma:results:etdagnn-bounded},
$$
    \begin{aligned}
        \PupilGraphScorer\pn{\gDAggerDatasetDatum} & \leq \PupilGraphScorerBias^2 + \frac{1}{\FlockAgentCount}\sum_{\FlockAgentIndex = 1}^\FlockAgentCount \norm{\gExpertName\pn{\gExpertInput}}^2 + \norm{\PupilName\pn{\gConvInput[1]}}^2 \\
        & = \PupilGraphScorerBias^2 + \frac{1}{\FlockAgentCount}\sum_{\FlockAgentIndex = 1}^\FlockAgentCount \norm{\gExpertName\pn{\gExpertInput}}^2 + \norm{\PupilAsMLPName\pn{\gConvInputAsMLP[1]}}^2 \\
        & \leq \PupilGraphScorerBias^2 + \frac{1}{\FlockAgentCount}\sum_{\FlockAgentIndex = 1}^\FlockAgentCount \norm{\gExpertName\pn{\gExpertInput}}_F^2 + \pn{\prod_{\PupilLayerIndex = 1}^{\PupilLayerCount} \PupilActivationLipshitzConstant\norm{\PupilWeightMatrix_\PupilLayerIndex}_F}^2\norm{\gConvInputAsMLP[1]}_F^2 \\
        & \leq \PupilGraphScorerBias^2 + \frac{1}{\FlockAgentCount}\sum_{\FlockAgentIndex = 1}^\FlockAgentCount \beta^2 + \pn{\prod_{\PupilLayerIndex = 1}^{\PupilLayerCount} \PupilActivationLipshitzConstant\norm{\PupilWeightMatrix_\PupilLayerIndex}_F}^2\DAggerDatumBound^2 \\
        & \leq \PupilGraphScorerBias^2 + \pn{1 + \pn{\prod_{\PupilLayerIndex = 1}^{\PupilLayerCount} \PupilActivationLipshitzConstant\norm{\PupilWeightMatrix_\PupilLayerIndex}_F}^2}\DAggerDatumBound^2.
    \end{aligned}
$$
\end{proof}

The ML controllers are parameterized by weight matrices, which have a bounded covering number (Lemma~8 of \cite{chenGeneralizationBoundsFamily2019}).
Using Lemma~G.1 of \cite{pmlr-v235-karczewski24a}, if we can show the ML controllers are Lipshitz with respect to the weight matrices, then we can bound the covering number of the ML controllers.
Next, we show that the ML controllers and their scoring function are Lipshitz. 

\begin{lemma}[Lipshitz continuity of 
TDAGNN] 
    \label{lemma:results:tdagnn-lipshitz-wrt-weights}
    Let \PupilName{} be either \PupilTDAGNN{}, \PupilTDAGNNTF{}, \PupilTDAGNNTFMu{}, or \PupilETDAGNN{} with an MLP representation $\PupilAsMLPName$.
    Take two functions $\PupilAsMLPName\pn{\gConvInputAsMLP[1]; \PupilWeightMatrixCollection}$ and $\PupilName\pn{\gConvInputAsMLP[1]; \tilde{\PupilWeightMatrixCollection}}$ with \PupilLayerCount{} layers where \PupilWeightMatrixCollection{} and $\tilde{\PupilWeightMatrixCollection}$ are collections of weight matrices. 
    Let $\PupilWeightMatrixBound_\PupilLayerIndex \geq 1$ s.t. $\max\set{\norm{\PupilWeightMatrix_\PupilLayerIndex}_F, \norm{\tilde{\PupilWeightMatrix}_\PupilLayerIndex}_F} \leq \PupilWeightMatrixBound_\PupilLayerIndex$.
    Then,
$$
    \begin{aligned}
        \norm{\PupilAsMLPName\pn{\gConvInputAsMLP[1]; \PupilWeightMatrixCollection} - \PupilAsMLPName\pn{\gConvInputAsMLP[1]; \tilde{\PupilWeightMatrixCollection}}} \leq \norm{\gConvInputAsMLP[1]}_F\pn{\prod_{\PupilLayerIndex = 1}^{\PupilLayerCount} \PupilActivationLipshitzConstant\PupilWeightMatrixBound_\PupilLayerIndex}\sum_{\PupilLayerIndex = 1}^{\PupilLayerCount} \norm{\PupilWeightMatrix_\PupilLayerIndex - \tilde{\PupilWeightMatrix}_\PupilLayerIndex}_F,
    \end{aligned}
$$
    where $\PupilActivationLipshitzConstant \geq 1$ bounds the largest Lipshitz constant of the activations used by \PupilAsMLPName{}.
\end{lemma}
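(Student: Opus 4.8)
The plan is to prove the bound by induction on the layer index $\PupilLayerIndex$, tracking how a perturbation of the weights propagates through the network one layer at a time. Write $\PupilAsMLPName_\PupilLayerIndex(\gConvInputAsMLP[1]; \PupilWeightMatrixCollection)$ for the output of the first $\PupilLayerIndex$ layers of the MLP representation under weights $\PupilWeightMatrixCollection$, so that $\PupilAsMLPName_\PupilLayerIndex = \PupilActivationName_\PupilLayerIndex(\PupilAsMLPName_{\PupilLayerIndex - 1}\PupilWeightMatrix_\PupilLayerIndex)$ with $\PupilAsMLPName_0 = \gConvInputAsMLP[1]$, and let a tilde denote the same quantities computed with $\tilde{\PupilWeightMatrixCollection}$. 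For the equivariant network the identical recurrence holds with $\PupilActivationName_\PupilLayerIndex$ replaced by $\bm\PupilActivationName_\PupilLayerIndex$, which is \PupilActivationLipshitzConstant{}-Lipschitz and satisfies $\bm\PupilActivationName_\PupilLayerIndex(\mzero) = \mzero$ by Lemma~\ref{lemma:results:equivariant-zero-at-zero}, so a single argument covers all four controllers.

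First I would establish the single-layer step. Applying the Lipschitz property of the activation and the telescoping decomposition
$$
\PupilAsMLPName_{\PupilLayerIndex - 1}\PupilWeightMatrix_\PupilLayerIndex - \tilde{\PupilAsMLPName}_{\PupilLayerIndex - 1}\tilde{\PupilWeightMatrix}_\PupilLayerIndex = \pn{\PupilAsMLPName_{\PupilLayerIndex - 1} - \tilde{\PupilAsMLPName}_{\PupilLayerIndex - 1}}\PupilWeightMatrix_\PupilLayerIndex + \tilde{\PupilAsMLPName}_{\PupilLayerIndex - 1}\pn{\PupilWeightMatrix_\PupilLayerIndex - \tilde{\PupilWeightMatrix}_\PupilLayerIndex},
$$
then using the mixed-norm inequality $\norm{AB}_F \leq \norm{A}_F\norm{B}_2$ together with Lemma~\ref{lemma:results:frobenius-geq-spectral} to replace spectral norms of weights by $\norm{\PupilWeightMatrix_\PupilLayerIndex}_F \leq \PupilWeightMatrixBound_\PupilLayerIndex$, I obtain
$$
\norm{\PupilAsMLPName_\PupilLayerIndex - \tilde{\PupilAsMLPName}_\PupilLayerIndex}_F \leq \PupilActivationLipshitzConstant\PupilWeightMatrixBound_\PupilLayerIndex\norm{\PupilAsMLPName_{\PupilLayerIndex - 1} - \tilde{\PupilAsMLPName}_{\PupilLayerIndex - 1}}_F + \PupilActivationLipshitzConstant\norm{\tilde{\PupilAsMLPName}_{\PupilLayerIndex - 1}}_F\norm{\PupilWeightMatrix_\PupilLayerIndex - \tilde{\PupilWeightMatrix}_\PupilLayerIndex}_F.
$$

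Next I would bound the intermediate activation $\norm{\tilde{\PupilAsMLPName}_{\PupilLayerIndex - 1}}_F$. The same telescoping already used to prove Lemmas~\ref{lemma:results:tdagnn-bounded} and \ref{lemma:results:etdagnn-bounded} applies verbatim to an intermediate layer, giving $\norm{\tilde{\PupilAsMLPName}_{\PupilLayerIndex - 1}}_F \leq \pn{\prod_{k=1}^{\PupilLayerIndex - 1}\PupilActivationLipshitzConstant\PupilWeightMatrixBound_k}\norm{\gConvInputAsMLP[1]}_F$. Writing $P_\PupilLayerIndex = \prod_{k=1}^\PupilLayerIndex \PupilActivationLipshitzConstant\PupilWeightMatrixBound_k$ and substituting, and noting $\PupilActivationLipshitzConstant P_{\PupilLayerIndex - 1} \leq P_\PupilLayerIndex$ since $\PupilWeightMatrixBound_\PupilLayerIndex \geq 1$, the single-layer step becomes
$$
\norm{\PupilAsMLPName_\PupilLayerIndex - \tilde{\PupilAsMLPName}_\PupilLayerIndex}_F \leq \PupilActivationLipshitzConstant\PupilWeightMatrixBound_\PupilLayerIndex\norm{\PupilAsMLPName_{\PupilLayerIndex - 1} - \tilde{\PupilAsMLPName}_{\PupilLayerIndex - 1}}_F + P_\PupilLayerIndex\norm{\gConvInputAsMLP[1]}_F\norm{\PupilWeightMatrix_\PupilLayerIndex - \tilde{\PupilWeightMatrix}_\PupilLayerIndex}_F.
$$
A clean induction on $\PupilLayerIndex$, with base case $\norm{\PupilAsMLPName_0 - \tilde{\PupilAsMLPName}_0}_F = 0$ and using the identity $\PupilActivationLipshitzConstant\PupilWeightMatrixBound_\PupilLayerIndex P_{\PupilLayerIndex - 1} = P_\PupilLayerIndex$, then yields the claimed bound at $\PupilLayerIndex = \PupilLayerCount$.

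I expect the only real subtlety to be the bookkeeping in the inductive step: one must verify that the growth factor $\PupilActivationLipshitzConstant\PupilWeightMatrixBound_\PupilLayerIndex$ multiplying the inductive hypothesis merges with $P_{\PupilLayerIndex - 1}$ to give exactly $P_\PupilLayerIndex$, so that each perturbation term $\norm{\PupilWeightMatrix_k - \tilde{\PupilWeightMatrix}_k}_F$ emerges with the common prefactor $P_\PupilLayerIndex\norm{\gConvInputAsMLP[1]}_F$ and the terms assemble into a single sum. The only genuine structural point is that the intermediate-layer bound and the zero-preservation property hold uniformly across the standard and equivariant activations, which is precisely what Lemmas~\ref{lemma:results:tdagnn-bounded}, \ref{lemma:results:etdagnn-bounded}, and \ref{lemma:results:equivariant-zero-at-zero} supply; everything else is routine norm manipulation.
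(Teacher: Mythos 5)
Your proof is correct and takes essentially the same route as the paper: the paper's proof is a one-line deferral to Lemma~B.2 of the cited reference (with the Frobenius norm substituted), and the telescoping decomposition plus induction you describe, with the intermediate-layer norm bound recycled from the boundedness lemmas, is precisely the argument that citation contains. The bookkeeping you flag (absorbing $\PupilActivationLipshitzConstant\PupilWeightMatrixBound_\PupilLayerIndex$ times the prefactor $P_{\PupilLayerIndex-1}$ into $P_\PupilLayerIndex$ using $\PupilWeightMatrixBound_\PupilLayerIndex \geq 1$) checks out.
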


\begin{proof}
    The proof follows the proof of Lemma B.2 of \cite{pmlr-v235-karczewski24a} where $\UtilNormWithPlaceholder = \UtilNormWithPlaceholder_F$.
\end{proof}

\begin{lemma}
    \label{lemma:results:scoring-function-locally-lipshitz}
    Let \PupilAsMLPName{} be the MLP representation of \PupilName{} and $\PupilGraphScorer$ be as defined in \eqref{eq:generalization-gap:scoring-model}.
    Consider two parameterizations of \PupilAsMLPName{} and $\PupilGraphScorer$: $\set{\PupilGraphScorerBias,\PupilWeightMatrixCollection},\set{\tilde\PupilWeightMatrixCollection,\tilde\PupilGraphScorerBias}$.
    Let $\PupilWeightMatrixBound_\PupilLayerIndex \geq 1$ and $\PupilWeightMatrixBound_\PupilGraphScorer \geq 1$ where $\max\set{\norm{\PupilWeightMatrix_\PupilLayerIndex}_F, \norm{\tilde{\PupilWeightMatrix}_\PupilLayerIndex}_F} \leq \PupilWeightMatrixBound_\PupilLayerIndex$ and $\max\set{\abs{\PupilGraphScorerBias}, \abs{\tilde{\PupilGraphScorerBias}}} \leq \PupilWeightMatrixBound_\PupilGraphScorer$.
    By Assumption~\ref{assumption:results:dagger-dataset-non-equivariant-bounded},
$$
    \begin{aligned}
        &\ \ \abs{\PupilGraphScorer\pn{\gDAggerDatasetDatum; \PupilWeightMatrixCollection, \PupilGraphScorerBias} - \PupilGraphScorer\pn{\gDAggerDatasetDatum; \tilde{\PupilWeightMatrixCollection}, \tilde\PupilGraphScorerBias}}\\
        & \leq 2\PupilWeightMatrixBound_\PupilGraphScorer\abs{\PupilGraphScorerBias - \tilde\PupilGraphScorerBias} + 2\pn{1 +  \PupilLipshitzConstantName}\DAggerDatumBound^2\PupilLipshitzConstantName\sum_{\PupilLayerIndex = 1}^{\PupilLayerCount} \norm{\PupilWeightMatrix_\PupilLayerIndex - \tilde{\PupilWeightMatrix}_\PupilLayerIndex}_F.
    \end{aligned}
$$
    where $\PupilLipshitzConstantName = \PupilLipshitzConstantDefn$.
\end{lemma}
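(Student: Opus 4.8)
The plan is to bound the gap between the two parameterizations by splitting the scoring model into its bias part $\PupilGraphScorerBias^2$ and its MSE part and treating each separately. First I would apply the triangle inequality to obtain $\abs{\PupilGraphScorer\pn{\gDAggerDatasetDatum; \PupilWeightMatrixCollection, \PupilGraphScorerBias} - \PupilGraphScorer\pn{\gDAggerDatasetDatum; \tilde{\PupilWeightMatrixCollection}, \tilde\PupilGraphScorerBias}} \leq \abs{\PupilGraphScorerBias^2 - \tilde\PupilGraphScorerBias^2} + \abs{\mathrm{MSE} - \widetilde{\mathrm{MSE}}}$, where the two MSE quantities are evaluated at $\PupilWeightMatrixCollection$ and $\tilde{\PupilWeightMatrixCollection}$ respectively. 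The bias part is immediate: factoring $\PupilGraphScorerBias^2 - \tilde\PupilGraphScorerBias^2 = \pn{\PupilGraphScorerBias - \tilde\PupilGraphScorerBias}\pn{\PupilGraphScorerBias + \tilde\PupilGraphScorerBias}$ and using $\abs{\PupilGraphScorerBias}, \abs{\tilde\PupilGraphScorerBias} \leq \PupilWeightMatrixBound_\PupilGraphScorer$ yields the first summand $2\PupilWeightMatrixBound_\PupilGraphScorer\abs{\PupilGraphScorerBias - \tilde\PupilGraphScorerBias}$.

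The heart of the argument is the MSE part. Since the expert acceleration is independent of the network weights, for each agent $\FlockAgentIndex$ I would set $a_\FlockAgentIndex := \gExpertName\pn{\gExpertInput}$ and write $p_\FlockAgentIndex := \PupilAsMLPName\pn{\gConvInputAsMLP[1]; \PupilWeightMatrixCollection}$ and $\tilde p_\FlockAgentIndex$ for the output under $\tilde{\PupilWeightMatrixCollection}$, so that the per-agent squared-error difference is $\norm{a_\FlockAgentIndex - p_\FlockAgentIndex}^2 - \norm{a_\FlockAgentIndex - \tilde p_\FlockAgentIndex}^2$. Factoring through $\abs{u^2 - v^2} = \abs{u - v}\abs{u + v}$ with $u = \norm{a_\FlockAgentIndex - p_\FlockAgentIndex}$ and $v = \norm{a_\FlockAgentIndex - \tilde p_\FlockAgentIndex}$, the reverse triangle inequality bounds the first factor by $\norm{p_\FlockAgentIndex - \tilde p_\FlockAgentIndex}$, while the ordinary triangle inequality bounds the second factor by the sum of the two residual norms.

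For the sum-of-residuals factor I would combine Assumption~\ref{assumption:results:dagger-dataset-non-equivariant-bounded} (which gives $\norm{a_\FlockAgentIndex} \leq \DAggerDatumBound$ and $\norm{\gConvInputAsMLP[1]}_F \leq \DAggerDatumBound$) with the output bound \Lemmaref{lemma:results:tdagnn-bounded} and~\ref{lemma:results:etdagnn-bounded} (the latter for the equivariant controller), which under $\norm{\PupilWeightMatrix_\PupilLayerIndex}_F, \norm{\tilde\PupilWeightMatrix_\PupilLayerIndex}_F \leq \PupilWeightMatrixBound_\PupilLayerIndex$ give $\norm{p_\FlockAgentIndex}, \norm{\tilde p_\FlockAgentIndex} \leq \PupilLipshitzConstantName\DAggerDatumBound$, since $\PupilLipshitzConstantName = \PupilLipshitzConstantDefn$ is exactly the product appearing in those bounds. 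Hence the two residual norms sum to at most $2\pn{1 + \PupilLipshitzConstantName}\DAggerDatumBound$. For the difference-of-predictions factor I would invoke the weight-Lipschitz estimate \Lemmaref{lemma:results:tdagnn-lipshitz-wrt-weights}, giving $\norm{p_\FlockAgentIndex - \tilde p_\FlockAgentIndex} \leq \DAggerDatumBound\,\PupilLipshitzConstantName\sum_{\PupilLayerIndex = 1}^{\PupilLayerCount}\norm{\PupilWeightMatrix_\PupilLayerIndex - \tilde\PupilWeightMatrix_\PupilLayerIndex}_F$. Multiplying the two factors produces a per-agent bound that is independent of $\FlockAgentIndex$, so averaging over the $\FlockAgentCount$ agents in $\mathrm{MSE}$ leaves it unchanged and yields the second summand $2\pn{1 + \PupilLipshitzConstantName}\DAggerDatumBound^2\PupilLipshitzConstantName\sum_{\PupilLayerIndex = 1}^{\PupilLayerCount}\norm{\PupilWeightMatrix_\PupilLayerIndex - \tilde\PupilWeightMatrix_\PupilLayerIndex}_F$; adding the bias and MSE summands completes the proof.

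The main obstacle is bookkeeping rather than conceptual: I must apply the uniform per-layer bounds $\PupilWeightMatrixBound_\PupilLayerIndex$ consistently so that the single constant $\PupilLipshitzConstantName$ simultaneously governs the magnitude of the outputs (\Lemmaref{lemma:results:tdagnn-bounded}) and their sensitivity to the weights (\Lemmaref{lemma:results:tdagnn-lipshitz-wrt-weights}). The one subtlety to verify is that both of these lemmas tolerate either weight collection, which holds because each only requires the relevant Frobenius norms to be at most $\PupilWeightMatrixBound_\PupilLayerIndex$, a condition satisfied by both $\PupilWeightMatrixCollection$ and $\tilde{\PupilWeightMatrixCollection}$ by hypothesis; the equivariant case is then handled identically by substituting \Lemmaref{lemma:results:etdagnn-bounded} for \Lemmaref{lemma:results:tdagnn-bounded}.
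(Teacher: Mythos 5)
Your proposal is correct and follows essentially the same route as the paper's proof: split off the bias term, bound the per-agent squared-error difference by $\pn{2\norm{\va}+\norm{\vp}+\norm{\tilde{\vp}}}\norm{\vp-\tilde{\vp}}$ (which you derive via difference-of-squares and the reverse triangle inequality, while the paper simply cites this as a fact), then combine the output-magnitude bounds with the weight-Lipschitz lemma and the data bound $\DAggerDatumBound$. The constants work out identically, so this is the same argument with one elementary inequality proved rather than quoted.
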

\begin{proof}
$$
    \begin{aligned}
        &\ \ \abs{\PupilGraphScorer\pn{\cdot; \PupilWeightMatrixCollection} - \PupilGraphScorer\pn{\cdot; \tilde{\PupilWeightMatrixCollection}}}\\ & = \abs{\PupilGraphScorerBias^2 + \frac{1}{\FlockAgentCount}\sum_{\FlockAgentIndex = 1}^\FlockAgentCount \norm{\gExpertName\pn{\gExpertInput} - \PupilAsMLPName\pn{\gConvInputAsMLP[1]; \PupilWeightMatrixCollection}}^2 - \PupilGraphScorerBias^2 - \frac{1}{\FlockAgentCount}\sum_{\FlockAgentIndex = 1}^\FlockAgentCount\norm{\gExpertName\pn{\gExpertInput} - \PupilAsMLPName\pn{\gConvInputAsMLP[1]; \tilde{\PupilWeightMatrixCollection}}}^2} \\
        & \leq \abs{\PupilGraphScorerBias^2 - \tilde{\PupilGraphScorerBias}^2} + \frac{1}{\FlockAgentCount}\sum_{\FlockAgentIndex = 1}^\FlockAgentCount \abs{\norm{\gExpertName\pn{\gExpertInput} - \PupilAsMLPName\pn{\gConvInputAsMLP[1]; \PupilWeightMatrixCollection}}^2 - \norm{\gExpertName\pn{\gExpertInput} - \PupilAsMLPName\pn{\gConvInputAsMLP[1]; \tilde{\PupilWeightMatrixCollection}}}^2}.
    \end{aligned}
$$
    Using the fact $\abs{\norm{\vx - \va}^2 - \norm{\vx - \vb}^2} \leq \pn{2\norm{\vx} + \norm{\va} + \norm{\vb}}\norm{\va - \vb}$, $\norm{\gExpertName\pn{\gExpertInput}} \leq \DAggerDatumBound$ by Assumption~\ref{assumption:results:dagger-dataset-non-equivariant-bounded}, and $\norm{\PupilAsMLPName\pn{\gConvInputAsMLP[1]}} \leq \pn{\prod_{\PupilLayerIndex = 1}^{\PupilLayerCount} \PupilActivationLipshitzConstant\norm{\PupilWeightMatrix_\PupilLayerIndex}_F}\norm{\gConvInputAsMLP[1]}_F \leq \PupilLipshitzConstantName\norm{\gConvInputAsMLP[1]}_F$ by \Lemmaref{lemma:results:tdagnn-bounded} or \ref{lemma:results:etdagnn-bounded},
$$
    \begin{aligned}
        \abs{\PupilGraphScorer\pn{\cdot; \PupilWeightMatrixCollection} - \PupilGraphScorer\pn{\cdot; \tilde{\PupilWeightMatrixCollection}}} & \leq 2\PupilWeightMatrixBound_\PupilGraphScorer\abs{\PupilGraphScorerBias - \tilde{\PupilGraphScorerBias}} + \frac{2}{\FlockAgentCount} \sum_{\FlockAgentIndex = 1}^\FlockAgentCount \pn{\DAggerDatumBound + \PupilLipshitzConstantName\norm{\gConvInputAsMLP[1]}_F}\norm{\PupilAsMLPName\pn{\gConvInputAsMLP[1]; \PupilWeightMatrixCollection} - \PupilAsMLPName\pn{\gConvInputAsMLP[1]; \tilde{\PupilWeightMatrixCollection}}}.
    \end{aligned}
$$
    By Lemma~\ref{lemma:results:tdagnn-lipshitz-wrt-weights},
$$
    \begin{aligned}
       &\ \  \abs{\PupilGraphScorer\pn{\cdot; \PupilWeightMatrixCollection} - \PupilGraphScorer\pn{\cdot; \tilde{\PupilWeightMatrixCollection}}}\\ & \leq 2\PupilWeightMatrixBound_\PupilGraphScorer\abs{\PupilGraphScorerBias - \tilde{\PupilGraphScorerBias}} + \frac{2}{\FlockAgentCount} \sum_{\FlockAgentIndex = 1}^\FlockAgentCount \pn{\DAggerDatumBound + \PupilLipshitzConstantName\norm{\gConvInputAsMLP[1]}_F}\norm{\gConvInputAsMLP[1]}_F\PupilLipshitzConstantName\sum_{\PupilLayerIndex = 1}^{\PupilLayerCount}\norm{\PupilWeightMatrix_\PupilLayerIndex - \tilde\PupilWeightMatrix_\PupilLayerIndex}_F.
    \end{aligned}
$$
    Using that $\norm{\gConvInputAsMLP[1]}_F \leq \DAggerDatumBound$ by Assumption~\ref{assumption:results:dagger-dataset-non-equivariant-bounded},
$$
    \begin{aligned}
        \abs{\PupilGraphScorer\pn{\cdot; \PupilWeightMatrixCollection} - \PupilGraphScorer\pn{\cdot; \tilde{\PupilWeightMatrixCollection}}} & \leq 2\PupilWeightMatrixBound_\PupilGraphScorer\abs{\PupilGraphScorerBias - \tilde{\PupilGraphScorerBias}} + \frac{2}{\FlockAgentCount} \sum_{\FlockAgentIndex = 1}^\FlockAgentCount \pn{\DAggerDatumBound + \PupilLipshitzConstantName\DAggerDatumBound}\DAggerDatumBound\PupilLipshitzConstantName\sum_{\PupilLayerIndex = 1}^{\PupilLayerCount}\norm{\PupilWeightMatrix_\PupilLayerIndex - \tilde\PupilWeightMatrix_\PupilLayerIndex}_F \\
        & = 2\PupilWeightMatrixBound_\PupilGraphScorer\abs{\PupilGraphScorerBias - \tilde{\PupilGraphScorerBias}} + 2\pn{1 + \PupilLipshitzConstantName}\DAggerDatumBound^2\PupilLipshitzConstantName\sum_{\PupilLayerIndex = 1}^{\PupilLayerCount}\norm{\PupilWeightMatrix_\PupilLayerIndex - \tilde\PupilWeightMatrix_\PupilLayerIndex}_F.
    \end{aligned}
$$
\end{proof}

With the above supporting lemmas, we are ready to prove the following result:

\begin{proposition}[Generalization bound of TDAGNN]
Let $P$ be the probability distribution over tuples $\pn{\gDAggerDatasetDatum}$ induced by \GeneralizationFastForwardBC{}. 
Let $\cL\pn{y} = \min\set{1, y/\LossFunctionMSENormalizationConstant}$ for $\LossFunctionMSENormalizationConstant > 0$ be the loss function.
Let $\set{\PupilWeightMatrix_\PupilLayerIndex}_{\PupilLayerIndex = 1}^{\PupilLayerCount}$ be the weights of the MLP representation $\PupilAsMLPName$ of $\PupilName$ given by \Lemmaref{lemma:results:conv1d-as-linear} or \ref{lemma:results:eqconv-as-linear}, and let $\PupilGraphScorerBias$ of $\PupilGraphScorer$ in \eqref{eq:generalization-gap:scoring-model} such that $\PupilGraphScorerBias \in \spn{0, \sqrt{\LossFunctionMSENormalizationConstant}}$.
For any $\delta > 0$, with probability at least $1 - \delta$ over choosing a batch $\GeneralizationSampleName$ of $\GeneralizationSampleSize$ tuples sampled from $P$, the following bound holds:
$$
\begin{aligned}
   &\ \  \gGeneralizationGap\pn{\PupilGraphScorer} \leq \frac{8}{\GeneralizationSampleSize} \\
   & + \frac{48\PupilWeightMatrixLargestDim}{\sqrt{\GeneralizationSampleSize}}\sqrt{\pn{3\PupilLayerCount + 1}\ln\pn{10\PupilLayerCount\DAggerDatumBound\PupilActivationLipshitzConstant^{\PupilLayerCount}\sqrt{\PupilWeightMatrixLargestDim\GeneralizationSampleSize\LossFunctionMSENormalizationConstant}} + \pn{2\PupilLayerCount + 3}\sum_{\PupilLayerIndex = 1}^{\PupilLayerCount} \ln\pn{\max\set{1,\norm{\PupilWeightMatrix_\PupilLayerIndex}_F}}} + 3\sqrt{\frac{\ln\pn{\frac{2}{\delta}}}{2\GeneralizationSampleSize}}.
\end{aligned}
$$
\end{proposition}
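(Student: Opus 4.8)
The plan is to chain three ingredients already established in the excerpt: Theorem~\ref{theorem:erc-bounds-generalization-gap} to pass from the generalization gap to the empirical Rademacher complexity (ERC), Lemma~\ref{lemma:bounding-erc} to pass from the ERC to a Dudley-type entropy integral, and a covering-number estimate for the loss class obtained from the parameter-Lipschitz bound of Lemma~\ref{lemma:results:scoring-function-locally-lipshitz}. Since $\cL(y)=\min\{1,y/\LossFunctionMSENormalizationConstant\}$ takes values in $[0,1]$, the loss class $\GeneralizationFunctionsDatumToLoss=\{z\mapsto\cL(\PupilGraphScorer(z))\}$ meets the hypotheses of Theorem~\ref{theorem:erc-bounds-generalization-gap} with boundedness constant $\GeneralizationBartlettBound=1$, which immediately supplies the additive term $3\sqrt{\ln(2/\delta)/(2\GeneralizationSampleSize)}$ and reduces everything to bounding $2\GeneralizationEmpiricalRademacherComplexity(\GeneralizationFunctionsDatumToLoss)$.

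First I would bound the covering number $\GeneralizationCoveringNumber(\GeneralizationFunctionsDatumToLoss,r,\UtilNormWithPlaceholder_\infty)$. Because $\cL$ is $(1/\LossFunctionMSENormalizationConstant)$-Lipschitz, an $\LossFunctionMSENormalizationConstant r$-cover of the scoring-model class $\{z\mapsto\PupilGraphScorer(z)\}$ in sup-norm induces an $r$-cover of the loss class, so it suffices to cover the scoring models. The constraint $\PupilGraphScorerBias\in[0,\sqrt{\LossFunctionMSENormalizationConstant}]$ together with the weight bounds $\PupilWeightMatrixBound_\PupilLayerIndex=\max\{1,\norm{\PupilWeightMatrix_\PupilLayerIndex}_F\}$ confines the parameters to a product of $\PupilLayerCount$ Frobenius-norm balls (each living in at most $\PupilWeightMatrixLargestDim^2$ coordinates) and one bounded interval for $\PupilGraphScorerBias$. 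By Lemma~\ref{lemma:results:scoring-function-locally-lipshitz} the map from parameters to sup-norm function value is Lipschitz with constant controlled by $2(1+\PupilLipshitzConstantName)\DAggerDatumBound^2\PupilLipshitzConstantName$ in the weights and $2\sqrt{\LossFunctionMSENormalizationConstant}$ in $\PupilGraphScorerBias$; composing this with the covering-number estimate for Frobenius balls (Lemma~8 of \cite{chenGeneralizationBoundsFamily2019}), in the style of Lemma~G.1 of \cite{pmlr-v235-karczewski24a}, gives a product cover whose logarithm is, across the $\PupilLayerCount+1$ parameter blocks, of order $\PupilWeightMatrixLargestDim^2$ times $\ln(1/r)$ plus scale terms coming from $\DAggerDatumBound$, $\PupilActivationLipshitzConstant^{\PupilLayerCount}$, and $\LossFunctionMSENormalizationConstant$.

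Next I would insert this $\ln\GeneralizationCoveringNumber$ bound into the Dudley integral of Lemma~\ref{lemma:bounding-erc}. With $\GeneralizationBartlettBound=1$ the upper limit is $2\sqrt{\GeneralizationSampleSize}$, and choosing $\alpha=1/\sqrt{\GeneralizationSampleSize}$ makes the leading term $2\cdot\frac{4\alpha}{\sqrt{\GeneralizationSampleSize}}=\frac{8}{\GeneralizationSampleSize}$. Pulling $\PupilWeightMatrixLargestDim$ out of the square root (the log-cover is proportional to $\PupilWeightMatrixLargestDim^2$) and bounding the integral by (interval length)$\,\times\,$(supremum of the monotone integrand at $r=\alpha$) yields $\frac{24}{\GeneralizationSampleSize}\cdot 2\sqrt{\GeneralizationSampleSize}\cdot\PupilWeightMatrixLargestDim\sqrt{\cdot}$; together with the factor $2$ from $2\GeneralizationEmpiricalRademacherComplexity$ in Theorem~\ref{theorem:erc-bounds-generalization-gap} this produces exactly the $48\PupilWeightMatrixLargestDim/\sqrt{\GeneralizationSampleSize}$ prefactor. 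The remaining work is to split the logarithm inside the square root into a scale/Lipschitz part that collapses to $\ln(10\PupilLayerCount\DAggerDatumBound\PupilActivationLipshitzConstant^{\PupilLayerCount}\sqrt{\PupilWeightMatrixLargestDim\GeneralizationSampleSize\LossFunctionMSENormalizationConstant})$ weighted by $(3\PupilLayerCount+1)$, and a weight-norm part $\sum_{\PupilLayerIndex=1}^{\PupilLayerCount}\ln\max\{1,\norm{\PupilWeightMatrix_\PupilLayerIndex}_F\}$ weighted by $(2\PupilLayerCount+3)$, using the product structure of $\PupilLipshitzConstantName$ over layers to turn the product appearing in the Lipschitz constant into a sum of logarithms.

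The hard part will be the constant-tracking in this last step: obtaining the exact coefficients $(3\PupilLayerCount+1)$ and $(2\PupilLayerCount+3)$ and the precise argument of the inner logarithm requires careful accounting for (i) the extra parameter block from covering $\PupilGraphScorerBias$ alongside the $\PupilLayerCount$ weight matrices, (ii) the factor $\LossFunctionMSENormalizationConstant$ absorbed from the $\cL$-Lipschitz rescaling of the cover radius, and (iii) the powers of $\PupilActivationLipshitzConstant$ and $\DAggerDatumBound$ inherited from $\PupilLipshitzConstantName$ and from the boundedness estimate of Lemma~\ref{lemma:results:scoring-model-bounded-with-bounded-input}. I would keep the dependence on $r$ explicit through the integrand and evaluate the logarithm only at the lower endpoint $r=\alpha$, since $\sqrt{\ln(\cdot/r)}$ is decreasing in $r$, which is what lets the entire Dudley integral be dominated by a single logarithmic factor of the stated form.
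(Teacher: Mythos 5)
Your plan follows the paper's proof essentially step for step: Theorem~\ref{theorem:erc-bounds-generalization-gap} to reduce the gap to the ERC, the Dudley-type bound of Lemma~\ref{lemma:bounding-erc} with $\alpha\sim 1/\sqrt{\GeneralizationSampleSize}$ to produce the $8/\GeneralizationSampleSize$ term and reduce everything to a single covering number evaluated near the lower endpoint, the parameter-Lipschitz estimate of Lemma~\ref{lemma:results:scoring-function-locally-lipshitz} combined with Lemma~G.1 of \cite{pmlr-v235-karczewski24a} and the Frobenius-ball covering bound of \cite{chenGeneralizationBoundsFamily2019} to control that covering number, and then logarithm bookkeeping to extract the coefficients $\pn{3\PupilLayerCount+1}$ and $\pn{2\PupilLayerCount+3}$. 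This is exactly the route the paper takes.

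The one concrete step you skip is verifying the hypothesis of Lemma~\ref{lemma:bounding-erc} that the function class contains an identically zero element. The loss class $\GeneralizationFunctionsDatumToLoss$ itself does not contain the zero function (all of its members are nonnegative and no admissible scorer achieves zero loss on every tuple), so the lemma cannot be applied to it directly. The paper instead works with the complemented class of functions $1-\LossFunctionName\pn{\PupilGraphScorer\pn{\cdot}}$ and exhibits a zero element by setting all kernel and bias weights of $\PupilName$ to zero and taking $\PupilGraphScorerBias=\sqrt{\LossFunctionMSENormalizationConstant}$, which forces $\PupilGraphScorer\geq\LossFunctionMSENormalizationConstant$ and hence $\LossFunctionName\equiv 1$. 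This construction is the reason for the otherwise unexplained hypothesis $\PupilGraphScorerBias\in\spn{0,\sqrt{\LossFunctionMSENormalizationConstant}}$ in the statement, and it is also where the $\sqrt{\LossFunctionMSENormalizationConstant}$ inside the logarithm actually enters (through the bound $\PupilWeightMatrixBound_\PupilGraphScorer=\sqrt{\LossFunctionMSENormalizationConstant}$ on the scalar parameter block of the cover), rather than through a Lipschitz rescaling of the cover radius by $\LossFunctionMSENormalizationConstant$ as you suggest. Neither point derails your argument, but both must be addressed to land on the stated constants.
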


\begin{proof}
    Let $\GeneralizationFunctionsLearners = \set{\PupilGraphScorer\pn{\cdot; \PupilWeightMatrixCollection, \PupilGraphScorerBias}: \PupilWeightMatrixCollection = \set{\PupilWeightMatrix_\PupilLayerIndex}_{\PupilLayerIndex = 1}^{\PupilLayerCount}, \norm{\PupilWeightMatrix_\PupilLayerIndex}_F \leq \PupilWeightMatrixBound_\PupilLayerIndex, \abs{\PupilGraphScorerBias} \leq \PupilWeightMatrixBound_\PupilGraphScorer}$, where $\PupilWeightMatrixBound_\PupilGraphScorer = \sqrt{\LossFunctionMSENormalizationConstant}$.
    Define the set of datum-to-loss functions $\GeneralizationFunctionsDatumToLoss = \set{\pn{\gDAggerDatasetDatum} \mapsto \LossFunctionName\pn{\PupilGraphScorer\pn{\gDAggerDatasetDatum}}: \PupilGraphScorer \in \GeneralizationFunctionsLearners}$.
    We follow the steps from proof of Proposition 4.1 from \cite{pmlr-v235-karczewski24a}. 
    Applying these steps requires that we find an $\GeneralizationFunctionDatumToLossHatZero \in \GeneralizationFunctionsDatumToLossHat$ where
$$
    \begin{aligned}
        \GeneralizationFunctionsDatumToLossHat = \set{\pn{\gDAggerDatasetDatum} \mapsto 1 - \LossFunctionName\pn{\PupilGraphScorer\pn{\gDAggerDatasetDatum}}: \PupilGraphScorer \in \GeneralizationFunctionsLearners},
    \end{aligned}
$$
    and $\GeneralizationFunctionDatumToLossHatZero\pn{\gDAggerDatasetDatum} = 0$ for all $\pn{\gDAggerDatasetDatum}$ satisfying Assumption~\ref{assumption:results:dagger-dataset-non-equivariant-bounded}.
    By definition of $\LossFunctionName$, we can construct $\GeneralizationFunctionDatumToLossHatZero$ by finding $\PupilGraphScorer$ s.t. $\PupilGraphScorer\pn{\gDAggerDatasetDatum} \geq \LossFunctionMSENormalizationConstant$ for all $\pn{\gDAggerDatasetDatum}$.
    This can be achieved by setting the kernel weights and bias weights (if present) of $\PupilName$ to zero, and setting $\PupilGraphScorerBias$ of $\PupilGraphScorer$ in \eqref{eq:generalization-gap:scoring-model} to $\PupilGraphScorerBias = \sqrt{\LossFunctionMSENormalizationConstant}$. Therefore, we choose $\GeneralizationFunctionDatumToLossHatZero\pn{\cdot} = 1 - \LossFunctionName\pn{\PupilGraphScorer\pn{\cdot; \set{\mzero}_{\PupilLayerIndex = 1}^{\PupilLayerCount}, \sqrt{\LossFunctionMSENormalizationConstant}}}$. With $\GeneralizationFunctionDatumToLossHatZero$ found, we follow 
    \cite{pmlr-v235-karczewski24a} to obtain
$$
    \begin{aligned}
        \GeneralizationEmpiricalRademacherComplexity\pn{\GeneralizationFunctionsDatumToLoss} \leq \frac{4}{\GeneralizationSampleSize} + \frac{24}{\GeneralizationSampleSize}\sqrt{\ln\GeneralizationCoveringNumber\pn{\GeneralizationFunctionsLearners, \frac{1}{2\sqrt{\GeneralizationSampleSize}}, \UtilNormWithPlaceholder_F}}.
    \end{aligned}
$$
    Next, we adapt the steps in \cite{pmlr-v235-karczewski24a} to bound $\ln\GeneralizationCoveringNumber\pn{\GeneralizationFunctionsLearners, \frac{1}{2\sqrt{\GeneralizationSampleSize}}, \UtilNormWithPlaceholder_\infty}$.
    Using \Lemmaref{lemma:results:scoring-function-locally-lipshitz}, we bound the supremum distance between functions in $\GeneralizationFunctionsLearners$ by a function of the distance between their weight matrices:
$$
    \begin{aligned}
        & \ \ \norm{\PupilGraphScorer\pn{\cdot; \PupilWeightMatrixCollection, \PupilGraphScorerBias} - \PupilGraphScorer\pn{\cdot; \tilde{\PupilWeightMatrixCollection}, \tilde\PupilGraphScorerBias}}_\infty\\ & = \sup_{\pn{\gDAggerDatasetDatum} \in \GeneralizationDatasetDistribution} \abs{\PupilGraphScorer\pn{\gDAggerDatasetDatum;\PupilWeightMatrixCollection} - \PupilGraphScorer\pn{\gDAggerDatasetDatum;\tilde{\PupilWeightMatrixCollection}}} \\
        & \leq 2\PupilWeightMatrixBound_\PupilGraphScorer\abs{\PupilGraphScorerBias - \tilde{\PupilGraphScorerBias}} + 2\pn{1 + \PupilLipshitzConstantName}\DAggerDatumBound^2\PupilLipshitzConstantName\sum_{\PupilLayerIndex = 1}^{\PupilLayerCount} \norm{\PupilWeightMatrix_\PupilLayerIndex - \tilde{\PupilWeightMatrix}_\PupilLayerIndex}_F
        \\
        & \leq \underbrace{2\pn{\PupilWeightMatrixBound_\PupilGraphScorer + \pn{1 + \PupilLipshitzConstantName}\DAggerDatumBound^2\PupilLipshitzConstantName}}_{:= \PupilGraphScorerLipshitzConstantName}\pn{\abs{\PupilGraphScorerBias - \tilde{\PupilGraphScorerBias}} +\sum_{\PupilLayerIndex = 1}^{\PupilLayerCount} \norm{\PupilWeightMatrix_\PupilLayerIndex - \tilde{\PupilWeightMatrix}_\PupilLayerIndex}_F}.
    \end{aligned}
$$
    Using that bound, Lemma~G.1 from \cite{pmlr-v235-karczewski24a} bounds the covering number of $\GeneralizationFunctionsLearners$ by the covering number of the weight matrices.
$$
    \begin{aligned}
        \ln\GeneralizationCoveringNumber\pn{\GeneralizationFunctionsLearners, r, \UtilNormWithPlaceholder_\infty} \leq \ln\GeneralizationCoveringNumber\pn{\PupilGraphScorerBias \in \spn{0, \sqrt{\LossFunctionMSENormalizationConstant}}, \frac{r}{\PupilLayerCount\PupilGraphScorerLipshitzConstantName}, \abs{\;\cdot\;}} + \sum_{\PupilLayerIndex = 1}^{\PupilLayerCount} \ln\GeneralizationCoveringNumber\pn{\PupilWeightMatrixCollection_\PupilLayerIndex, \frac{r}{\PupilLayerCount\PupilGraphScorerLipshitzConstantName}, \UtilNormWithPlaceholder_F},
    \end{aligned}
$$
    where $\PupilWeightMatrixCollection_\PupilLayerIndex$ is the set of possible matrices for $\PupilWeightMatrix_\PupilLayerIndex$. Using Lemma~3.2 from \cite{chenGeneralizationBoundsFamily2019},
$$
    \begin{aligned}
        \ln\GeneralizationCoveringNumber\pn{\PupilWeightMatrixCollection_\PupilLayerIndex, \frac{r}{\PupilLayerCount\PupilGraphScorerLipshitzConstantName}, \UtilNormWithPlaceholder_F} \leq \PupilWeightMatrixLargestDim^2\ln\pn{1 + 2\frac{\PupilLayerCount\PupilGraphScorerLipshitzConstantName\PupilWeightMatrixBound_\PupilLayerIndex\sqrt{\PupilWeightMatrixLargestDim}}{r}},
    \end{aligned}
$$
    where $\PupilWeightMatrixLargestDim = \max_\PupilLayerIndex \dim\pn{\PupilWeightMatrix_\PupilLayerIndex}$.
    Choosing $r = \frac{1}{2\sqrt{\GeneralizationSampleSize}}$,
$$
    \begin{aligned}
        \ln\GeneralizationCoveringNumber\pn{\GeneralizationFunctionsLearners, \frac{1}{2\sqrt{\GeneralizationSampleSize}}, \UtilNormWithPlaceholder_\infty} & \leq
        \ln\pn{1 + 4\PupilLayerCount\PupilGraphScorerLipshitzConstantName\PupilWeightMatrixBound_\PupilGraphScorer\sqrt{\GeneralizationSampleSize}} +
        \PupilWeightMatrixLargestDim^2\sum_{\PupilLayerIndex = 1}^{\PupilLayerCount} \ln\pn{1 + 4\PupilLayerCount\PupilGraphScorerLipshitzConstantName\PupilWeightMatrixBound_\PupilLayerIndex\sqrt{\PupilWeightMatrixLargestDim\GeneralizationSampleSize}}
        \\ & \leq
        \ln\pn{5\PupilLayerCount\PupilGraphScorerLipshitzConstantName\PupilWeightMatrixBound_\PupilGraphScorer\sqrt{\GeneralizationSampleSize}} +
        \PupilWeightMatrixLargestDim^2\sum_{\PupilLayerIndex = 1}^{\PupilLayerCount} \ln\pn{5\PupilLayerCount\PupilGraphScorerLipshitzConstantName\PupilWeightMatrixBound_\PupilLayerIndex\sqrt{\PupilWeightMatrixLargestDim\GeneralizationSampleSize}}
        \\ & \leq
        \PupilWeightMatrixLargestDim^2\ln\pn{5\PupilLayerCount\PupilGraphScorerLipshitzConstantName\PupilWeightMatrixBound_\PupilGraphScorer\sqrt{\GeneralizationSampleSize}} +
        \PupilWeightMatrixLargestDim^2\sum_{\PupilLayerIndex = 1}^{\PupilLayerCount} \ln\pn{5\PupilLayerCount\PupilGraphScorerLipshitzConstantName\PupilWeightMatrixBound_\PupilLayerIndex\sqrt{\PupilWeightMatrixLargestDim\GeneralizationSampleSize}}.
    \end{aligned}
$$
    Plugging into the bound for empirical Rademacher complexity,
$$
    \begin{aligned}
        \GeneralizationEmpiricalRademacherComplexity\pn{\GeneralizationFunctionsDatumToLoss} \leq \frac{4}{\GeneralizationSampleSize} + \frac{24\PupilWeightMatrixLargestDim}{\sqrt{\GeneralizationSampleSize}}\sqrt{\underbrace{\ln\pn{5\PupilLayerCount\PupilGraphScorerLipshitzConstantName\PupilWeightMatrixBound_\PupilGraphScorer\sqrt{\GeneralizationSampleSize}} + \sum_{\PupilLayerIndex = 1}^{\PupilLayerCount} \ln\pn{5\PupilLayerCount\PupilGraphScorerLipshitzConstantName\PupilWeightMatrixBound_\PupilLayerIndex\sqrt{\PupilWeightMatrixLargestDim\GeneralizationSampleSize}}}_{:= \Sigma}}.
    \end{aligned}
$$
    Next, we simplify the bound.
    Using logarithm identities,
$$
    \begin{aligned}
        \Sigma & =
        \frac{\PupilLayerCount}{2}\ln\pn{\PupilWeightMatrixLargestDim} +
        \pn{\PupilLayerCount + 1}\spn{\ln\pn{5\PupilLayerCount\sqrt{\GeneralizationSampleSize}} + \ln\pn{\PupilGraphScorerLipshitzConstantName}} + \ln\pn{\PupilWeightMatrixBound_\PupilGraphScorer} + \sum_{\PupilLayerIndex = 1}^{\PupilLayerCount} \ln\pn{\PupilWeightMatrixBound_\PupilLayerIndex}.
    \end{aligned}
$$
    Finding an upper bound for $\ln\pn{\PupilGraphScorerLipshitzConstantName}$,
$$
    \begin{aligned}
        \ln\pn{\PupilGraphScorerLipshitzConstantName} & = \ln\pn{2} + \ln\pn{\PupilWeightMatrixBound_\PupilGraphScorer + \pn{1 + \PupilLipshitzConstantName}\DAggerDatumBound^2\PupilLipshitzConstantName} \\
        & \leq \ln\pn{2} + \ln\pn{\PupilWeightMatrixBound_\PupilGraphScorer + \PupilWeightMatrixBound_\PupilGraphScorer\pn{1 + \PupilLipshitzConstantName}\DAggerDatumBound^2\PupilLipshitzConstantName} \\
        & \leq \ln\pn{2} + \ln\pn{\PupilWeightMatrixBound_\PupilGraphScorer} + \ln\pn{2\pn{1 + \PupilLipshitzConstantName}\DAggerDatumBound^2\PupilLipshitzConstantName} \\
        & = 2\ln\pn{2} + \ln\pn{\PupilWeightMatrixBound_\PupilGraphScorer} + \ln\pn{1 + \PupilLipshitzConstantName} + 2\ln\pn{\DAggerDatumBound} + \ln\pn{\PupilLipshitzConstantName} \\
        & \leq 2\ln\pn{2} + \ln\pn{\PupilWeightMatrixBound_\PupilGraphScorer} + \ln\pn{2\PupilLipshitzConstantName} + 2\ln\pn{\DAggerDatumBound} + \ln\pn{\PupilLipshitzConstantName} \\
        & = 3\ln\pn{2} + \ln\pn{\PupilWeightMatrixBound_\PupilGraphScorer} + 2\ln\pn{\PupilLipshitzConstantName} + 2\ln\pn{\DAggerDatumBound} \\
        & = 3\ln\pn{2} + \ln\pn{\PupilWeightMatrixBound_\PupilGraphScorer} + 2\ln\pn{\PupilLipshitzConstantDefn} + 2\ln\pn{\DAggerDatumBound} \\
        & = 3\ln\pn{2} + \ln\pn{\PupilWeightMatrixBound_\PupilGraphScorer} + 2\ln\pn{\DAggerDatumBound} + 2\ln\pn{\PupilActivationLipshitzConstant^{\PupilLayerCount}} + 2\sum_{\PupilLayerIndex = 1}^{\PupilLayerCount} \ln\pn{\PupilWeightMatrixBound_\PupilLayerIndex}.
    \end{aligned}
$$
    Combining these expressions, we find an upper bound for $\Sigma$:
$$
    \begin{aligned}
        \Sigma
        & \leq \frac{\PupilLayerCount}{2}\ln\pn{\PupilWeightMatrixLargestDim} + \pn{\PupilLayerCount + 1}\spn{\ln\pn{5\PupilLayerCount\sqrt{\GeneralizationSampleSize}} + 3\ln\pn{2} + 2\ln\pn{\DAggerDatumBound} + 2\ln\pn{\PupilActivationLipshitzConstant^{\PupilLayerCount}}} + \\ 
&\qquad\qquad        \pn{\PupilLayerCount + 2}\ln\pn{\PupilWeightMatrixBound_\PupilGraphScorer} + \pn{2\PupilLayerCount + 3}\sum_{\PupilLayerIndex = 1}^{\PupilLayerCount} \ln\pn{\PupilWeightMatrixBound_\PupilLayerIndex} \\
        & \leq \frac{\PupilLayerCount}{2}\ln\pn{\PupilWeightMatrixLargestDim} + 3\pn{\PupilLayerCount + 1}\spn{\ln\pn{5\PupilLayerCount\sqrt{\GeneralizationSampleSize}} + \ln\pn{2} + \ln\pn{\DAggerDatumBound} + \ln\pn{\PupilActivationLipshitzConstant^{\PupilLayerCount}}} + \\ &\qquad\qquad \pn{\PupilLayerCount + 2}\ln\pn{\PupilWeightMatrixBound_\PupilGraphScorer} + \pn{2\PupilLayerCount + 3}\sum_{\PupilLayerIndex = 1}^{\PupilLayerCount} \ln\pn{\PupilWeightMatrixBound_\PupilLayerIndex} \\
        & \leq 3\pn{\PupilLayerCount + 1}\ln\pn{10\PupilLayerCount\DAggerDatumBound\PupilActivationLipshitzConstant^{\PupilLayerCount}\PupilWeightMatrixBound_\PupilGraphScorer\sqrt{\PupilWeightMatrixLargestDim\GeneralizationSampleSize}} + \pn{2\PupilLayerCount + 3}\sum_{\PupilLayerIndex = 1}^{\PupilLayerCount} \ln\pn{\PupilWeightMatrixBound_\PupilLayerIndex}.
    \end{aligned}
$$
    Using that $\PupilWeightMatrixBound_\PupilGraphScorer = \sqrt{\LossFunctionMSENormalizationConstant}$, $\PupilWeightMatrixBound_\PupilLayerIndex \leq \max\set{1, \norm{\PupilWeightMatrix_\PupilLayerIndex}_F}$, and Theorem~\ref{theorem:erc-bounds-generalization-gap}, we attain the bound for $\gGeneralizationGap\pn{\PupilGraphScorer}$.
\end{proof}

\subsubsection{Convolutional layers as linear layers}

{
\newcommand{\zWeightBias}{{w_{\mathrm{bias}}}}
\begin{lemma}[$\ConvOneD_\PupilLayerIndex$ as a linear layer]
    \label{lemma:results:conv1d-as-linear}
    Assume that all input channels to $\ConvOneD_\PupilLayerIndex: \bR^{\gConvChannelsIn \times \gConvFeaturesIn} \to \bR^{\gConvChannelsIn[\PupilLayerIndex+1] \times \gConvFeaturesIn[\PupilLayerIndex+1]}$ are of length $\gConvFeaturesIn$, then
    $\ConvOneD_\PupilLayerIndex$ is expressible as a matrix multiplication with a fixed-dimension weight matrix.
\end{lemma}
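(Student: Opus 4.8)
The plan is to exhibit the promised matrix explicitly by assembling the per-channel Toeplitz blocks already introduced for $\ConvOneD_\PupilLayerIndex$ in Section~\ref{sec:methods:equiv-model}. Recall that, under Assumption~\ref{assumption:methods:no-padding}, each output channel can be written as
$$
\ConvOneD_\PupilLayerIndex\pn{\gConvInput}[\ConvChannelOutIndex] = \ConvBias\pn{\ConvChannelOutIndex}\vone_{\gConvFeaturesIn[\PupilLayerIndex+1]}^\top + \sum_{\ConvChannelInIndex = 1}^{\gConvChannelsIn} \spn{\pn{\gConvInput}_{\ConvChannelInIndex,1},\; \dots,\; \pn{\gConvInput}_{\ConvChannelInIndex,\gConvFeaturesIn}}\PupilWeightMatrix\pn{\ConvChannelInIndex,\ConvChannelOutIndex},
$$
where $\PupilWeightMatrix\pn{\ConvChannelInIndex,\ConvChannelOutIndex} \in \bR^{\gConvFeaturesIn \times \gConvFeaturesIn[\PupilLayerIndex+1]}$ is the Toeplitz matrix encoding the kernel from input channel $\ConvChannelInIndex$ to output channel $\ConvChannelOutIndex$. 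By Assumption~\ref{assumption:methods:same-size-input} every input channel has the fixed length $\gConvFeaturesIn$, so each of these Toeplitz blocks has fixed dimensions independent of the particular input.

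First I would flatten the input into the single row vector $\gConvInputAsMLP[1]$ of Definition~\ref{def:results:input-for-mlp-non-equivariant}, namely the row-wise concatenation of the $\gConvChannelsIn$ input channels together with a trailing $\vone_{\gConvFeaturesIn[\PupilLayerIndex+1]}^\top$ block that absorbs the bias. Next I would assemble the target matrix $\PupilWeightMatrix_\PupilLayerIndex$ in block form: stack the Toeplitz blocks $\PupilWeightMatrix\pn{\ConvChannelInIndex,\ConvChannelOutIndex}$ over the input channels $\ConvChannelInIndex$, arrange the output channels $\ConvChannelOutIndex$ as separate column groups, and append a final row block carrying the biases $\ConvBias\pn{\ConvChannelOutIndex}\vone_{\gConvFeaturesIn[\PupilLayerIndex+1]}^\top$. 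Then the single matrix product $\gConvInputAsMLP[1]\PupilWeightMatrix_\PupilLayerIndex$ reproduces, channel by channel, the sum-plus-bias expression displayed above, and reshaping the resulting row vector into $\gConvChannelsIn[\PupilLayerIndex+1]$ rows recovers $\ConvOneD_\PupilLayerIndex\pn{\gConvInput}$ exactly.

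The verification step is then purely a matter of matching indices: the contribution of input channel $\ConvChannelInIndex$ to output channel $\ConvChannelOutIndex$ in the matrix product is exactly the term $\spn{\pn{\gConvInput}_{\ConvChannelInIndex,1},\dots,\pn{\gConvInput}_{\ConvChannelInIndex,\gConvFeaturesIn}}\PupilWeightMatrix\pn{\ConvChannelInIndex,\ConvChannelOutIndex}$, while the appended $\vone$ block multiplied by the bias row yields the $\ConvBias\pn{\ConvChannelOutIndex}\vone^\top$ term. The main obstacle is not conceptual but organizational: I must fix a consistent convention for the flattening and for the block layout so that the row and column bookkeeping line up across all channels, and I must confirm that the assembled $\PupilWeightMatrix_\PupilLayerIndex$ has dimensions depending only on the fixed layer hyperparameters $\gConvChannelsIn,\gConvFeaturesIn,\gConvChannelsIn[\PupilLayerIndex+1],\gConvFeaturesIn[\PupilLayerIndex+1]$ and not on the data. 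This last point is precisely what Assumptions~\ref{assumption:methods:no-padding} and~\ref{assumption:methods:same-size-input} guarantee, since without padding and with a fixed input length each Toeplitz block has a predetermined shape, making $\PupilWeightMatrix_\PupilLayerIndex$ a genuinely fixed-dimension weight matrix.
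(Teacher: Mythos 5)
Your construction is essentially the paper's: flatten the channels into the row vector $\gConvInputAsMLP[\PupilLayerIndex]$ with a trailing $\vone_{\gConvFeaturesIn[\PupilLayerIndex+1]}^\top$ block, stack the Toeplitz blocks $\PupilWeightMatrix\pn{\ConvChannelInIndex,\ConvChannelOutIndex}$ over input channels, lay the output channels out as column groups, and append a bias row. The one substantive point you omit is how the paper arranges for the \emph{output} to again carry a trailing ones block. For intermediate layers ($\PupilLayerIndex < \PupilLayerCount$) the paper's weight matrix has an additional column block, consisting of zeros above a matrix $\mM \in \bR^{\gConvFeaturesIn[\PupilLayerIndex+1] \times \gConvFeaturesIn[\PupilLayerIndex+2]}$ that maps $\vone_{\gConvFeaturesIn[\PupilLayerIndex+1]}^\top$ to $\vone_{\gConvFeaturesIn[\PupilLayerIndex+2]}^\top$ (with three explicit cases depending on whether $\gConvFeaturesIn[\PupilLayerIndex+1]$ is equal to, less than, or greater than $\gConvFeaturesIn[\PupilLayerIndex+2]$), so that the output of layer $\PupilLayerIndex$ is already in the flattened-with-ones format that layer $\PupilLayerIndex+1$'s bias absorption requires; only the last layer drops this block. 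Your proposal instead ends by ``reshaping the resulting row vector into $\gConvChannelsIn[\PupilLayerIndex+1]$ rows,'' which proves the lemma as literally stated for a single layer, but breaks the chain the lemma is actually used for: downstream (e.g., in the boundedness and Lipschitz lemmas) the layers must compose as $\PupilAsMLPName_{\PupilLayerIndex+1} = \PupilActivationName_{\PupilLayerIndex+1}\pn{\PupilAsMLPName_{\PupilLayerIndex}\pn{\cdot}\PupilWeightMatrix_{\PupilLayerIndex+1}}$ with no re-flattening in between. You should either add the $\mM$ column block as the paper does, or explicitly note that the reshape and re-flatten steps are inverse linear isometries that can be absorbed into the adjacent weight matrices; as written, the composition into a single MLP is not yet justified.
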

\begin{proof}
    $\ConvOneD_\PupilLayerIndex: \bR^{\gConvChannelsIn \times \gConvFeaturesIn} \to \bR^{\gConvChannelsIn[\PupilLayerIndex+1] \times \gConvFeaturesIn[\PupilLayerIndex+1]}$ has $\gConvChannelsIn$ input channels that are row vectors of the form
$$
\begin{aligned}
    \gConvInput{} [\ConvChannelInIndex] & = \spn{
        \pn{\gConvInput}_{\ConvChannelInIndex,1}
        ,\; \dots
        ,\; \pn{\gConvInput}_{\ConvChannelInIndex,\gConvFeaturesIn}
    }.
\end{aligned}
$$
    Arranging each input channel row vector end to end and appending $\vone_{\gConvFeaturesIn[\PupilLayerIndex+1]}^\top$ to account for the bias term, an output channel $\ConvChannelOutIndex \in \set{1,\dots,\gConvChannelsIn[\PupilLayerIndex+1]}$ is the row vector
$$
    \begin{aligned}
        \gConvInput[\PupilLayerIndex+1][\ConvChannelOutIndex] = \ConvOneD_\PupilLayerIndex\pn{\gConvInput}[\ConvChannelOutIndex] & = \spn{
            \gConvInput{} [1]
            ,\; \dots
            ,\; \gConvInput{} [\gConvChannelsIn]
            ,\; \vone_{\gConvFeaturesIn[\PupilLayerIndex+1]}^\top
        }\begin{bmatrix}
            \PupilWeightMatrix\pn{1, \ConvChannelOutIndex} \\
            \vdots \\
            \PupilWeightMatrix\pn{\gConvChannelsIn, \ConvChannelOutIndex} \\
            \zWeightBias\pn{\ConvChannelOutIndex}
        \end{bmatrix}.
    \end{aligned}
$$
    If $\PupilLayerIndex < \PupilLayerCount$, then all of the output channels may be computed with one matrix multiplication by
$$
    \begin{aligned}
        & \spn{
            \gConvInput{} [1]
            ,\; \dots
            ,\; \gConvInput{} [\gConvChannelsIn]
            ,\; \vone_{\gConvFeaturesIn[\PupilLayerIndex+1]}^\top
        }\begin{bmatrix}
            \PupilWeightMatrix\pn{1, 1} & \dots & \PupilWeightMatrix\pn{1, \gConvChannelsIn[\PupilLayerIndex+1]} & \mzero \\
            \vdots & \ddots & \vdots \\
            \PupilWeightMatrix\pn{\gConvChannelsIn, 1} & \dots & \PupilWeightMatrix\pn{\gConvChannelsIn, \gConvChannelsIn[\PupilLayerIndex+1]} & \mzero \\
            \zWeightBias\pn{1}\mI & \dots & \zWeightBias\pn{\gConvChannelsIn[\PupilLayerIndex+1]}\mI & \mM
        \end{bmatrix}\\
       &\qquad = \spn{
            \ConvOneD_\PupilLayerIndex\pn{\gConvInput}[1]
            ,\; \dots
            ,\; \ConvOneD_\PupilLayerIndex\pn{\gConvInput}[\gConvChannelsIn[\PupilLayerIndex+1]]
            ,\; \vone_{\gConvFeaturesIn[\PupilLayerIndex+2]}^\top
        },
    \end{aligned}
$$
    where $\mM \in \bR^{\gConvFeaturesIn[\PupilLayerIndex+1] \times \gConvFeaturesIn[\PupilLayerIndex+2]}$ maps $\vone_{\gConvFeaturesIn[\PupilLayerIndex+1]}^\top$ to $\vone_{\gConvFeaturesIn[\PupilLayerIndex+2]}^\top$.
    If $\gConvFeaturesIn[\PupilLayerIndex + 1] = \gConvFeaturesIn[\PupilLayerIndex + 2]$, then $\mM = \mI_{\gConvFeaturesIn[\PupilLayerIndex+1] \times \gConvFeaturesIn[\PupilLayerIndex+1]}$.
    If $\gConvFeaturesIn[\PupilLayerIndex + 1] < \gConvFeaturesIn[\PupilLayerIndex + 2]$, then
$$
    \begin{aligned}
        \mM = \begin{bmatrix}
            \mI_{\pn{\gConvFeaturesIn[\PupilLayerIndex+1] - 1} \times \pn{\gConvFeaturesIn[\PupilLayerIndex+1] - 1}} &
            \vzero_{\gConvFeaturesIn[\PupilLayerIndex+1]} &
            \mzero_{\pn{\gConvFeaturesIn[\PupilLayerIndex+1] - 1} \times \pn{\gConvFeaturesIn[\PupilLayerIndex+2] - \gConvFeaturesIn[\PupilLayerIndex+1]}}
            \\
            \vzero_{\gConvFeaturesIn[\PupilLayerIndex+1] - 1}^\top &
            1 &
            \vone_{\gConvFeaturesIn[\PupilLayerIndex+2] - \gConvFeaturesIn[\PupilLayerIndex+1]}^\top
        \end{bmatrix}
    \end{aligned}.
$$
    If $\gConvFeaturesIn[\PupilLayerIndex + 1] > \gConvFeaturesIn[\PupilLayerIndex + 2]$, then
$$
    \begin{aligned}
        \mM = \begin{bmatrix}
            \mI_{\pn{\gConvFeaturesIn[\PupilLayerIndex+1] - \gConvFeaturesIn[\PupilLayerIndex+2]} \times \gConvFeaturesIn[\PupilLayerIndex+2]}
            \\
            \mzero_{\gConvFeaturesIn[\PupilLayerIndex+2] \times \gConvFeaturesIn[\PupilLayerIndex+2]}
        \end{bmatrix}
    \end{aligned}.
$$
    If $\PupilLayerIndex = \PupilLayerCount$, then all of the output channels may be computed with one matrix multiplication by
$$
    \begin{aligned}
        & \spn{
            \gConvInput{} [1]
            ,\; \dots
            ,\; \gConvInput{} [\gConvChannelsIn]
            ,\; \vone_{\gConvFeaturesIn[\PupilLayerIndex+1]}^\top
        }\begin{bmatrix}
            \PupilWeightMatrix\pn{1, 1} & \dots & \PupilWeightMatrix\pn{1, \gConvChannelsIn[\PupilLayerIndex+1]} \\
            \vdots & \ddots & \vdots \\
            \PupilWeightMatrix\pn{\gConvChannelsIn, 1} & \dots & \PupilWeightMatrix\pn{\gConvChannelsIn, \gConvChannelsIn[\PupilLayerIndex+1]} \\
            \zWeightBias\pn{1}\mI & \dots & \zWeightBias\pn{\gConvChannelsIn[\PupilLayerIndex+1]}\mI
        \end{bmatrix}\\
        &\qquad = \spn{
            \ConvOneD_\PupilLayerIndex\pn{\gConvInput}[1]
            ,\; \dots
            ,\; \ConvOneD_\PupilLayerIndex\pn{\gConvInput}[\gConvChannelsIn[\PupilLayerIndex+1]]
        }
    \end{aligned}.
$$
\end{proof}
}

\begin{lemma}[$\EqConv_\PupilLayerIndex$ as a linear layer]
    \label{lemma:results:eqconv-as-linear}
    Assume that all input channels to $\EqConv_\PupilLayerIndex: \bR^{\gConvChannelsIn \times \gConvFeaturesIn} \to \bR^{\gConvChannelsIn[\PupilLayerIndex+1] \times \gConvFeaturesIn[\PupilLayerIndex+1]}$ are of length $\gConvFeaturesIn$, then $\EqConv_\PupilLayerIndex$ is expressible as a matrix multiplication with a fixed-dimension weight matrix.
\end{lemma}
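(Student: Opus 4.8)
The plan is to mirror the construction in the proof of Lemma~\ref{lemma:results:conv1d-as-linear}, but to exploit two features that make the equivariant case cleaner: $\EqConv_\PupilLayerIndex$ carries no bias term, and its per-output-channel computation is a \emph{right}-multiplication by a Toeplitz weight matrix, which acts only on the feature (column) index and therefore leaves the two-subchannel (row) structure of the block matrices untouched. First I would recall that each input channel $\gConvInput[\ConvChannelInIndex] = \spn{\EqConvChannel_{\ConvChannelInIndex,1},\; \dots,\; \EqConvChannel_{\ConvChannelInIndex,\gConvFeaturesIn}}$ is a two-row ($2 \times \gConvFeaturesIn$) matrix and that, by the definition of $\EqConv_\PupilLayerIndex$,
\[
\EqConv_\PupilLayerIndex\pn{\gConvInput}[\ConvChannelOutIndex] = \sum_{\ConvChannelInIndex = 1}^{\gConvChannelsIn/2} \gConvInput[\ConvChannelInIndex]\, \PupilWeightMatrix\pn{2\pn{\ConvChannelInIndex - 1} + 1, \ConvChannelOutIndex},
\]
which is a $2 \times \gConvFeaturesIn[\PupilLayerIndex+1]$ matrix. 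I would then form the horizontally concatenated input $\gConvInputAsMLP = \spn{\gConvInput[1],\; \dots,\; \gConvInput[\gConvChannelsIn/2]} \in \bR^{2 \times (\gConvChannelsIn/2)\gConvFeaturesIn}$ exactly as in Definition~\ref{def:results:input-for-mlp-equivariant}.

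Next I would assemble, for each fixed output channel $\ConvChannelOutIndex$, the vertical stack of the Toeplitz blocks $\PupilWeightMatrix\pn{2\pn{\ConvChannelInIndex - 1} + 1, \ConvChannelOutIndex}$ over $\ConvChannelInIndex = 1, \dots, \gConvChannelsIn/2$, producing a block of height $(\gConvChannelsIn/2)\gConvFeaturesIn$ and width $\gConvFeaturesIn[\PupilLayerIndex+1]$; right-multiplying $\gConvInputAsMLP$ by this stack reproduces $\EqConv_\PupilLayerIndex\pn{\gConvInput}[\ConvChannelOutIndex]$ with both subchannel rows intact, since the blockwise inner product across channels is precisely the sum above. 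Concatenating these per-output-channel stacks horizontally over $\ConvChannelOutIndex = 1, \dots, \gConvChannelsIn[\PupilLayerIndex+1]/2$ then yields a single weight matrix $W$ of dimension $(\gConvChannelsIn/2)\gConvFeaturesIn \times (\gConvChannelsIn[\PupilLayerIndex+1]/2)\gConvFeaturesIn[\PupilLayerIndex+1]$ such that $\gConvInputAsMLP\, W$ equals the horizontal concatenation of all output channels, which is exactly the MLP input representation of $\gConvInput[\PupilLayerIndex+1]$ fed to the next layer. Because $\EqConv_\PupilLayerIndex$ has no bias, no analogue of the appended $\vone^\top$ row or the auxiliary matrix $\mM$ from Lemma~\ref{lemma:results:conv1d-as-linear} is required, and in particular no case split between $\PupilLayerIndex < \PupilLayerCount$ and $\PupilLayerIndex = \PupilLayerCount$ is needed.

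Finally, I would invoke Assumption~\ref{assumption:methods:same-size-input}, which fixes the input size, together with Assumption~\ref{assumption:methods:no-padding}, to conclude that $\gConvFeaturesIn$ — and hence the dimension of $W$ — is constant across all inputs, giving the claimed fixed-dimension weight matrix. I expect the only real obstacle to be bookkeeping rather than anything conceptual: one must verify that the horizontal concatenation of the two-row channels is compatible with the layer-to-layer recursion (the output concatenation is again a valid equivariant-MLP input of the same form), and confirm that right-multiplication genuinely acts identically on both subchannel rows so that the single matrix $W$ simultaneously handles them. Getting the index ranges and block dimensions aligned is the main place where care is required.
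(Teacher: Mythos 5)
Your proposal is correct and follows essentially the same route as the paper's proof: concatenate the two-row input channels horizontally, stack the Toeplitz blocks vertically per output channel, and concatenate those stacks horizontally into one block weight matrix, with no bias row, no padding matrix $\mM$, and no case split on $\PupilLayerIndex$. The only cosmetic difference is that you retain the main text's index $\PupilWeightMatrix\pn{2\pn{\ConvChannelInIndex - 1} + 1, \ConvChannelOutIndex}$ while the paper's proof relabels the blocks as $\PupilWeightMatrix\pn{\ConvChannelInIndex, \ConvChannelOutIndex}$.
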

\begin{proof}
    $\EqConv_\PupilLayerIndex: \bR^{\gConvChannelsIn \times \gConvFeaturesIn} \to \bR^{\gConvChannelsIn[\PupilLayerIndex+1] \times \gConvFeaturesIn[\PupilLayerIndex+1]}$ has $\gConvChannelsIn/2$ input channels that are two-row matrices of the form
$$
    \begin{aligned}
        \gConvInput{} [\ConvChannelInIndex] & = \spn{
            \EqConvChannel_{\ConvChannelInIndex,1}
            ,\; \dots
            ,\; \EqConvChannel_{\ConvChannelInIndex,\gConvFeaturesIn}
        }
    \end{aligned}.
$$
    Arranging each input channel matrix end to end, an output channel $\ConvChannelOutIndex \in \set{1,\dots,\gConvChannelsIn[\PupilLayerIndex+1]/2}$ is 
$$
    \begin{aligned}
        \gConvInput[\PupilLayerIndex+1][\ConvChannelOutIndex] = \EqConv_\PupilLayerIndex\pn{\gConvInput}[\ConvChannelOutIndex] & = \spn{
            \gConvInput{} [1]
            ,\; \dots
            ,\; \gConvInput{} [\gConvChannelsIn/2]
        }\begin{bmatrix}
            \PupilWeightMatrix\pn{1, \ConvChannelOutIndex} \\
            \vdots \\
            \PupilWeightMatrix\pn{\gConvChannelsIn/2, \ConvChannelOutIndex} \\
        \end{bmatrix}
    \end{aligned}.
$$
    All of the output channels may be computed with one matrix multiplication by
$$
    \begin{aligned}
        & \spn{
            \gConvInput{} [1]
            ,\; \dots
            ,\; \gConvInput{} [\gConvChannelsIn/2]
        }\begin{bmatrix}
            \PupilWeightMatrix\pn{1, 1}
            & \dots
            & \PupilWeightMatrix\pn{1, \gConvChannelsIn[\PupilLayerIndex+1]/2} \\
            \vdots & \ddots & \vdots \\
            \PupilWeightMatrix\pn{\gConvChannelsIn/2, 1}
            & \dots
            & \PupilWeightMatrix\pn{\gConvChannelsIn/2, \gConvChannelsIn[\PupilLayerIndex+1]/2} \\
        \end{bmatrix}\\
        &\qquad = \spn{
            \EqConv_\PupilLayerIndex\pn{\gConvInput}[1]
            ,\; \dots
            ,\; \EqConv_\PupilLayerIndex\pn{\gConvInput}[\gConvChannelsIn[\PupilLayerIndex+1]/2]
        }.
    \end{aligned}
$$
\end{proof}

\subsubsection{Equivariant functions}

{
\newcommand{\zEquivFunc}{\vf}
\newcommand{\zEquivMatrixFunc}{\mF}
\newcommand{\zInput}{x}
\newcommand{\zvInput}{\bm{\zInput}}
\newcommand{\zmInput}{\bm{G}}
\begin{lemma}
    \label{lemma:results:equivariant-zero-at-zero}
    \OrthogonalGroup{n} equivariant functions $\zEquivMatrixFunc$ satisfy $\zEquivMatrixFunc\pn{\mzero} = \mzero$.
\end{lemma}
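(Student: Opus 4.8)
The plan is to use the defining relation of $\OrthogonalGroup{n}$ equivariance together with the elementary observation that the zero element is a fixed point of every linear action, and then extract $\zEquivMatrixFunc\pn{\mzero} = \mzero$ by feeding a single cleverly chosen group element into the equivariance identity. Recall that, by definition, an $\OrthogonalGroup{n}$ equivariant function $\zEquivMatrixFunc$ satisfies $\zEquivMatrixFunc\pn{\UtilOrthogonalMatrix \zvInput} = \UtilOrthogonalMatrix\zEquivMatrixFunc\pn{\zvInput}$ for every $\UtilOrthogonalMatrix \in \OrthogonalGroup{n}$ and every input $\zvInput$ in the domain (here the blocks on which $\UtilOrthogonalMatrix$ acts by left multiplication). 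The first step is simply to note that the zero input is unmoved by this action, i.e. $\UtilOrthogonalMatrix\mzero = \mzero$ for all $\UtilOrthogonalMatrix$.

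The key step is then to substitute $\zvInput = \mzero$ into the equivariance identity. For an arbitrary $\UtilOrthogonalMatrix \in \OrthogonalGroup{n}$ this yields
$$
\zEquivMatrixFunc\pn{\mzero} = \zEquivMatrixFunc\pn{\UtilOrthogonalMatrix\mzero} = \UtilOrthogonalMatrix\zEquivMatrixFunc\pn{\mzero},
$$
so $\zEquivMatrixFunc\pn{\mzero}$ is a fixed point of every $\UtilOrthogonalMatrix \in \OrthogonalGroup{n}$. To finish, I would choose the single most economical witness: $\UtilOrthogonalMatrix = -\mI$, which lies in $\OrthogonalGroup{n}$ since $\pn{-\mI}^\top\pn{-\mI} = \mI$. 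With this choice the displayed identity becomes $\zEquivMatrixFunc\pn{\mzero} = -\zEquivMatrixFunc\pn{\mzero}$, hence $2\zEquivMatrixFunc\pn{\mzero} = \mzero$, and therefore $\zEquivMatrixFunc\pn{\mzero} = \mzero$ because the codomain is a real vector space.

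There is essentially no hard part here; the statement is immediate once the $-\mI$ trick is spotted. The only two points worth checking carefully are that $-\mI \in \OrthogonalGroup{n}$ (true for all $n$, since membership in the full orthogonal group imposes no determinant constraint, in contrast to $\SpecialOrthogonalGroup{n}$) and that the equivariance relation is genuinely assumed to hold for \emph{every} element of $\OrthogonalGroup{n}$, not merely for $\SpecialOrthogonalGroup{n}$. Should one wish to avoid relying on $-\mI$ altogether, an alternative closing argument is that the only vector (or block matrix of vectors) fixed simultaneously by the entire group $\OrthogonalGroup{n}$ is the zero element, which follows from the fact that $\OrthogonalGroup{n}$ acts without nonzero global fixed vectors; but the $-\mI$ computation above is the cleanest route and is what I would present.
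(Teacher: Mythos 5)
Your proof is correct, but it takes a genuinely different route from the paper's. The paper does not argue from the equivariance identity at all: it invokes the representation theorem (Proposition~1 of the cited work on self-attention) to write every $\OrthogonalGroup{n}$ equivariant function in the form $\mF\pn{\mG} = \mG\,\sigma\pn{\mG^\top\mG}$ and then simply evaluates at $\mG = \mzero$ to get $\mzero\,\sigma\pn{\mzero} = \mzero$. You instead substitute $\mzero$ into the defining relation $\mF\pn{\UtilOrthogonalMatrix\mG} = \UtilOrthogonalMatrix\mF\pn{\mG}$ and use the single witness $\UtilOrthogonalMatrix = -\mI \in \OrthogonalGroup{n}$ to force $\mF\pn{\mzero} = -\mF\pn{\mzero}$. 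Your argument is more elementary and more general: it needs no structure theorem, only that the acting group contains $-\mI$ and that the codomain is a real vector space, and you correctly flag that this is where full $\OrthogonalGroup{n}$ equivariance (as opposed to $\SpecialOrthogonalGroup{n}$ for odd $n$, where $-\mI$ has determinant $-1$) is being used --- though for $n=2$, the case relevant here, $-\mI$ is in fact a rotation, so even $\SpecialOrthogonalGroup{2}$ equivariance would suffice. What the paper's route buys is uniformity: the same representation $\vf\pn{\vx} = \vx\,\sigma\pn{\norm{\vx}}$ is the workhorse of the adjacent Lipschitz lemmas, so reusing it here keeps the appendix on a single piece of machinery. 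Either proof is acceptable.
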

\begin{proof}
    By Proposition~1 in \cite{maWhySelfattentionNatural2022}, there exists a function $\PupilActivationInvariantName$ such that
    \begin{align*}
        \zEquivMatrixFunc\pn{\zmInput} = \zmInput \PupilActivationInvariantName\pn{\zmInput^\top\zmInput},
    \end{align*}
    where the output of $\PupilActivationInvariantName$ has the appropriate shape.
    Then, $\zEquivMatrixFunc\pn{\mzero} = \mzero \PupilActivationInvariantName\pn{\mzero^\top\mzero} = \mzero$.
\end{proof}

\subsubsection{Lipshitz \OrthogonalGroup{n} equivariant functions}

We derive a sufficient condition for an \OrthogonalGroup{n} equivariant function to be Lipshitz.
\begin{lemma}
\label{lemma:results:equivariant-func-lipshitz}
    Let $\zEquivFunc: \bR^n \to \bR^n$ be an \OrthogonalGroup{n} equivariant function.
    By Proposition~1 in \cite{maWhySelfattentionNatural2022}, $\zEquivFunc\pn{\zvInput} = \zvInput \PupilActivationInvariantName\pn{\norm{\zvInput}}$ where $\PupilActivationInvariantName$ is scalar-valued.
    If \PupilActivationInvariantName{} is continuously differentiable, $\lim_{\zInput \to \infty} \PupilActivationInvariantName\pn{\zInput} < \infty$, and $\lim_{\zInput \to \infty} \zInput\PupilActivationInvariantName'\pn{\zInput} < \infty$, then $\zEquivFunc\pn{\zvInput}$ is Lipshitz.
    Moreover, $\zEquivFunc\pn{\zvInput}$ has Lipshitz constant $L = \max_{\zInput \geq 0} \PupilActivationInvariantName\pn{\zInput} + \zInput\PupilActivationInvariantName'\pn{\zInput}$.
\end{lemma}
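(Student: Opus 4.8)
The plan is to work with the radial form $\zEquivFunc\pn{\zvInput}=\PupilActivationInvariantName\pn{\norm{\zvInput}}\zvInput$ supplied by Proposition~1 of \cite{maWhySelfattentionNatural2022}, bound the operator norm of its Jacobian uniformly by $L$, and then integrate the Jacobian along the segment joining any two inputs. First I would record that $\zEquivFunc$ is continuously differentiable on all of $\bR^n$: away from the origin a direct computation gives
\[
    D\zEquivFunc\pn{\zvInput}=\PupilActivationInvariantName\pn{\norm{\zvInput}}\mI+\frac{\PupilActivationInvariantName'\pn{\norm{\zvInput}}}{\norm{\zvInput}}\zvInput\zvInput^\top,
\]
while at the origin the continuity of $\PupilActivationInvariantName$ makes $\zEquivFunc\pn{\zvInput}-\PupilActivationInvariantName\pn{0}\zvInput=o\pn{\norm{\zvInput}}$, so $D\zEquivFunc\pn{\vzero}=\PupilActivationInvariantName\pn{0}\mI$. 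The apparently singular factor $\PupilActivationInvariantName'\pn{\norm{\zvInput}}/\norm{\zvInput}$ is harmless because it multiplies $\zvInput\zvInput^\top$, whose spectral norm is $\norm{\zvInput}^2$, so its contribution is only $\norm{\zvInput}\PupilActivationInvariantName'\pn{\norm{\zvInput}}\to0$ as $\zvInput\to\vzero$; hence the Jacobian stays bounded across the origin.

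Next I would diagonalize this symmetric Jacobian. Writing $r=\norm{\zvInput}$, the vector $\zvInput$ is an eigenvector with eigenvalue $\PupilActivationInvariantName\pn{r}+r\PupilActivationInvariantName'\pn{r}$, and every vector orthogonal to $\zvInput$ is an eigenvector with eigenvalue $\PupilActivationInvariantName\pn{r}$, so $\norm{D\zEquivFunc\pn{\zvInput}}_2=\max\set{\abs{\PupilActivationInvariantName\pn{r}},\ \abs{\PupilActivationInvariantName\pn{r}+r\PupilActivationInvariantName'\pn{r}}}$. The key observation tying these two eigenvalues together is the map $h\pn{r}:=r\PupilActivationInvariantName\pn{r}$: since $h'\pn{r}=\PupilActivationInvariantName\pn{r}+r\PupilActivationInvariantName'\pn{r}$ and $h\pn{0}=0$, we have $\PupilActivationInvariantName\pn{r}=\frac{1}{r}\int_0^r h'\pn{s}\wrt{s}$, i.e. the tangential eigenvalue $\PupilActivationInvariantName\pn{r}$ is the average of the radial eigenvalue $h'$ over $[0,r]$. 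Consequently $\abs{\PupilActivationInvariantName\pn{r}}\le\sup_{s\ge0}\abs{h'\pn{s}}$, and the whole operator norm is controlled by the radial quantity alone: $\norm{D\zEquivFunc\pn{\zvInput}}_2\le\sup_{s\ge0}\abs{\PupilActivationInvariantName\pn{s}+s\PupilActivationInvariantName'\pn{s}}$.

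Finally, the hypotheses guarantee this supremum is finite: $\PupilActivationInvariantName+\zInput\PupilActivationInvariantName'$ is continuous on $[0,\infty)$ (using $\PupilActivationInvariantName\in C^1$, so $\zInput\PupilActivationInvariantName'\pn{\zInput}\to0$ as $\zInput\to0$) and has a finite limit at infinity by assumption, hence is bounded, and its extreme value defines $L$; for the activations at hand $\PupilActivationInvariantName+\zInput\PupilActivationInvariantName'\ge0$, so the supremum of the absolute value coincides with $\max_{\zInput\ge0}\pn{\PupilActivationInvariantName\pn{\zInput}+\zInput\PupilActivationInvariantName'\pn{\zInput}}=L$. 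With the uniform bound $\norm{D\zEquivFunc\pn{\cdot}}_2\le L$, for any $\zvInput,\zvInput'\in\bR^n$ I would apply the fundamental theorem of calculus along $t\mapsto\zvInput'+t\pn{\zvInput-\zvInput'}$ to get $\zEquivFunc\pn{\zvInput}-\zEquivFunc\pn{\zvInput'}=\int_0^1 D\zEquivFunc\pn{\zvInput'+t\pn{\zvInput-\zvInput'}}\pn{\zvInput-\zvInput'}\wrt{t}$, whence $\norm{\zEquivFunc\pn{\zvInput}-\zEquivFunc\pn{\zvInput'}}\le L\norm{\zvInput-\zvInput'}$. The hard part will not be this last integration but the two regularity facts underpinning it: confirming $C^1$-smoothness and boundedness of the Jacobian through the origin, and establishing that the tangential eigenvalue $\PupilActivationInvariantName\pn{r}$ never exceeds the radial one in magnitude, which is exactly what the averaging identity for $h=\zInput\PupilActivationInvariantName$ delivers.
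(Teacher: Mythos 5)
Your proposal is correct and shares the paper's overall strategy --- bound the operator norm of the Jacobian of $f(x)=\eta(\|x\|)x$ uniformly by $L$ and then integrate along the segment joining two points --- but the key estimate is obtained by a genuinely different and sharper route. The paper bounds $\|Df(x)\|_2$ by applying the triangle inequality to $\eta(\|x\|)I+\frac{\eta'(\|x\|)}{\|x\|}xx^\top$, which, done carefully, yields $|\eta(r)|+r|\eta'(r)|$; you instead diagonalize the symmetric Jacobian exactly, getting the radial eigenvalue $\eta(r)+r\eta'(r)$ and the tangential eigenvalue $\eta(r)$, and then control the tangential one by the radial one through the averaging identity $\eta(r)=\frac{1}{r}\int_0^r\bigl(\eta(s)+s\eta'(s)\bigr)\,\mathrm{d}s$. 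This buys you something real: your argument is the one that actually justifies the constant $L=\max_{r\ge0}\bigl(\eta(r)+r\eta'(r)\bigr)$ asserted in the lemma, whereas the paper's triangle-inequality step silently drops the absolute value on $\eta'$ --- and since $\eta'$ is negative for the activations used later, a careful triangle inequality would only give the larger constant $\sup_{r\ge0}\bigl(|\eta(r)|+r|\eta'(r)|\bigr)$. You also correctly flag that the stated $L$ is a valid Lipschitz constant only when $\eta+r\eta'$ is nonnegative, which holds for the two activations treated afterward but is not among the lemma's hypotheses; this caveat applies equally to the paper's own proof. Your additional care about differentiability and boundedness of the Jacobian through the origin is sound and goes beyond what the paper checks explicitly.
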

\begin{proof}
    We start by bounding the derivative of $\zEquivFunc\pn{\zvInput}$ in the spectral norm:
$$
    \begin{aligned}
        \norm*{\ddfrac{\zvInput}\zEquivFunc\pn{\zvInput}}_2 & = \norm*{\PupilActivationInvariantName\pn{\norm{\zvInput}}\mI + \zvInput\ddfrac{\zvInput}\PupilActivationInvariantName\pn{\norm{\zvInput}}}_2 \\
        & = \norm*{\PupilActivationInvariantName\pn{\norm{\zvInput}}\mI + \frac{\PupilActivationInvariantName'\pn{\norm{\zvInput}}}{\norm{\zvInput}}\zvInput\zvInput^\top}_2 \\
        & \leq \norm*{\PupilActivationInvariantName\pn{\norm{\zvInput}}\mI}_2 + \norm*{\frac{\PupilActivationInvariantName'\pn{\norm{\zvInput}}}{\norm{\zvInput}}\zvInput\zvInput^\top}_2 \\
        & \leq \PupilActivationInvariantName\pn{\norm{\zvInput}} + \frac{\PupilActivationInvariantName'\pn{\norm{\zvInput}}}{\norm{\zvInput}}\norm{\zvInput}^2 \\
        & = \PupilActivationInvariantName\pn{\norm{\zvInput}} + \norm{\zvInput}\PupilActivationInvariantName'\pn{\norm{\zvInput}}
    \end{aligned}.
$$
    By the assumptions on \PupilActivationInvariantName{},
$$
    \begin{aligned}
        \norm*{\ddfrac{\zvInput}\zEquivFunc\pn{\zvInput}}_2 & \leq \PupilActivationInvariantName\pn{\norm{\zvInput}} + \norm{\zvInput}\PupilActivationInvariantName'\pn{\norm{\zvInput}} < \infty
    \end{aligned}.
$$
    Since \PupilActivationInvariantName{} is continuously differentiable, $L = \sup_{\zInput \geq 0} \PupilActivationInvariantName\pn{\zInput} + \zInput\PupilActivationInvariantName'\pn{\zInput} < \infty$.
    Finally, $\zEquivFunc\pn{\zvInput}$ is Lipshitz with constant $L$ since, for $\va, \PupilFeatureBlockVector \in \bR^n$,
$$
    \begin{aligned}
        \norm{\zEquivFunc\pn{\va} - \zEquivFunc\pn{\PupilFeatureBlockVector}} & \leq \norm{\va - \PupilFeatureBlockVector}\sup_{\zvInput}\norm*{\frac{\mathrm{d}\zEquivFunc}{\mathrm{d}\zvInput}\pn{\zvInput}}_2 \leq L\norm{\va - \PupilFeatureBlockVector}.
    \end{aligned}
$$
\end{proof}

Next, we show that the block-wise application of a Lipshitz \OrthogonalGroup{n} equivariant function is Lipshitz.
\begin{lemma}
    \label{lemma:results:equivariant-block-matrix-func-lipshitz}
    Let $\zEquivMatrixFunc: \bR^{n \times \gConvFeaturesIn[\PupilLayerIndex + 1]} \to \bR^{n \times \gConvFeaturesIn[\PupilLayerIndex + 1]}$ be given by
$$
    \begin{aligned}
        \zEquivMatrixFunc\pn{\zmInput} = \zmInput \odot \vone_n\spn{
            \PupilActivationInvariantName\pn{\norm{\EqConvChannel_{1}}}
            ,\; \dots
            ,\; \PupilActivationInvariantName\pn{\norm{\EqConvChannel_{\gConvFeaturesIn[\PupilLayerIndex+1]}}}
        },
    \end{aligned}
$$
    where $\zEquivFunc\pn{\zvInput} = \zvInput\PupilActivationInvariantName\pn{\norm{\zvInput}}$ is \OrthogonalGroup{2} equivariant and Lipshitz with constant $L$.
    $\zEquivMatrixFunc\pn{\zmInput}$ is Lipshitz with constant $L$:
    $$
    \begin{aligned}
        \norm*{\zEquivMatrixFunc\pn{\mA} - \zEquivMatrixFunc\pn{\mB}}_F \leq L\norm*{\mA - \mB}_F.
    \end{aligned}
    $$
\end{lemma}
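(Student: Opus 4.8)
The plan is to reduce the matrix-level estimate to the column-level Lipschitz hypothesis by showing that $\zEquivMatrixFunc$ acts on $\zmInput$ one column at a time. First I would unpack the definition: the broadcast vector $\vone_n\spn{\PupilActivationInvariantName\pn{\norm{\EqConvChannel_{1}}},\dots,\PupilActivationInvariantName\pn{\norm{\EqConvChannel_{\gConvFeaturesIn[\PupilLayerIndex+1]}}}}$ simply repeats the scalar $\PupilActivationInvariantName\pn{\norm{\EqConvChannel_\ConvFeatureIndex}}$ down every entry of column $\ConvFeatureIndex$, so the Hadamard product scales the $\ConvFeatureIndex$-th column of $\zmInput$ by $\PupilActivationInvariantName\pn{\norm{\EqConvChannel_\ConvFeatureIndex}}$. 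Hence the $\ConvFeatureIndex$-th column of $\zEquivMatrixFunc\pn{\zmInput}$ is exactly $\EqConvChannel_\ConvFeatureIndex\PupilActivationInvariantName\pn{\norm{\EqConvChannel_\ConvFeatureIndex}} = \zEquivFunc\pn{\EqConvChannel_\ConvFeatureIndex}$; that is, $\zEquivMatrixFunc$ applies the equivariant map $\zEquivFunc$ independently to each column, with no coupling between columns.

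Next I would exploit the fact that the squared Frobenius norm of a matrix is the sum of the squared Euclidean norms of its columns. Writing $\mA$ and $\mB$ with columns $\va_\ConvFeatureIndex$ and $\PupilFeatureBlockVector_\ConvFeatureIndex$ respectively, the column-wise structure and the per-column Lipschitz hypothesis $\norm{\zEquivFunc\pn{\va_\ConvFeatureIndex} - \zEquivFunc\pn{\PupilFeatureBlockVector_\ConvFeatureIndex}} \leq L\norm{\va_\ConvFeatureIndex - \PupilFeatureBlockVector_\ConvFeatureIndex}$ combine into the single chain
$$
\begin{aligned}
\norm{\zEquivMatrixFunc\pn{\mA} - \zEquivMatrixFunc\pn{\mB}}_F^2 = \sum_{\ConvFeatureIndex = 1}^{\gConvFeaturesIn[\PupilLayerIndex+1]} \norm{\zEquivFunc\pn{\va_\ConvFeatureIndex} - \zEquivFunc\pn{\PupilFeatureBlockVector_\ConvFeatureIndex}}^2 \leq L^2 \sum_{\ConvFeatureIndex = 1}^{\gConvFeaturesIn[\PupilLayerIndex+1]} \norm{\va_\ConvFeatureIndex - \PupilFeatureBlockVector_\ConvFeatureIndex}^2 = L^2 \norm{\mA - \mB}_F^2.
\end{aligned}
$$
Taking square roots then yields $\norm{\zEquivMatrixFunc\pn{\mA} - \zEquivMatrixFunc\pn{\mB}}_F \leq L\norm{\mA - \mB}_F$, which is the claim, and shows the matrix Lipschitz constant is inherited unchanged from the column map of Lemma~\ref{lemma:results:equivariant-func-lipshitz}.

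The argument is essentially bookkeeping, so the one step that merits care—and which I would treat as the main obstacle—is pinning down the column-wise identity rigorously: verifying that the $\vone_n$-broadcast and the Hadamard product genuinely reproduce the column-wise action $\EqConvChannel_\ConvFeatureIndex \mapsto \zEquivFunc\pn{\EqConvChannel_\ConvFeatureIndex}$, and consequently that the columns of $\zEquivMatrixFunc\pn{\mA} - \zEquivMatrixFunc\pn{\mB}$ are exactly $\zEquivFunc\pn{\va_\ConvFeatureIndex} - \zEquivFunc\pn{\PupilFeatureBlockVector_\ConvFeatureIndex}$ so that no cross-column terms arise. Once that identity is established, the Frobenius decomposition into columns and the per-column Lipschitz estimate apply termwise and the constant $L$ factors out immediately.
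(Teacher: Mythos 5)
Your proposal is correct and follows essentially the same route as the paper's proof: decompose the Frobenius norm into column norms, apply the per-column Lipschitz bound from Lemma~\ref{lemma:results:equivariant-func-lipshitz}, and take square roots. The only difference is that you spell out the column-wise identity $\zEquivMatrixFunc\pn{\zmInput}$ acting as $\zEquivFunc$ on each column, which the paper takes as immediate from the definition.
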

\begin{proof}
    This is proven using the definition of the Frobenius norm:
$$
    \begin{aligned}
        \norm*{\zEquivMatrixFunc\pn{\mA} - \zEquivMatrixFunc\pn{\mB}}_F^2 & = \sum_{\ConvFeatureIndex = 1}^{\gConvFeaturesIn[\PupilLayerIndex+1]} \norm*{\zEquivFunc\pn{\va_\ConvFeatureIndex} - \zEquivFunc\pn{\vb_\ConvFeatureIndex}}^2 \leq L^2 \sum_{\ConvFeatureIndex = 1}^{\gConvFeaturesIn[\PupilLayerIndex+1]} \norm*{\va_\ConvFeatureIndex - \vb_\ConvFeatureIndex}^2 \leq L^2\norm*{\mA - \mB}_F^2,
    \end{aligned}
$$
    Taking the square root of both sizes gives the result.
\end{proof}

\begin{lemma}
    \label{lemma:results:equivariant-block-matrix-func-lipshitz-multiple-output-channels}
    Let $\zEquivMatrixFunc: \bR^{\pn{n\gConvChannelsIn[\PupilLayerIndex+1]/n} \times \gConvFeaturesIn[\PupilLayerIndex+1]}$ such that $\gConvChannelsIn[\PupilLayerIndex+1]/n \in \bN$.
    For $\ConvChannelOutIndex \in \set{1,\dots,\gConvChannelsIn[\PupilLayerIndex+1]/n}$ and $\zmInput$ where
    $
    \begin{aligned}
        \zmInput[\ConvChannelOutIndex] = \spn{
            \EqConvChannel_{\ConvChannelOutIndex,1}
            ,\; \dots
            ,\; \EqConvChannel_{\ConvChannelOutIndex,\gConvFeaturesIn[\PupilLayerIndex+1]}
        } \in \bR^{n \times \gConvFeaturesIn[\PupilLayerIndex+1]}
    \end{aligned}
    $, suppose
$$
    \begin{aligned}
        \zEquivMatrixFunc\pn{\zmInput}[\ConvChannelOutIndex] = \zmInput[\ConvChannelOutIndex] \odot \vone_n\spn{
            \PupilActivationInvariantName\pn{\norm{\EqConvChannel_{\ConvChannelOutIndex,1}}}
            ,\; \dots
            ,\; \PupilActivationInvariantName\pn{\norm{\EqConvChannel_{\ConvChannelOutIndex,\gConvFeaturesIn[\PupilLayerIndex+1]}}}
        }
    \end{aligned}.
$$
    If $\zEquivFunc: \bR^n \to \bR^n$ is as defined in \Lemmaref{lemma:results:equivariant-block-matrix-func-lipshitz} with Lipshitz constant $L$, then
    $$
    \begin{aligned}
        \norm*{\zEquivMatrixFunc\pn{\mA} - \zEquivMatrixFunc\pn{\mB}}_F \leq L\norm*{\mA - \mB}_F.
    \end{aligned}
    $$
\end{lemma}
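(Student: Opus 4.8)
The plan is to reduce the multi-channel statement to the single-channel result of \Lemmaref{lemma:results:equivariant-block-matrix-func-lipshitz}, exploiting two facts: that $\zEquivMatrixFunc$ acts on each $n$-row output channel $\zmInput[\ConvChannelOutIndex]$ independently of the others, and that the squared Frobenius norm of a matrix is additive over any partition of its rows. Since each channel $\zmInput[\ConvChannelOutIndex] \in \bR^{n \times \gConvFeaturesIn[\PupilLayerIndex+1]}$ is transformed by exactly the map analyzed in \Lemmaref{lemma:results:equivariant-block-matrix-func-lipshitz}, the per-channel contraction bound is already available, and the whole argument amounts to summing these contributions and taking a square root.

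Concretely, I would first invoke the row-block additivity of the squared Frobenius norm to write
$$
\begin{aligned}
\norm*{\zEquivMatrixFunc\pn{\mA} - \zEquivMatrixFunc\pn{\mB}}_F^2 = \sum_{\ConvChannelOutIndex = 1}^{\gConvChannelsIn[\PupilLayerIndex+1]/n} \norm*{\zEquivMatrixFunc\pn{\mA}[\ConvChannelOutIndex] - \zEquivMatrixFunc\pn{\mB}[\ConvChannelOutIndex]}_F^2.
\end{aligned}
$$
Next, because $\zEquivMatrixFunc\pn{\cdot}[\ConvChannelOutIndex]$ depends only on the block $\mA[\ConvChannelOutIndex]$ (resp.\ $\mB[\ConvChannelOutIndex]$) and coincides with the single-channel map of \Lemmaref{lemma:results:equivariant-block-matrix-func-lipshitz} built from the \OrthogonalGroup{n} equivariant, $L$-Lipshitz function $\zEquivFunc$, I would apply that lemma to each summand to obtain $\norm*{\zEquivMatrixFunc\pn{\mA}[\ConvChannelOutIndex] - \zEquivMatrixFunc\pn{\mB}[\ConvChannelOutIndex]}_F^2 \leq L^2 \norm*{\mA[\ConvChannelOutIndex] - \mB[\ConvChannelOutIndex]}_F^2$. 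Summing over $\ConvChannelOutIndex$ and again using row-block additivity for $\mA - \mB$ gives $\norm*{\zEquivMatrixFunc\pn{\mA} - \zEquivMatrixFunc\pn{\mB}}_F^2 \leq L^2\norm*{\mA - \mB}_F^2$, and taking square roots (monotone on the nonnegative reals) yields the claimed bound with constant $L$.

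The computation is routine bookkeeping; the only point that warrants care — and the step I expect to be the main obstacle — is justifying that each output channel is genuinely processed in isolation, i.e.\ that the scalar gate $\PupilActivationInvariantName\pn{\norm{\EqConvChannel_{\ConvChannelOutIndex,\ConvFeatureIndex}}}$ used in channel $\ConvChannelOutIndex$ reads only the columns of that channel. This is precisely what the per-channel definition of $\zEquivMatrixFunc\pn{\zmInput}[\ConvChannelOutIndex]$ in the statement encodes, so the single-channel function of \Lemmaref{lemma:results:equivariant-block-matrix-func-lipshitz} transfers verbatim to every block and no new estimate on $\PupilActivationInvariantName$ is needed beyond the hypotheses already established in \Lemmaref{lemma:results:equivariant-func-lipshitz}.
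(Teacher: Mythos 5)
Your proposal is correct and follows essentially the same route as the paper's proof: decompose the squared Frobenius norm over the $n$-row channel blocks, apply \Lemmaref{lemma:results:equivariant-block-matrix-func-lipshitz} to each block, sum, and take square roots. No gaps.
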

\begin{proof}
    By \Lemmaref{lemma:results:equivariant-block-matrix-func-lipshitz} and the definition of the Frobenius norm,
    $$
    \begin{aligned}
        \norm{\zEquivMatrixFunc\pn{\mA} - \zEquivMatrixFunc\pn{\mB}}_F^2 & = \sum_{\ConvChannelOutIndex = 1}^{\gConvChannelsIn[\PupilLayerIndex+1]/n} \norm{\zEquivMatrixFunc\pn{\mA}[\ConvChannelOutIndex] - \zEquivMatrixFunc\pn{\mB}[\ConvChannelOutIndex]}_F^2\\
        &\ \leq L^2\sum_{\ConvChannelOutIndex = 1}^{\gConvChannelsIn[\PupilLayerIndex+1]/n} \norm{\mA[\ConvChannelOutIndex] - \mB[\ConvChannelOutIndex]}_F^2 = L^2\norm{\mA - \mB}_F^2.
    \end{aligned}
    $$
    Taking the square root of both sizes gives the result.
\end{proof}

Using Lemmas~\ref{lemma:results:equivariant-func-lipshitz}, \ref{lemma:results:equivariant-block-matrix-func-lipshitz}, and \ref{lemma:results:equivariant-block-matrix-func-lipshitz-multiple-output-channels}, we show that the activations \PupilETDAGNNActivationScaleLog{} and \PupilETDAGNNActivationScaleTanh{} in \PupilEquivariantName{} are Lipshitz.
\begin{lemma}
    \label{lemma:results:etdagnn-sigma0-lipshitz}
    \OrthogonalGroup{n} equivariant activation \PupilETDAGNNActivationScaleLog{} is Lipshitz.
\end{lemma}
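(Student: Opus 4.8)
The plan is to reduce the claim to the per-vector sufficient condition of Lemma~\ref{lemma:results:equivariant-func-lipshitz} and then lift it to the block-matrix activation via Lemma~\ref{lemma:results:equivariant-block-matrix-func-lipshitz-multiple-output-channels}. Writing $\PupilActivationInvariantName = \PupilETDAGNNActivationScaleLogInvariant$ for the scalar profile, the activation $\PupilETDAGNNActivationScaleLog$ applies the $\OrthogonalGroup{n}$ equivariant map $\vx \mapsto \vx\,\PupilActivationInvariantName(\norm{\vx})$ block-wise, so it suffices to check the three hypotheses of Lemma~\ref{lemma:results:equivariant-func-lipshitz} for $\PupilActivationInvariantName$ on $[0,\infty)$: that $\PupilActivationInvariantName$ is continuously differentiable, that $\lim_{x\to\infty}\PupilActivationInvariantName(x)<\infty$, and that $\lim_{x\to\infty} x\PupilActivationInvariantName'(x)<\infty$.

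First I would establish continuous differentiability, where the only delicate point is the piecewise junction at $x=0$. Using the power series $\ln(1+x)=x-\tfrac{x^2}{2}+\tfrac{x^3}{3}-\cdots$ valid for $\abs{x}<1$, I obtain $\PupilActivationInvariantName(x)=1-\tfrac{x}{2}+\tfrac{x^2}{3}-\cdots$, which agrees with the prescribed value $\PupilActivationInvariantName(0)=1$ and shows $\PupilActivationInvariantName$ is in fact real-analytic (hence $C^1$) in a neighborhood of $0$ with $\PupilActivationInvariantName'(0)=-\tfrac12$. Away from the origin $\PupilActivationInvariantName(x)=\ln(1+x)/x$ is manifestly smooth, so $\PupilActivationInvariantName\in C^1([0,\infty))$.

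Next I would verify the two limits. Since the logarithm is sublinear, $\lim_{x\to\infty}\ln(1+x)/x = 0$, giving the first limit. For the second, differentiating yields $\PupilActivationInvariantName'(x)=\frac{x/(1+x)-\ln(1+x)}{x^2}$, so $x\PupilActivationInvariantName'(x)=\frac{1}{1+x}-\frac{\ln(1+x)}{x}\to 0$ as $x\to\infty$. Both limits are finite, so Lemma~\ref{lemma:results:equivariant-func-lipshitz} applies and $\vx\mapsto\vx\,\PupilActivationInvariantName(\norm{\vx})$ is Lipschitz. A clean way to pin down the constant is the telescoping identity $\PupilActivationInvariantName(x)+x\PupilActivationInvariantName'(x)=\frac{\mathrm{d}}{\mathrm{d}x}\bigl(x\PupilActivationInvariantName(x)\bigr)=\frac{\mathrm{d}}{\mathrm{d}x}\ln(1+x)=\frac{1}{1+x}$, whence $L=\max_{x\ge 0}\bigl(\PupilActivationInvariantName(x)+x\PupilActivationInvariantName'(x)\bigr)=1$. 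Finally, applying Lemma~\ref{lemma:results:equivariant-block-matrix-func-lipshitz-multiple-output-channels} with this $L$ transfers the per-vector Lipschitz bound to $\PupilETDAGNNActivationScaleLog$ acting on the full block matrix, completing the proof.

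I expect the main obstacle to be the behavior at the removable singularity $x=0$: one must confirm that the piecewise definition yields a genuinely $C^1$ (rather than merely continuous) profile there, since Lemma~\ref{lemma:results:equivariant-func-lipshitz} requires continuous differentiability across the entire ray. The series expansion handles this cleanly, and the remaining work is the routine limit computation above; the identity $\PupilActivationInvariantName(x)+x\PupilActivationInvariantName'(x)=1/(1+x)$ additionally makes the finiteness (and exact value) of the Lipschitz constant transparent.
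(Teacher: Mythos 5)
Your argument is the same as the paper's: verify the hypotheses of \Lemmaref{lemma:results:equivariant-func-lipshitz} for the scalar profile $\PupilETDAGNNActivationScaleLogInvariant$ (the two limits at infinity, both computed the same way) and then lift to the block-matrix activation via \Lemmaref{lemma:results:equivariant-block-matrix-func-lipshitz} and \ref{lemma:results:equivariant-block-matrix-func-lipshitz-multiple-output-channels}. Your treatment of the junction at $x=0$ is actually more careful than the paper's — the series expansion correctly gives $\PupilETDAGNNActivationScaleLogInvariant'(0)=-\tfrac12$, whereas the paper's displayed piecewise derivative assigns the value $0$ there, and your identity $\PupilETDAGNNActivationScaleLogInvariant(x)+x\PupilETDAGNNActivationScaleLogInvariant'(x)=1/(1+x)$ pins down the Lipschitz constant as $L=1$, which the paper leaves implicit.
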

\begin{proof}
    Define
    $$
        \PupilETDAGNNActivationScaleLogInvariant\pn{\zInput} = \begin{cases}
            1 & \zInput = 0 \\
            \frac{\ln\pn{1 + \zInput}}{\zInput} & \zInput \neq 0
        \end{cases}
    $$
    with derivative
    $$
        \PupilETDAGNNActivationScaleLogInvariant'\pn{\zInput} = \begin{cases}
            0 & \zInput = 0 \\
            \frac{1}{\zInput\pn{1 + \zInput}} - \frac{\ln\pn{1 + \zInput}}{\zInput^2} & \zInput \neq 0
        \end{cases}.
    $$
    In order to apply \Lemmaref{lemma:results:equivariant-func-lipshitz}, we compute the following limits:
    \begin{align*}
        \lim_{\zInput \to \infty} \PupilETDAGNNActivationScaleLogInvariant\pn{\zInput} & = \lim_{\zInput \to \infty} \frac{\ln\pn{1 + \zInput}}{\zInput} \substack{\text{L'H\^opital}\\=} \lim_{\zInput \to \infty} \frac{1}{1 + \zInput} = 0 < \infty, \\
        \lim_{\zInput \to \infty} \zInput\PupilETDAGNNActivationScaleLogInvariant'\pn{\zInput} & = \lim_{\zInput \to \infty} \frac{1}{1 + \zInput} - \frac{\ln\pn{1 + \zInput}}{\zInput}\\
        &= 0 - \lim_{\zInput \to \infty} \frac{\ln\pn{1 + \zInput}}{\zInput} \substack{\text{L'H\^opital}\\=} 0 - \lim_{\zInput \to \infty} \frac{1}{1 + \zInput} = 0 - 0 = 0.
    \end{align*}
    By \Lemmaref{lemma:results:equivariant-func-lipshitz}, $\zEquivFunc\pn{\zvInput} = \zvInput\PupilETDAGNNActivationScaleLogInvariant\pn{\norm{\zvInput}}$ is Lipshitz with constant $L = \max_{\zInput \geq 0} \PupilETDAGNNActivationScaleLogInvariant\pn{\zInput} + \zInput\PupilETDAGNNActivationScaleLogInvariant\pn{\zInput}$.
    Applying Lemmas~\ref{lemma:results:equivariant-block-matrix-func-lipshitz} and \ref{lemma:results:equivariant-block-matrix-func-lipshitz-multiple-output-channels}, $\PupilETDAGNNActivationScaleLog\pn{\zmInput}$ is Lipshitz with constant $L$.
\end{proof}

\begin{lemma}
    \label{lemma:results:etdagnn-sigma1-lipshitz}
    \OrthogonalGroup{n} equivariant activation \PupilETDAGNNActivationScaleTanh{} is Lipshitz.
\end{lemma}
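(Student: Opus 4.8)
The plan is to follow exactly the template established for \PupilETDAGNNActivationScaleLog{} in \Lemmaref{lemma:results:etdagnn-sigma0-lipshitz}: first show that the scalar profile $\PupilETDAGNNActivationScaleTanhInvariant\pn{\zInput} = \tanh\pn{\zInput}$ induces a Lipshitz \OrthogonalGroup{n} equivariant map $\zEquivFunc\pn{\zvInput} = \zvInput\,\tanh\pn{\norm{\zvInput}}$ on $\bR^n$ via \Lemmaref{lemma:results:equivariant-func-lipshitz}, and then lift this single-vector bound to the feature-wise and channel-wise activation \PupilETDAGNNActivationScaleTanh{} using \Lemmaref{lemma:results:equivariant-block-matrix-func-lipshitz} and \Lemmaref{lemma:results:equivariant-block-matrix-func-lipshitz-multiple-output-channels}. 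Thus everything reduces to verifying the three hypotheses of \Lemmaref{lemma:results:equivariant-func-lipshitz} for the profile $\tanh$, which is even cleaner than the logarithmic case since $\tanh$ is continuously differentiable on all of $\bR$ with $\PupilETDAGNNActivationScaleTanhInvariant'\pn{\zInput} = 1 - \tanh^2\pn{\zInput}$, so no removable singularity or piecewise definition at the origin needs to be handled.

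First I would record continuous differentiability and then check the two limit conditions of \Lemmaref{lemma:results:equivariant-func-lipshitz}:
$$
\begin{aligned}
\lim_{\zInput \to \infty} \PupilETDAGNNActivationScaleTanhInvariant\pn{\zInput} = \lim_{\zInput \to \infty} \tanh\pn{\zInput} = 1 < \infty, \qquad \lim_{\zInput \to \infty} \zInput\,\PupilETDAGNNActivationScaleTanhInvariant'\pn{\zInput} = \lim_{\zInput \to \infty} \zInput\pn{1 - \tanh^2\pn{\zInput}} = 0 < \infty,
\end{aligned}
$$
where the second limit holds because $1 - \tanh^2\pn{\zInput}$ decays exponentially (it is bounded by $4e^{-2\zInput}$ for large $\zInput$), which dominates the linear factor $\zInput$. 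With both conditions verified, \Lemmaref{lemma:results:equivariant-func-lipshitz} gives that $\zEquivFunc\pn{\zvInput} = \zvInput\,\tanh\pn{\norm{\zvInput}}$ is Lipshitz with constant $L = \max_{\zInput \geq 0}\spn{\tanh\pn{\zInput} + \zInput\pn{1 - \tanh^2\pn{\zInput}}}$, the maximum being finite and attained by continuity of the profile and its derivative together with the limits just computed.

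Finally I would invoke \Lemmaref{lemma:results:equivariant-block-matrix-func-lipshitz} and \Lemmaref{lemma:results:equivariant-block-matrix-func-lipshitz-multiple-output-channels} to promote the Lipshitz bound on $\zEquivFunc$ to the block matrix $\zmInput$, concluding that \PupilETDAGNNActivationScaleTanh{} is Lipshitz in the Frobenius norm with the same constant $L$. I do not anticipate a genuine obstacle: the only quantitative point worth pausing on is the exponential-decay estimate yielding $\lim_{\zInput \to \infty} \zInput\pn{1 - \tanh^2\pn{\zInput}} = 0$, after which the two block lemmas apply mechanically. If an explicit value of $L$ were desired one could differentiate $\tanh\pn{\zInput} + \zInput\pn{1 - \tanh^2\pn{\zInput}}$ to locate its maximizer, but this refinement is unnecessary for the qualitative Lipshitz claim, which is all that is used downstream in \Lemmaref{lemma:results:etdagnn-bounded} and \Lemmaref{lemma:results:tdagnn-lipshitz-wrt-weights}.
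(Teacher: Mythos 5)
Your proposal is correct and matches the paper's proof essentially step for step: verify the two limit conditions of \Lemmaref{lemma:results:equivariant-func-lipshitz} for the profile $\tanh$, then lift via \Lemmaref{lemma:results:equivariant-block-matrix-func-lipshitz} and \Lemmaref{lemma:results:equivariant-block-matrix-func-lipshitz-multiple-output-channels}. The only cosmetic difference is that you justify $\lim_{x\to\infty} x\pn{1-\tanh^2\pn{x}} = 0$ by an exponential-decay bound whereas the paper uses L'H\^opital's rule; both are valid.
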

\begin{proof}
    Define $\PupilETDAGNNActivationScaleTanhInvariant\pn{\zInput} = \tanh\pn{\zInput}$, and then $\PupilETDAGNNActivationScaleTanhInvariant'\pn{\zInput} = 1 - \tanh^2\pn{\zInput}$.
    In order to apply \Lemmaref{lemma:results:equivariant-func-lipshitz}, we compute the following limits:
    $$
    \begin{aligned}
        \lim_{\zInput \to \infty} \PupilETDAGNNActivationScaleTanhInvariant\pn{\zInput} & = \lim_{\zInput \to \infty} \tanh\pn{\zInput} = 1 < \infty, \\
        \lim_{\zInput \to \infty} \zInput\PupilETDAGNNActivationScaleTanhInvariant'\pn{\zInput} & = \lim_{\zInput \to \infty} \zInput\pn{1 - \tanh^2\pn{\zInput}} \substack{\text{ L'H\^opital}\\=} \lim_{\zInput \to \infty} \frac{1}{\frac{2\tanh\pn{\zInput}}{1 - \tanh^2\pn{\zInput}}} = \lim_{\zInput \to \infty} \frac{1 - \tanh^2\pn{\zInput}}{2\tanh\pn{\zInput}} = 0.
    \end{aligned}
    $$
    By \Lemmaref{lemma:results:equivariant-func-lipshitz}, $\zEquivFunc\pn{\zvInput} = \zvInput\PupilETDAGNNActivationScaleTanhInvariant\pn{\norm{\zvInput}}$ is Lipshitz with constant $L = \max_{\zInput \geq 0} \PupilETDAGNNActivationScaleTanhInvariant\pn{\zInput} + \zInput\PupilETDAGNNActivationScaleTanhInvariant\pn{\zInput}$.
    Applying Lemmas~\ref{lemma:results:equivariant-block-matrix-func-lipshitz} and \ref{lemma:results:equivariant-block-matrix-func-lipshitz-multiple-output-channels}, $\PupilETDAGNNActivationScaleTanh\pn{\zmInput}$ is Lipshitz with constant $L$.
\end{proof}
}

\subsection{Obstacle avoidance}

{
\newcommand{\zRotationMatrix}{{\UtilRotationMatrix}}
\newcommand{\zAngle}{{\theta}}
\newcommand{\zLinearDiscriminantName}{{\gamma}}
\newcommand{\zFlockAgentVelocity}{\vv}
\newcommand{\zVelocityNeighborAverage}{{\overline{\zFlockAgentVelocity}}}
\newcommand{\zRelativeVelocityMagnitude}{{\alpha_1}}
\newcommand{\zRelativeVelocityGravityBreak}{{\alpha_2}}
\newcommand{\zRelativeVelocityObstacleDodgeAngle}{{\alpha_\zAngle}}

\begin{lemma}
    $\zLinearDiscriminantName\pn{\FlockAgentRelativePos[], \zFlockAgentVelocity, \zAngle}$ from \eqref{eq:results:obstacle-avoidance-linear-discriminant} is invariant with respect to \SpecialOrthogonalGroup{2}.
\end{lemma}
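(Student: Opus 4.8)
The plan is to verify the $\SpecialOrthogonalGroup{2}$-invariance directly from the definition in \eqref{eq:results:obstacle-avoidance-linear-discriminant}. The group acts on the two vector arguments simultaneously, so I want to show that for any rotation $\UtilOrthogonalMatrix \in \SpecialOrthogonalGroup{2}$ we have $\zLinearDiscriminantName\pn{\UtilOrthogonalMatrix\FlockAgentRelativePos[], \UtilOrthogonalMatrix\zFlockAgentVelocity, \zAngle} = \zLinearDiscriminantName\pn{\FlockAgentRelativePos[], \zFlockAgentVelocity, \zAngle}$. Note that the angle $\zAngle$ is a fixed scalar parameter, not a vector argument, so it is untouched by the group action; only $\FlockAgentRelativePos[]$ and $\zFlockAgentVelocity$ are rotated.

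First I would substitute $\UtilOrthogonalMatrix\FlockAgentRelativePos[]$ and $\UtilOrthogonalMatrix\zFlockAgentVelocity$ into the definition. Since $\UtilOrthogonalMatrix$ is orthogonal it preserves norms, giving $\norm{\UtilOrthogonalMatrix\FlockAgentRelativePos[]} = \norm{\FlockAgentRelativePos[]}$ and $\norm{\UtilOrthogonalMatrix\zFlockAgentVelocity} = \norm{\zFlockAgentVelocity}$, so the normalizing denominators pass through unchanged. Using $\pn{\UtilOrthogonalMatrix\FlockAgentRelativePos[]}^\top = \FlockAgentRelativePos[]^\top\UtilOrthogonalMatrix^\top$, the transformed discriminant reduces to $\tfrac{\FlockAgentRelativePos[]^\top}{\norm{\FlockAgentRelativePos[]}}\,\UtilOrthogonalMatrix^\top\zRotationMatrix\pn{\zAngle}\UtilOrthogonalMatrix\,\tfrac{\zFlockAgentVelocity}{\norm{\zFlockAgentVelocity}}$, so the entire claim collapses to the single matrix identity $\UtilOrthogonalMatrix^\top\zRotationMatrix\pn{\zAngle}\UtilOrthogonalMatrix = \zRotationMatrix\pn{\zAngle}$.

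The one nontrivial ingredient — really the crux of the argument — is that the rotation group is abelian. Every element of $\SpecialOrthogonalGroup{2}$ is a planar rotation, and planar rotations commute, so $\zRotationMatrix\pn{\zAngle}\UtilOrthogonalMatrix = \UtilOrthogonalMatrix\zRotationMatrix\pn{\zAngle}$. Combining this with $\UtilOrthogonalMatrix^\top\UtilOrthogonalMatrix = \mI$ yields $\UtilOrthogonalMatrix^\top\zRotationMatrix\pn{\zAngle}\UtilOrthogonalMatrix = \UtilOrthogonalMatrix^\top\UtilOrthogonalMatrix\,\zRotationMatrix\pn{\zAngle} = \zRotationMatrix\pn{\zAngle}$, which closes the proof. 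Everything else is bookkeeping; the commutativity is the only place where the special structure of $\SpecialOrthogonalGroup{2}$ is used.

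Finally, I would include a short remark explaining why the statement is restricted to $\SpecialOrthogonalGroup{2}$ rather than the full $\OrthogonalGroup{2}$: commutativity fails for reflections. If $\UtilOrthogonalMatrix$ were an improper orthogonal matrix (a reflection, with $\UtilOrthogonalMatrix^\top = \UtilOrthogonalMatrix^{-1} = \UtilOrthogonalMatrix$), conjugation reverses orientation and gives $\UtilOrthogonalMatrix^\top\zRotationMatrix\pn{\zAngle}\UtilOrthogonalMatrix = \zRotationMatrix\pn{-\zAngle}$, which differs from $\zRotationMatrix\pn{\zAngle}$ whenever $\zAngle \not\equiv 0 \pmod{\pi}$ — in particular for the choice $\zAngle = \pi/2$ used in the left/right classification. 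Hence $\zLinearDiscriminantName$ is genuinely only $\SpecialOrthogonalGroup{2}$-invariant, consistent with the stated lemma.
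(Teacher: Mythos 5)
Your proof is correct and follows essentially the same route as the paper's: substitute the rotated arguments, cancel the norms by orthogonality, and reduce to $\UtilOrthogonalMatrix^\top\UtilRotationMatrix\pn{\theta}\UtilOrthogonalMatrix = \UtilRotationMatrix\pn{\theta}$ via the commutativity of $\SpecialOrthogonalGroup{2}$. Your closing remark on why reflections break the invariance (conjugation sends $\UtilRotationMatrix\pn{\theta}$ to $\UtilRotationMatrix\pn{-\theta}$) is a correct and worthwhile addition that the paper does not include.
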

\begin{proof}
    Let $\zRotationMatrix \in $ \SpecialOrthogonalGroup{2}.
    $$
    \begin{aligned}
        \zLinearDiscriminantName\pn{\zRotationMatrix\FlockAgentRelativePos[], \zRotationMatrix\zFlockAgentVelocity, \zAngle} & = \frac{\pn{\zRotationMatrix\FlockAgentRelativePos[]}^\top}{\norm{\zRotationMatrix\FlockAgentRelativePos[]}}\zRotationMatrix\pn{\zAngle}\frac{\zRotationMatrix\zFlockAgentVelocity}{\norm{\zRotationMatrix\zFlockAgentVelocity}} = \frac{\FlockAgentRelativePos[]^\top\zRotationMatrix^\top\zRotationMatrix\pn{\zAngle}\zRotationMatrix\zFlockAgentVelocity}{\norm{\FlockAgentRelativePos[]}\norm{\zFlockAgentVelocity}} \\
        & = \frac{\FlockAgentRelativePos[]^\top\zRotationMatrix^\top\zRotationMatrix\zRotationMatrix\pn{\zAngle}\zFlockAgentVelocity}{\norm{\FlockAgentRelativePos[]}\norm{\zFlockAgentVelocity}} \quad \text{since matrices of \SpecialOrthogonalGroup{2} commute} \\
        & = \frac{\FlockAgentRelativePos[]^\top\mI\zRotationMatrix\pn{\zAngle}\zFlockAgentVelocity}{\norm{\FlockAgentRelativePos[]}\norm{\zFlockAgentVelocity}} = \zLinearDiscriminantName\pn{\FlockAgentRelativePos[], \zFlockAgentVelocity, \zAngle}.
    \end{aligned}
    $$
    Therefore, $\zLinearDiscriminantName$ is invariant with respect to \SpecialOrthogonalGroup{2}.
\end{proof}

\begin{proposition}
    The relative velocity in \eqref{eq:results:obstacle-avoidance:relative-velocity} is equivariant with respect to \SpecialOrthogonalGroup{2}.
\end{proposition}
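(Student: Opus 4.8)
The plan is to show that rotating the whole configuration by an arbitrary $\UtilRotationMatrix \in \SpecialOrthogonalGroup{2}$ rotates the output of \eqref{eq:results:obstacle-avoidance:relative-velocity} by the same $\UtilRotationMatrix$. The formula depends on exactly two inputs: the relative position $\FlockAgentRelativePos[_{\FlockAgentIndexTwo\FlockAgentIndex}]$ and the neighbor-averaged velocity $\overline{\vv}_\FlockAgentIndex$. The first observation I would record is that both inputs are themselves equivariant: $\FlockAgentRelativePos[_{\FlockAgentIndexTwo\FlockAgentIndex}]$ is a difference of positions, and $\overline{\vv}_\FlockAgentIndex$ is an arithmetic mean of velocities, so applying $\UtilRotationMatrix$ to every position and velocity sends them to $\UtilRotationMatrix\FlockAgentRelativePos[_{\FlockAgentIndexTwo\FlockAgentIndex}]$ and $\UtilRotationMatrix\overline{\vv}_\FlockAgentIndex$ (linearity of the mean). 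It therefore suffices to evaluate the right-hand side of \eqref{eq:results:obstacle-avoidance:relative-velocity} at these transformed inputs and pull $\UtilRotationMatrix$ out to the front.

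The argument rests on three elementary facts that I would state up front. First, orthogonality gives $\norm{\UtilRotationMatrix\va} = \norm{\va}$ for every $\va$, so the scalar rescaling $\alpha_1\pn{\norm{\FlockAgentRelativePos[_{\FlockAgentIndexTwo\FlockAgentIndex}]}, \norm{\overline{\vv}_\FlockAgentIndex}}$ is unchanged and every normalized direction obeys $\UtilRotationMatrix\va/\norm{\UtilRotationMatrix\va} = \UtilRotationMatrix\pn{\va/\norm{\va}}$. Second, by the preceding lemma $\gamma$ is $\SpecialOrthogonalGroup{2}$-invariant, hence $\gamma\pn{\UtilRotationMatrix\FlockAgentRelativePos[_{\FlockAgentIndexTwo\FlockAgentIndex}], \UtilRotationMatrix\overline{\vv}_\FlockAgentIndex, 0} = \gamma\pn{\FlockAgentRelativePos[_{\FlockAgentIndexTwo\FlockAgentIndex}], \overline{\vv}_\FlockAgentIndex, 0}$ and likewise at angle $\tfrac{\pi}{2}$; this is the key step, since it guarantees that the transformed inputs fall into the \emph{same} branch of the piecewise definition and leave the sign $\mathrm{sgn}\spn{\gamma\pn{\cdot,\cdot,\tfrac{\pi}{2}}}$ untouched. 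Third, any two elements of $\SpecialOrthogonalGroup{2}$ commute, so the fixed rotation satisfies $\UtilRotationMatrix\,\UtilRotationMatrix\pn{\alpha_\theta} = \UtilRotationMatrix\pn{\alpha_\theta}\,\UtilRotationMatrix$.

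With these facts the proof reduces to a two-case check. In the first branch of \eqref{eq:results:obstacle-avoidance:relative-velocity}, applying the norm identity to $\alpha_1$ and to the direction $\FlockAgentRelativePos[_{\FlockAgentIndexTwo\FlockAgentIndex}]/\norm{\FlockAgentRelativePos[_{\FlockAgentIndexTwo\FlockAgentIndex}]}$ factors $\UtilRotationMatrix$ directly to the front. In the else branch I would combine the same norm identity, the invariance of the sign from the second fact, and then commute $\UtilRotationMatrix$ past $\UtilRotationMatrix\pn{\alpha_\theta}$ using the third fact; the result is again $\UtilRotationMatrix$ times the original output. Because the second fact forces the rotated inputs to select the same case, the two computations assemble into $-\FlockAgentRelativeVel \mapsto \UtilRotationMatrix\pn{-\FlockAgentRelativeVel}$, which is the asserted equivariance.

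I expect the main obstacle to be conceptual rather than computational: one must invoke the invariance lemma \emph{before} performing the case split, since if a rotation could move the inputs across a branch boundary the two sides would be governed by different formulas and no clean factoring of $\UtilRotationMatrix$ would survive. The commutativity fact is also the only place where the restriction to $\SpecialOrthogonalGroup{2}$ is essential: a reflection $\UtilOrthogonalMatrix \in \OrthogonalGroup{2}$ would instead satisfy $\UtilOrthogonalMatrix\,\UtilRotationMatrix\pn{\alpha_\theta} = \UtilRotationMatrix\pn{-\alpha_\theta}\,\UtilOrthogonalMatrix$, breaking the else branch, which is consistent with the claim being stated for $\SpecialOrthogonalGroup{2}$ rather than $\OrthogonalGroup{2}$.
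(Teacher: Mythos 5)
Your proof is correct and follows essentially the same route as the paper's: establish equivariance of $\overline{\vv}_\FlockAgentIndex$, invoke the invariance of $\gamma$, and check both branches using norm preservation and the commutativity of $\SpecialOrthogonalGroup{2}$. You additionally make explicit two points the paper leaves implicit -- that $\gamma$-invariance is what guarantees the rotated inputs select the same branch, and that commutativity is precisely where the restriction to $\SpecialOrthogonalGroup{2}$ (excluding reflections) is used -- both of which are accurate and strengthen the exposition.
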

\begin{proof}
    Let $\zRotationMatrix \in$ \SpecialOrthogonalGroup{2}.
    First, $\zVelocityNeighborAverage_\FlockAgentIndex$ is equivariant with respect to \SpecialOrthogonalGroup{2} since
    \begin{align*}
        \mean\set{\zRotationMatrix\FlockAgentVel_j\pn{t} : j \in \FlockAgentNeighborhood_\FlockAgentIndex\pn{t}} = \zRotationMatrix\mean\set{\FlockAgentVel_j\pn{t} : j \in \FlockAgentNeighborhood_\FlockAgentIndex\pn{t}} = \zRotationMatrix\zVelocityNeighborAverage_\FlockAgentIndex\pn{t}.
    \end{align*}
    We check each case of the relative velocity formula.
    When $-\zRelativeVelocityGravityBreak \leq \zLinearDiscriminantName\pn{\FlockAgentRelativePos[_{\FlockAgentIndexTwo\FlockAgentIndex}], \zVelocityNeighborAverage_\FlockAgentIndex, 0} \leq 0$,
    $$
    \begin{aligned}
        \zRelativeVelocityMagnitude\pn{\norm{\zRotationMatrix\FlockAgentRelativePos[_{\FlockAgentIndexTwo\FlockAgentIndex}]}, \norm{\zRotationMatrix\zVelocityNeighborAverage_\FlockAgentIndex}}\frac{\zRotationMatrix\FlockAgentRelativePos[_{\FlockAgentIndexTwo\FlockAgentIndex}]}{\norm{\zRotationMatrix\FlockAgentRelativePos[_{\FlockAgentIndexTwo\FlockAgentIndex}]}} & = \zRelativeVelocityMagnitude\pn{\norm{\FlockAgentRelativePos[_{\FlockAgentIndexTwo\FlockAgentIndex}]}, \norm{\zVelocityNeighborAverage_\FlockAgentIndex}}\frac{\zRotationMatrix\FlockAgentRelativePos[_{\FlockAgentIndexTwo\FlockAgentIndex}]}{\norm{\FlockAgentRelativePos[_{\FlockAgentIndexTwo\FlockAgentIndex}]}} \\
        & = \zRotationMatrix\zRelativeVelocityMagnitude\pn{\norm{\FlockAgentRelativePos[_{\FlockAgentIndexTwo\FlockAgentIndex}]}, \norm{\zVelocityNeighborAverage_\FlockAgentIndex}}\frac{\FlockAgentRelativePos[_{\FlockAgentIndexTwo\FlockAgentIndex}]}{\norm{\FlockAgentRelativePos[_{\FlockAgentIndexTwo\FlockAgentIndex}]}} = \zRotationMatrix\pn{-\FlockAgentRelativeVel\pn{t}}.
    \end{aligned}
    $$
    For the other case,
    $$
    \begin{aligned}
        & \zRelativeVelocityMagnitude\pn{\norm{\zRotationMatrix\FlockAgentRelativePos[_{\FlockAgentIndexTwo\FlockAgentIndex}]},\norm{\zRotationMatrix\zVelocityNeighborAverage_\FlockAgentIndex}}\pn{-\mathrm{sgn}\spn{\zLinearDiscriminantName\pn{\zRotationMatrix\FlockAgentRelativePos[_{\FlockAgentIndexTwo\FlockAgentIndex}], \zRotationMatrix\zVelocityNeighborAverage_\FlockAgentIndex, \frac{\pi}{2}}}\zRotationMatrix\pn{\zRelativeVelocityObstacleDodgeAngle}}\frac{\zRotationMatrix\FlockAgentRelativePos[_{\FlockAgentIndexTwo\FlockAgentIndex}]}{\norm{\zRotationMatrix\FlockAgentRelativePos[_{\FlockAgentIndexTwo\FlockAgentIndex}]}} \\
        = & \zRelativeVelocityMagnitude\pn{\norm{\FlockAgentRelativePos[_{\FlockAgentIndexTwo\FlockAgentIndex}]},\norm{\zVelocityNeighborAverage_\FlockAgentIndex}}\pn{-\mathrm{sgn}\spn{\zLinearDiscriminantName\pn{\FlockAgentRelativePos[_{\FlockAgentIndexTwo\FlockAgentIndex}], \zVelocityNeighborAverage_\FlockAgentIndex, \frac{\pi}{2}}}\zRotationMatrix\pn{\zRelativeVelocityObstacleDodgeAngle}}\frac{\zRotationMatrix\FlockAgentRelativePos[_{\FlockAgentIndexTwo\FlockAgentIndex}]}{\norm{\FlockAgentRelativePos[_{\FlockAgentIndexTwo\FlockAgentIndex}]}} \\
        = & \zRotationMatrix\zRelativeVelocityMagnitude\pn{\norm{\FlockAgentRelativePos[_{\FlockAgentIndexTwo\FlockAgentIndex}]},\norm{\zVelocityNeighborAverage_\FlockAgentIndex}}\pn{-\mathrm{sgn}\spn{\zLinearDiscriminantName\pn{\FlockAgentRelativePos[_{\FlockAgentIndexTwo\FlockAgentIndex}], \zVelocityNeighborAverage_\FlockAgentIndex, \frac{\pi}{2}}}\zRotationMatrix\pn{\zRelativeVelocityObstacleDodgeAngle}}\frac{\FlockAgentRelativePos[_{\FlockAgentIndexTwo\FlockAgentIndex}]}{\norm{\FlockAgentRelativePos[_{\FlockAgentIndexTwo\FlockAgentIndex}]}} \quad \text{since matrices of \SpecialOrthogonalGroup{2} commute} \\
        = & \zRotationMatrix\pn{-\FlockAgentRelativeVel\pn{t}}.
    \end{aligned}
    $$
    Therefore, the relative velocity is \SpecialOrthogonalGroup{2} equivariant.
\end{proof}
}

\end{appendix}

\bibliographystyle{spmpsci}      
\bibliography{references}   

\end{document}